\def\eqref#1{equation~\ref{#1}}
\def\1{\bm{1}}
\def\vtheta{{\bm{\theta}}}
\def\mW{{\bm{W}}}
\DeclareMathAlphabet{\mathsfit}{\encodingdefault}{\sfdefault}{m}{sl}
\SetMathAlphabet{\mathsfit}{bold}{\encodingdefault}{\sfdefault}{bx}{n}
\newcommand{\R}{\mathbb{R}}
\crefname{section}{Sec.}{Secs.}
\Crefname{section}{Section}{Sections}
\Crefname{table}{Table}{Tables}
\crefname{table}{Tab.}{Tabs.}
\crefname{theorem}{Theorem}{Theorems}
\crefname{theorem}{Thm.}{Thms.}
\newcommand\inner[2]{\left\langle #1, #2 \right\rangle}
\newcommand{\bW}{{\bm{W}}}
\newtheorem{theorem}{Theorem}
\newtheorem*{theorem*}{Theorem}
\newtheorem{lemma}{Lemma}
\newtheorem*{example*}{Example}
\begin{document}

\title{A Structured Dictionary Perspective on Implicit Neural Representations}

\author{Gizem~Yüce\thanks{The first two authors contributed equally to this work. G. Yüce did this work during an internship at EPFL.}\\
  ETH Zurich\\
  \tt\small{gyuece@ethz.ch} \\
  \and
  Guillermo~Ortiz-Jim\'enez\footnotemark[1]
  \qquad Beril~Besbinar
  \qquad Pascal~Frossard \\
  École Polytechnique Fédérale de Lausanne (EPFL)\\
  \tt\small{\{name.surname\}@epfl.ch}
}

\maketitle

\everypar{\looseness=-1}

\begin{abstract}
Implicit neural representations (INRs) have recently emerged as a promising alternative to classical discretized representations of signals. Nevertheless, despite their practical success, we still do not understand how INRs represent signals. We propose a novel unified perspective to theoretically analyse INRs. Leveraging results from harmonic analysis and deep learning theory, we show that most INR families are analogous to structured signal dictionaries whose atoms are integer harmonics of the set of initial mapping frequencies. This structure allows INRs to express signals with an exponentially increasing frequency support using a number of parameters that only grows linearly with depth. We also explore the inductive bias of INRs exploiting recent results about the empirical neural tangent kernel (NTK). Specifically, we show that the eigenfunctions of the NTK can be seen as dictionary atoms whose inner product with the target signal determines the final performance of their reconstruction. In this regard, we reveal that meta-learning has a reshaping effect on the NTK analogous to dictionary learning, building dictionary atoms as a combination of the examples seen during meta-training. Our results permit to design and tune novel INR architectures, but can also be of interest for the wider deep learning theory community.
\end{abstract}

\section{Introduction}
\label{sec:intro}
Implicit neural representations (INRs) have recently emerged as a powerful alternative to classical, discretized, representations of multimedia signals \cite{chen2019learning, mescheder2019occupancy, park2019deepsdf, sitzmann2019scene, sitzmann2019siren, tancik2020fourfeat,  mildenhall2020nerf, dupont2021coin, tewari2021advances}. 
In contrast to traditional methods, INRs parameterize the continuous mapping between coordinates and signal values using neural networks. This allows for an efficient and compact representation of signals that can be easily integrated into modern differentiable learning pipelines.

\begin{figure}[!t]
    \centering
    \includegraphics[width=\columnwidth]{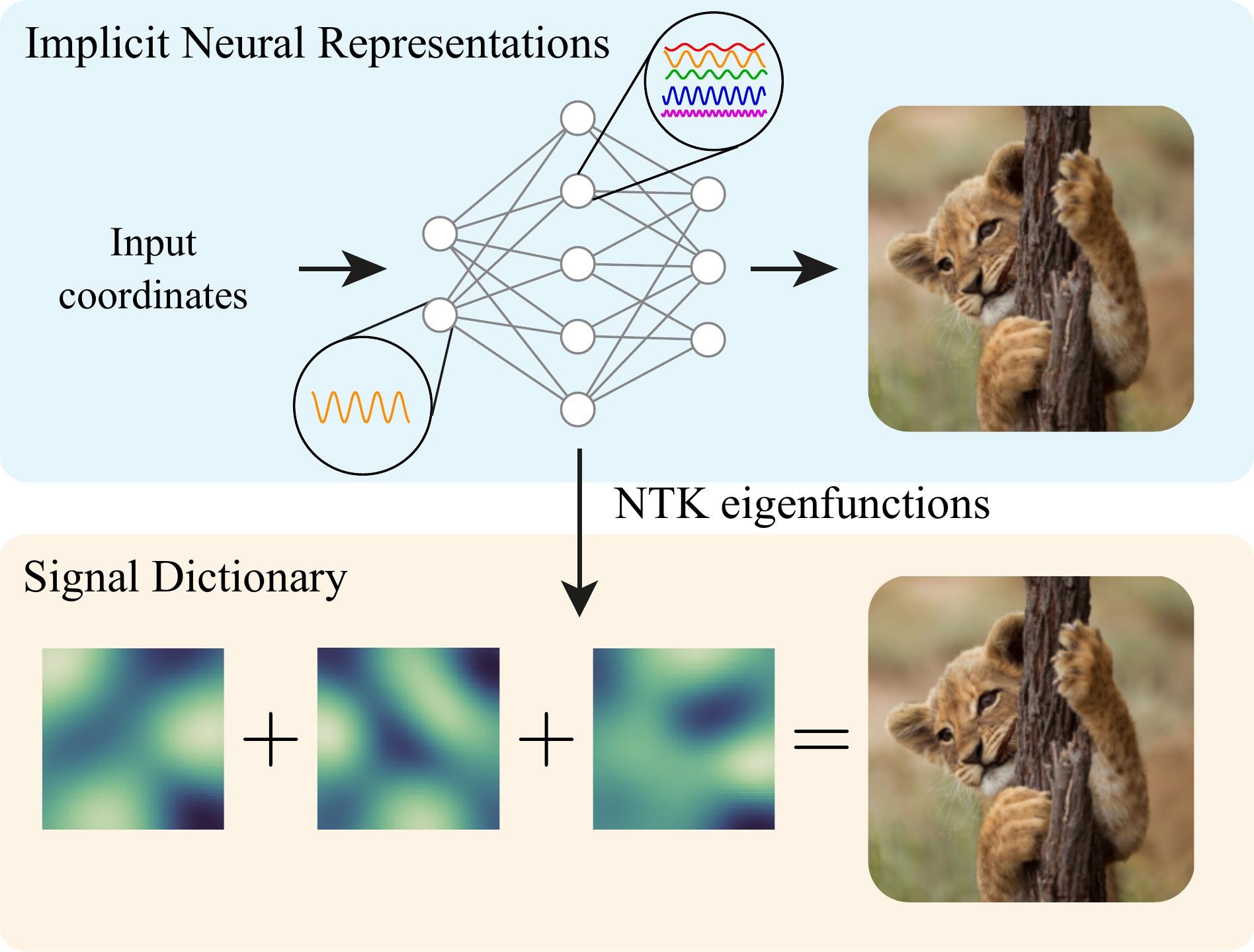}
    \caption{Conceptual illustration of our main theoretical contributions: i) Each layer of an INR increases the frequency support of the representation by splitting a signal into higher order harmonics. ii) INRs can be interpreted as signal dictionaries whose atoms are the eigenfunctions of their NTK at initialization.}
    \label{fig:teaser}
    \vspace{-1em}
\end{figure}

The recent success of INRs in many applications, such as surface representation \cite{sitzmann2019siren}, volume rendering \cite{mildenhall2020nerf, brualla2021nerf, srinivasan2021nerv, park2021nerfies} or generative modelling \cite{chan2021pi, dupont2021generative} can be largely attributed to the development of new periodic representations that can circumvent the spectral bias of standard neural networks. Indeed, there is ample evidence that the use of periodic representations~\cite{klocek2019hypernetwork, sitzmann2019scene, mildenhall2020nerf, tancik2020fourfeat} can mitigate the bias of standard architectures towards low frequency~\cite{rahaman2018spectral}. 

Nevertheless, even if INRs have become widely adopted in practice, the theoretical understanding of their principles and properties is rather limited. For example, there is no clear criterion to select between different INR families, their parameters are mostly based on heuristics, and their limitations are not well understood.
These shortcomings are slowing down further research developments. 
In this work, we therefore take a step back and focus on understanding the mechanisms behind the success of modern INRs, but also their failure modes, in order to develop more informed design strategies. We provide a unified perspective with the aim to answer the following important questions:
\begin{enumerate}
    \setlength{\itemsep}{0pt}
    \setlength{\parskip}{0pt}
    \setlength{\parsep}{0pt}
    \item \emph{What is the expressive power of INRs?}
    \item \emph{How does initialization affect their inductive bias?}
\end{enumerate}

Specifically, we first leverage results from harmonic analysis and deep learning theory, and we discover that the expressive power of most INRs is equivalent to that of a structured signal dictionary whose atoms are integer harmonics of the frequencies that define their initial input mapping (see \cref{fig:teaser}). This unifies many INR architectures under a single perspective, and can serve to understand them better and mitigate some of their common problems.

Then, we delve deeper on the inductive bias of INRs. We build upon the foundational work in~\cite{tancik2020fourfeat}, and exploit recent results in deep learning theory to give a new unifying framework to analyse the inductive bias of any INR architecture in terms of its empirical neural tangent kernel (NTK)~\cite{jacot2018neural}. In particular, we reveal the existence of a close analogy between the eigenfunctions of the empirical NTK and the atoms of a signal dictionary, and show that the difficulty of learning a signal with an INR is intimately connected to how efficiently it can be encoded by this dictionary.

Finally, we use our novel perspective to explain the role of meta-learning in improving the performance of INRs. INRs are known to be notoriously inefficient, requiring long training times, and a large sample exposure to achieve good results, especially in 3D settings~\cite{reiser2021kilonerf, Garbin21arxiv_FastNeRF, hedman2021baking}. However, recent works have shown that using meta-learning algorithms to initialize INRs can greatly improve their speed of convergence and sample complexity~\cite{tancik2020meta, sitzmann2020metasdf}. In this work, we show that meta-learning works as a dictionary learning algorithm, transforming the NTK of an INR into a rich signal dictionary whose atoms are formed by combinations of the examples seen during meta-training. This increases the representation efficiency of the target signals by the NTK~\cite{ortiz2021can}, thus improving performance and training speed.

In summary, the main contributions of our work are:
\begin{itemize}
    \setlength{\itemsep}{1pt}
    \setlength{\parskip}{1pt}
    \setlength{\parsep}{1pt}
    \item We provide a unified perspective to theoretically analyze the expressive power and inductive bias of INRs.
    \item We show that the frequency support of INRs grows exponentially with depth, as each layer splits its input into higher order harmonics, demonstrating their efficiency in representing wide spectrum signals. 
    \item We use this theory to explain the main failure modes of INRs: imperfect recovery and aliasing.
    \item We show that the inductive bias of INRs can be characterized by the ability of their empirical NTKs to encode different target signals efficiently.
    \item Finally, we discover that meta-learning greatly increases the encoding efficiency of the NTK by constructing a rich signal dictionary using different combinations of  the meta-training tasks.
\end{itemize}

Overall, we believe that our findings can impact the future research in INRs and their applications, and contribute to speeding up the development of new principled algorithms in the field. It gives a fresh perspective to understand and alleviate the drawbacks of the current architectures, as well as new intuitions to design better INR algorithms.
Finally, our analysis on the effect of meta-learning can also be of broader interest for the deep learning theory community\footnote{Code to reproduce this work: \href{https://github.com/gortizji/inr_dictionaries}{github.com/gortizji/inr\_dictionaries}}.

\section{Implicit Neural Representations}
\label{sec:INR}

The goal of an implicit neural representation is to encode a continuous target signal $g:\R^D\to\R^C$  using a neural network $f_{\bm \theta}:\R^D\to\R^C$, parameterized by a set of weights $\bm \theta\in\R^N$,
by representing the mapping between input coordinates $\bm r\in\R^D$, e.g., pixels, and signal values $g(\bm r)\in\R^C$, e.g., RGB colors. This is achieved minimizing a distortion measure, like mean-squared error, during training using some form of (stochastic) gradient descent (SGD). 

The continuous parameterization of INRs allows to store signals at a constant memory cost regardless of the spatial resolution, which makes INRs standout for reconstructing high-dimensional signals, such as videos or 3D scenes. The main challenge for INRs, though, is to reconstruct high frequency details present in most multimedia signals, e.g., textures in images. Classical neural network architectures are well-known for their strong spectral bias towards lower frequencies~\cite{rahaman2018spectral}, and this has made them traditionally useless for implicit representation tasks. Recently, however, few works \cite{tancik2020fourfeat, sitzmann2019siren} have come up with different solutions to circumvent the spectral bias of neural networks, allowing faster convergence and greater fidelity of INRs.

In what follows, we provide an overview of the main solutions under a unified \textit{architecture} formulation. Specifically, we note that most INR architectures can be decomposed into a mapping function $\gamma:\R^D\to\R^T$ followed by a multilayer perceptron (MLP), with weights $\bm W^{(\ell)}\in\R^{F_{\ell-1}\times F_\ell}$, bias $\bm b^{(\ell)}\in\R^{F_\ell}$, and activation function $\rho^{(\ell)}:\R\to\R$, applied elementwise; at each layer $\ell=1,\dots, L-1$. That is, if we denote by $\bm z^{(\ell)}$ each layers post activation, most INR architectures compute
\vspace{-2pt}
\begin{align}
    \bm z^{(0)} &= \gamma(\bm r), \label{eq:INR} \\
    \bm z^{(\ell)} &= \rho^{(\ell)}\left(\bm W^{(\ell)}\bm z^{(\ell-1)}+\bm b^{(\ell)}\right),\; \ell=1,\dots,L-1 \nonumber \\
    f_{\bm\theta}(\bm r) &= \bm W^{(L)}\bm z^{(L-1)} + \bm b^{(L)}. \nonumber
\end{align}
We now examine the two most popular INR architectures:

\vspace{-8pt}
\paragraph{Fourier feature networks (FFNs)} In \cite{tancik2020fourfeat}, Tancik \etal proposed to use a Fourier mapping $\gamma (\bm r) = \sin(\bm\Omega \bm r+\bm\phi)$, with parameters $\bm \Omega \in \R^{T \times D}$ and $\bm \phi \in \R^T$ followed by an MLP with $\rho^{(\ell)}=\operatorname{ReLU}$. Specifically, they showed that initializing $\bm\Omega_{i,j}\sim\mathcal{N}(0,\sigma^2)$ with random Fourier features~\cite{rahimi2008random} can modulate the spectral bias of an FFN, with larger values of $\sigma$ biasing these networks towards higher frequencies. Alternative formulations with deterministic initialization, commonly used for neural rendering algorithms~\cite{mildenhall2020nerf} can be considered as a special category of these networks where the frequencies in $\bm \Omega$ are taken to be powers of 2 and the frequencies in $\bm \phi$ alternate between $\{0, \pi/2\}$.

\vspace{-8pt}
\paragraph{Sinusoidal representation networks (SIRENs)} In \cite{sitzmann2019siren}, Sitzmann \etal proposed to use MLP with sinusoidal activations, i.e., $\rho^{(\ell)}=\sin$, where the first layer post activation, $\bm z^{(0)} = \sin \left(\omega_0 (\bm W^{(0)} \bm r + \bm b^{(0)}) \right)$ can be interpreted as $\gamma(\bm r) = \sin(\bm\Omega \bm r+\bm\phi) $. They showed that, by rescaling the parameters at initialization by the constant factor $\omega_0$, SIRENs can also modulate the spectral bias, with larger $\omega_0$ biasing these networks towards higher frequencies.

Nonetheless, despite the ample empirical evidence that shows that these architectures are effective at representing natural images or other visual signals, there is little theoretical understanding of how they do so. Moreover, since the design of each of these networks is guided by very different principles, the sheer diversity in the structure of these architectures makes their analysis very involved. 

In the next sections, we provide a unified perspective to analyze the expressive power and inductive bias of INRs and show that all modern INRs are intrinsically guided by the same fundamental principles, which let them express a wide range of signals. However, it also makes them prone to the same type of failure modes. Our novel framework can be used to design new principled solutions to address these shortcomings, but also simplify the tuning of current INRs.

\section{Expressive Power of INRs}

We now provide an integrated analysis of the expressive power of INRs. To that end, we will follow the formulation in \cref{eq:INR}, where, to simplify our derivations, we will restrict ourselves to polynomial activation functions, i.e., non-linearities of the form $\rho(x) = \sum_{k=0}^K \alpha_k x^k$. Note that this is a very mild assumption, as all analytic activation functions, e.g., sinusoids, can be approximated using polynomials with a na\"ive Taylor expansion; and that even the non-differentiable ReLUs can be effectively approximated using Chebyshev polynomials~\cite{mehmeti-gopel2021ringing}. Note, also, that the sequence of coefficients of the polynomial expansion of most activation functions used in practice decays very rapidly~\cite{mehmeti-gopel2021ringing}.

Now, without loss of generality, let $D=1$ and consider what happens when a single-frequency mapping, i.e. $\gamma(r)=e^{j\omega r}$, goes through such a polynomial activation: The output of the activation consists of a linear combination of the integer harmonics of the input frequency, i.e.,
\vspace{-2pt}
\begin{equation}
    \rho\left(\gamma(r)\right)=\rho\left(e^{j\omega r}\right) = \sum_{k=0}^K \alpha_k e^{j k \omega r}.\label{eq:poly}
\end{equation}
\vspace{-2pt}
This harmonic expansion is precisely the mechanism that controls the frequency representation in INRs. More generally, the mapping $\gamma(\bm r)$ acts as a collection of single frequency basis, whose spectral support is expanded after each non-linear activation into a collection of higher order harmonics. This particular structure is shared among all FFNs and SIRENs and it gives rise to the following result regarding their expressive power, i.e. the class of functions that can be represented with these architectures. 

\begin{theorem}
\label{thm:expressive}
	Let $f_{\bm\theta}:\R^D\to\R$ be an INR of the form of \cref{eq:INR} with $\rho^{(\ell)}(z)=\sum_{k=0}^K\alpha_k z^k$ for $\ell>1$. Furthermore, let $\bm\Omega=[\bm\Omega_0,\dots,\bm \Omega_{T-1}]^\top\in\R^{T \times D}$ and $\bm\phi\in\R^T$ denote the matrix of frequencies and vector of phases, respectively, used to map the input coordinate $\bm r\in\R^D$ to $\gamma(\bm r)=\sin(\bm\Omega \bm r + \bm \phi)$. This architecture can only represent functions of the form
	\begin{equation}
		f_\vtheta(r) = \sum_{\omega'\in\mathcal{H}(\bm\Omega)}c_{\omega'}\sin{(\langle\bm\omega', \bm r\rangle + \phi_{w'})},\label{eq:expressive}
	\end{equation}
	where
	\begin{equation}
		\mathcal{H}(\bm\Omega)\subseteq\left\{\sum_{t=0}^{T-1} s_t \bm\Omega_t\; \Bigg{|}\; s_t\in\mathbb{Z} \wedge \sum_{t=0}^{T-1} |s_t| \leq K^{L-1} \right\}.
	\end{equation}
\end{theorem}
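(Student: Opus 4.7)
The plan is to prove the statement by induction on the layer index $\ell$, showing that each entry of $\bm z^{(\ell)}$ is a finite linear combination of sinusoids whose frequencies are integer combinations $\sum_t s_t \bm\Omega_t$ with $\sum_t|s_t| \le K^\ell$. The output $f_{\bm\theta}$ is then just an affine combination of the entries of $\bm z^{(L-1)}$, which yields the bound $K^{L-1}$.

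The algebraic engine is the product-to-sum identity $\sin A \sin B = \tfrac12[\cos(A-B)-\cos(A+B)]$ (and its $\sin/\cos$ and $\cos/\cos$ analogues), iterated to give that a product of $k$ sinusoids at frequencies $\bm\omega_1,\dots,\bm\omega_k$ is a linear combination of sinusoids at the signed sums $\sum_i \varepsilon_i \bm\omega_i$ with $\varepsilon_i \in \{-1,+1\}$. Rewriting this as an integer combination, the $\ell_1$-sum of coefficients is exactly $k$. Constants are absorbed into this framework as sinusoids at frequency $\bm 0$, which satisfies the integer-combination condition with $\ell_1$-sum equal to $0$; this is what allows biases to propagate through the induction without enlarging $\mathcal{H}(\bm\Omega)$.

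For the induction, the base case $\ell=0$ is immediate, since $\bm z^{(0)} = \sin(\bm\Omega \bm r + \bm\phi)$ has each entry a single sinusoid at frequency $\bm\Omega_t$, corresponding to the integer vector $\bm s = \bm e_t$ of $\ell_1$-norm $1 = K^0$. For the inductive step, assuming the claim at layer $\ell-1$, the pre-activation $\bm W^{(\ell)} \bm z^{(\ell-1)} + \bm b^{(\ell)}$ is still a sum of sinusoids at frequencies in the set allowed at stage $\ell-1$ (the bias adding only a zero-frequency term). Applying $\rho^{(\ell)}(z) = \sum_{k=0}^{K} \alpha_k z^k$ entrywise and expanding each $k$-th power via the product-to-sum identity produces sinusoids whose frequencies are integer combinations of the stage-$(\ell-1)$ frequencies with $\ell_1$-sum at most $k$; composing with the stage-$(\ell-1)$ bound $K^{\ell-1}$ yields a final $\ell_1$-sum in the base frequencies of at most $k \cdot K^{\ell-1} \le K^\ell$, as required. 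Finally, $f_{\bm\theta}(\bm r) = \bm W^{(L)} \bm z^{(L-1)} + \bm b^{(L)}$ preserves the form with the stage-$(L-1)$ bound, giving exactly the set $\mathcal{H}(\bm\Omega)$ in the statement.

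The main obstacle is bookkeeping rather than any deep step: one must carefully track that cosines produced along the way can always be rewritten as sinusoids with phase shift (to match the canonical $\sin(\langle \bm\omega',\bm r\rangle + \phi_{\omega'})$ form in \eqref{eq:expressive}), that negative signed sums $-\bm\omega'$ are identified with $\bm\omega'$ through the parity of $\sin$/$\cos$ so the integer coefficients remain well-defined, and that constants produced from $z^0$ and even powers stay inside $\mathcal{H}(\bm\Omega)$ via the trivial integer vector. The argument is dimension-agnostic since the induction treats $\langle \bm\omega, \bm r \rangle$ as a scalar phase, so the general $D \ge 1$ case follows with no modification.
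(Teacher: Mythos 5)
Your proof is correct and follows essentially the same route as the paper's: a layer-by-layer induction in which the polynomial activation is expanded term by term, products of sinusoids are converted into sums via product-to-sum identities, and the $\ell_1$-bound on the integer coefficients is tracked with the triangle inequality so that the stage-$(\ell-1)$ bound $K^{\ell-1}$ composes with the power $k\le K$ to give $K^{\ell}$. The paper packages these steps into two preliminary lemmas (product of two sinusoidal sums; $k$-th power of a sinusoidal sum) plus a nested sub-induction on $k$ at each layer, but every key observation you make --- biases as zero-frequency terms, the composition $k\cdot K^{\ell-1}\le K^{\ell}$, and the $\pm$ symmetry identifying $-\bm\omega'$ with $\bm\omega'$ --- appears in the paper's argument in the same role.
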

\begin{proof}
See Appendix.
\end{proof}

\cref{thm:expressive} shows that the expressive power of FFNs and SIRENs is restricted to functions that can be expressed as a linear combination of certain harmonics of the feature mapping $\gamma(\bm r)$. That is, these architectures have the same expressive power as a structured signal dictionary whose atoms are sinusoids with frequencies equal to sums and differences of the integer harmonics of the mapping frequencies\footnote{We will refer to these components as the harmonics of $\gamma(\bm r)$.}. Interestingly, an analogous result was also proven for the Multiplicative Filter Networks (MFNs)~\cite{fathony2021multiplicative}, a proof-of-concept architecture based on a multiplicative connection between layers instead of the usual compositional structure of MLPs. In particular, it can be shown that MFNs, although very different in structure, are also only able to express linear combinations of certain harmonics of their sinusoidal filters~\cite{fathony2021multiplicative}, which means that they have the same expressive power as FFNs and SIRENs.

Besides this unification, \cref{thm:expressive} also highlights that the way all these architectures encode different signals is very similar. Indeed, instead of representing a signal by directly learning the coefficients of the linear combination,  which would require to store $\mathcal{O}(TK^L)$ coefficients $c_{\omega'}$; the multilayer structure of all INRs imposes a certain low rank structure over the coefficients -- akin to the sparsity assumption in classical dictionaries~\cite{tovsic2011dictionary} -- which can greatly save on memory as it only requires to store $\mathcal{O}(T^2L)$ parameters. This is better understood through an illustrative example\footnote{Similar examples for other architectures can be found in the Appendix.}.

\begin{example*}
    Let $f_\vtheta$ be a three-layer SIREN defined as
    \begin{equation}
        f_\vtheta(r)={\bm w^{(2)}}^\top\sin\left(\mW^{(1)}\sin\left( \bm\Omega r \right) \right),
    \end{equation}
    where $\bm\Omega\in\R^{T}$, $\mW^{(1)}\in\R^{F \times T}$, and $\bm w^{(2)}\in\R^{F}$. The output of this network can equivalently be represented as
    \begin{equation}
    	f_\vtheta(r) = \sum_{m=0}^{F-1} \sum_{s_1, \dots, s_T=-\infty}^{\infty} c_{m,s_1,\dots,s_T}\sin{ \left( \sum_{t=0}^{T-1} s_t \omega_t r \right)},\label{eq:bessel}
    \end{equation}
    where
    \begin{equation}
    	c_{m,s_1,\dots,s_T} = \left(\prod_{t=0}^{T-1} J_{s_t}\left(W^{(1)}_{m,t}\right)\right) w^{(2)}_m,\label{eq:coeffs_siren}
    \end{equation}
    and $J_s$ denotes the Bessel function of first kind of order $s$.
\end{example*}
\begin{proof}
See Appendix
\end{proof}
As we can see, the harmonic expansion introduced by the nested sinusoids of this simple SIREN can be developed into a signal with a very large bandwidth. Indeed, the few coefficients of this network are enough to represent a signal supported by an infinite number of frequency harmonics. 

On the other hand, note that composing sinusoids is a common operation in communication theory as it defines the basis of frequency modulation (FM) technology~\cite{proakis_salehi_2014}. Interestingly, drawing analogies between FM signals and SIRENs is a good source of inspiration to intuitively understand how these networks modulate their spectral bias: Recall that for FM signals, such as $\sin(\beta\sin(\omega_0 r))$, the parameter $\beta$ controls the bandwidth of the modulation, which is generally limited by the decreasing nature of the Bessel coefficients $J_n(\beta)$ in $n$. Increasing $\beta$ has the effect of expanding the spectral support of the modulation, as the arguments of the Bessel functions increase. 

The analogous phenomenon can be observed in \cref{eq:bessel} for this simple SIREN, but can be extended to more general architectures. In general, we see that due to the decreasing nature of the Bessel functions $J_{s_t}(W^{(1)}_{m,t})$, the high order harmonics in \cref{eq:bessel} tend to have smaller weights than the lower ones. This specific parameterization acts as an implicit bias mechanism, which focuses most of the energy of the output signal in a narrow band around the input frequencies $\bm \Omega$. Nevertheless, we can also see that increasing the scale of the coefficients in the inner layer, e.g., $\bm W^{(1)}$, makes the coefficients of higher order terms in \cref{eq:coeffs_siren} larger, thus increasing the power of the higher order harmonics, and allowing the network to learn a wider range of frequencies.

The fact that all modern INRs encode information in a similar way can explain why all these architectures are as powerful, in practice. However, it may also explain why they all suffer from the same failure modes. In \cref{sec:failure_modes}, we study these in more detail.

\section{Failure modes of INRs}\label{sec:failure_modes}

We now move on to study of the main failure modes of INRs. In particular, we will see how the specific harmonic expansion from \cref{thm:expressive} can sometimes lead to very recognizable artifacts in the learned reconstructions. Specifically, imperfect signal recovery and aliasing.

\subsection{Imperfect recovery}
\label{sec:imperfect}

One of the main consequences of \cref{thm:expressive} is that the set of frequencies that define the base embedding $\gamma(\bm r)$ completely determines the frequency support of the reconstruction $f_\vtheta(r)$. In this sense, it is fundamental to guarantee that the set $\mathcal{H}(\bm\Omega)$ permits to properly cover the spectrum of $g(\bm r)$. When this is not the case, the reconstructed representations can exhibit severe artifacts in the spatial domain stemming from an incorrect choice of fundamental frequencies determined by the INR mapping in \cref{eq:INR}.

\def \figsizetwo{7cm}
\def \heightsizetwo{2.1cm}
\def \textboxsizefigtwodouble{4.2cm}
\def \textboxsizefigtwotight{2cm}

\begin{figure}
    \centering
    \rotatebox[origin=c]{90}{
        \parbox{\textboxsizefigtwotight}{
        \centering \footnotesize{Ground Truth \newline \vspace{2pt}}
        }
    }
    \begin{subfigure}{\figsizetwo}
        \raggedright
        \includegraphics[height=\heightsizetwo]{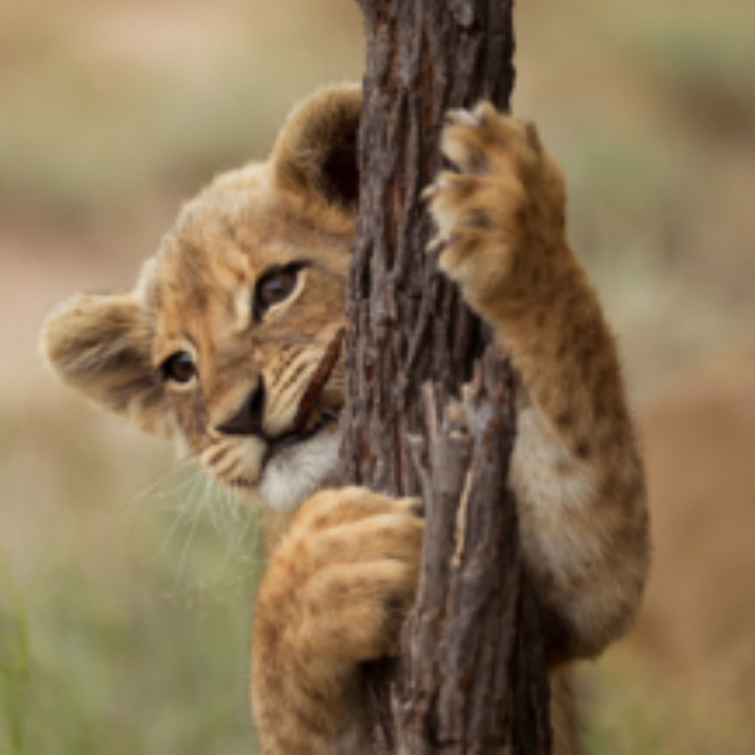}
        \includegraphics[height=\heightsizetwo]{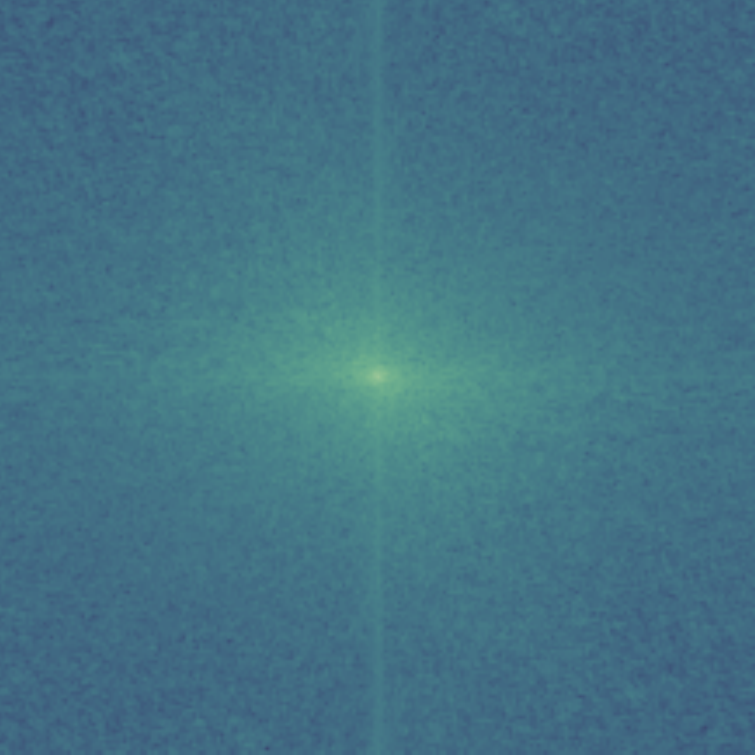}
    \end{subfigure}\\
    \vspace{-3pt}
    \rotatebox[origin=c]{90}{
        \parbox{\textboxsizefigtwodouble}{
        \centering 
        \footnotesize{Single frequency mapping \\
        \parbox{\textboxsizefigtwotight}{\centering\hspace{4pt}\footnotesize{($f_0=0.5$)}}
        \hfill
        \parbox{\textboxsizefigtwotight}{\centering \footnotesize{($f_0=1$)}}}}
    }
    \begin{subfigure}{\figsizetwo}
        \raggedright
        \includegraphics[height=\heightsizetwo]{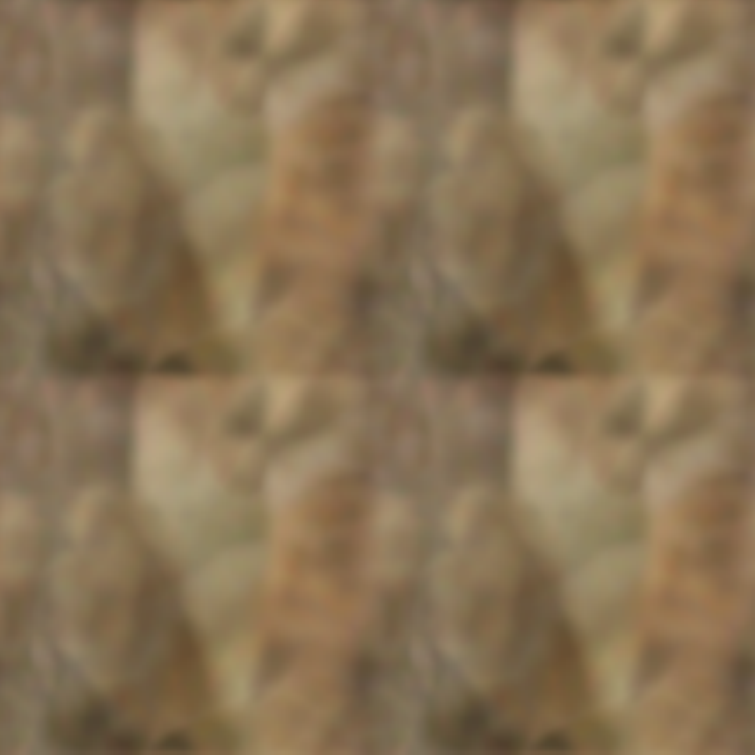}
        \includegraphics[height=\heightsizetwo]{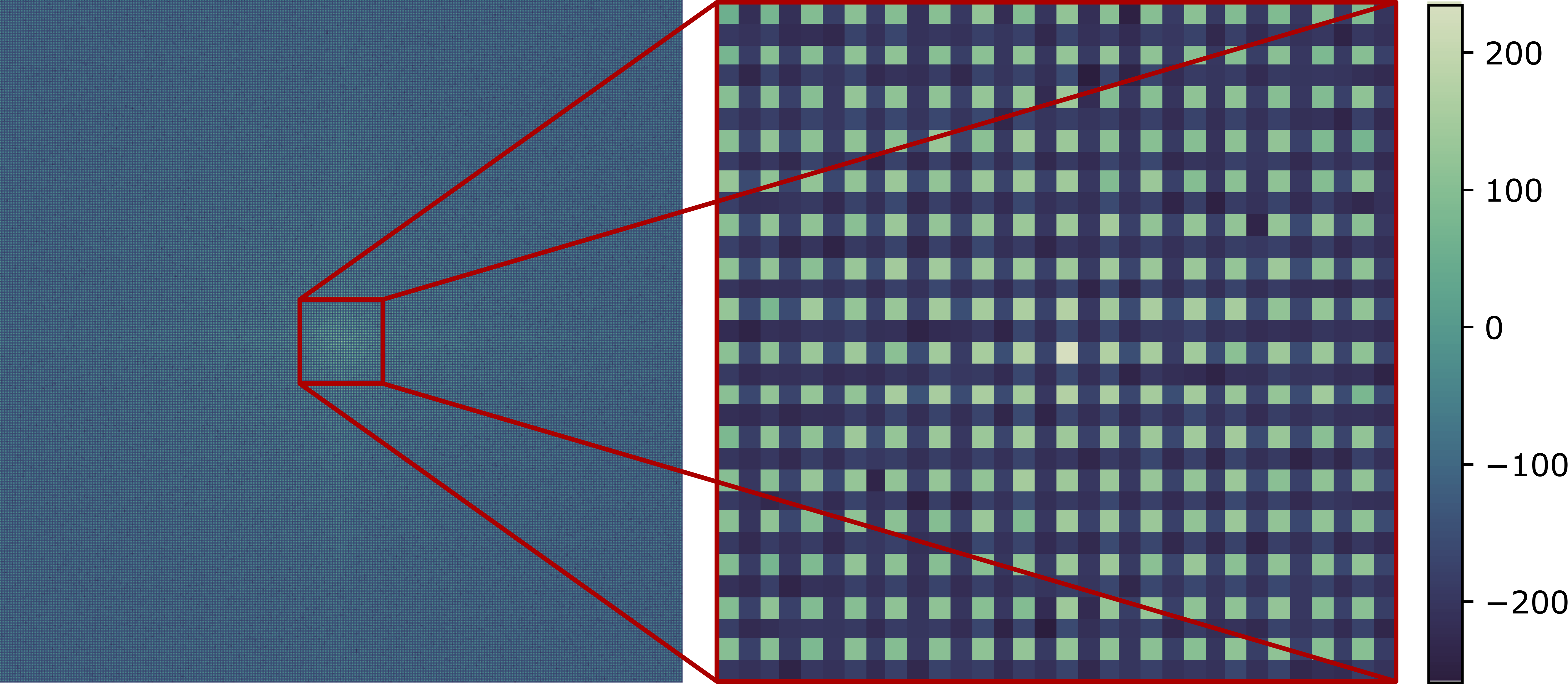}
        \\
        \includegraphics[height=\heightsizetwo]{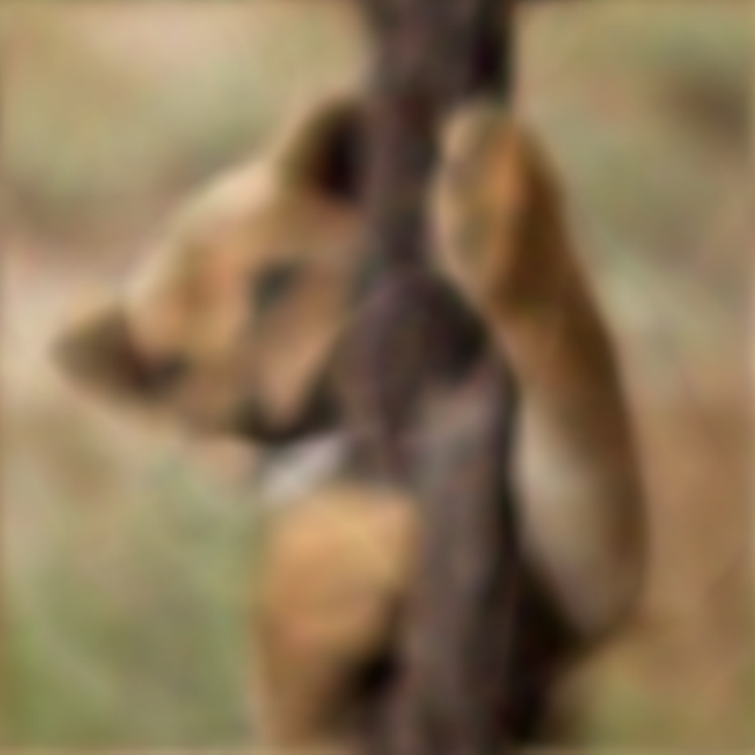}
        \includegraphics[height=\heightsizetwo]{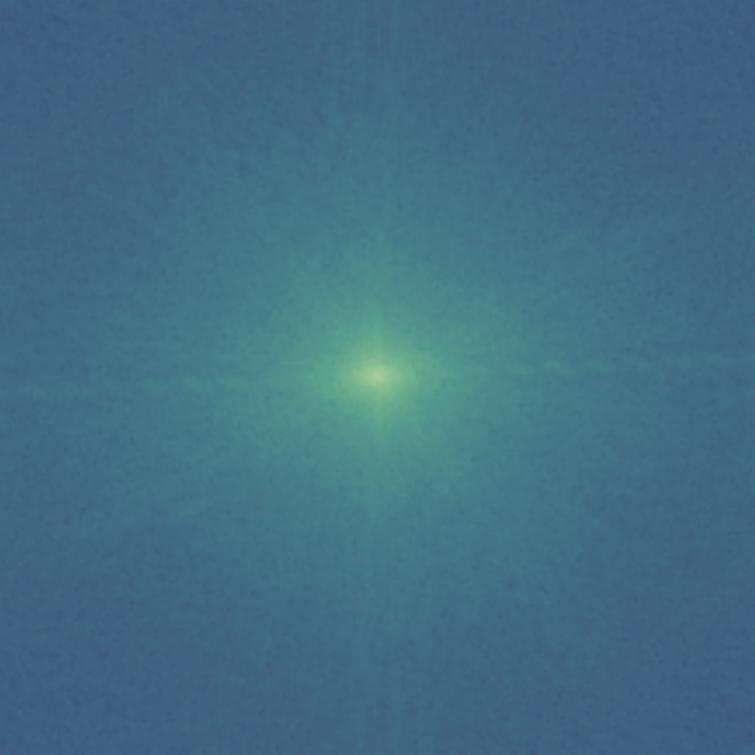}
    \end{subfigure}\\
    \vspace{-3pt}
    \rotatebox[origin=c]{90}{
        \parbox{\textboxsizefigtwotight}{
        \centering \footnotesize{FFN} \\\footnotesize{($\sigma=10$)}
        }
    }
    \begin{subfigure}{\figsizetwo}
        \raggedright
        \includegraphics[height=\heightsizetwo]{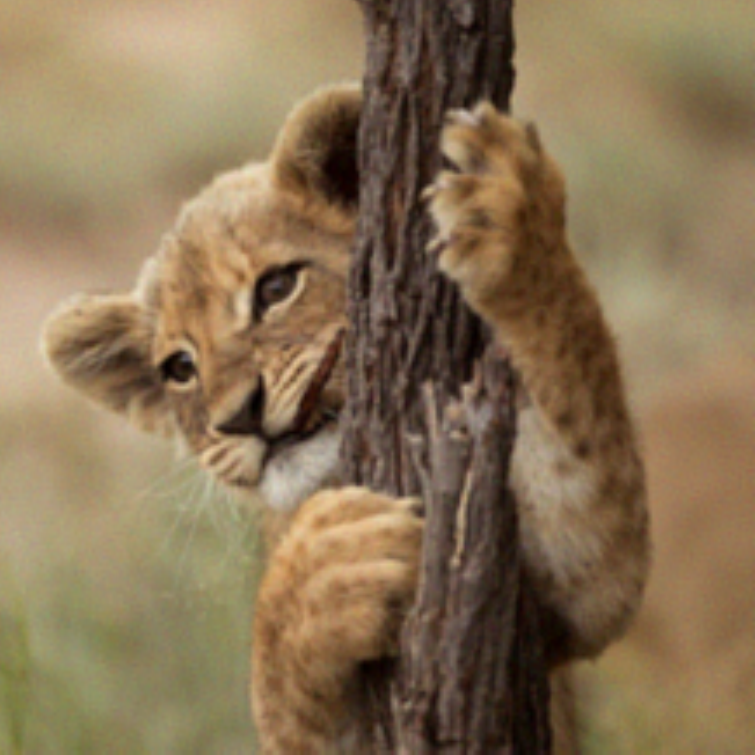}
        \includegraphics[height=\heightsizetwo]{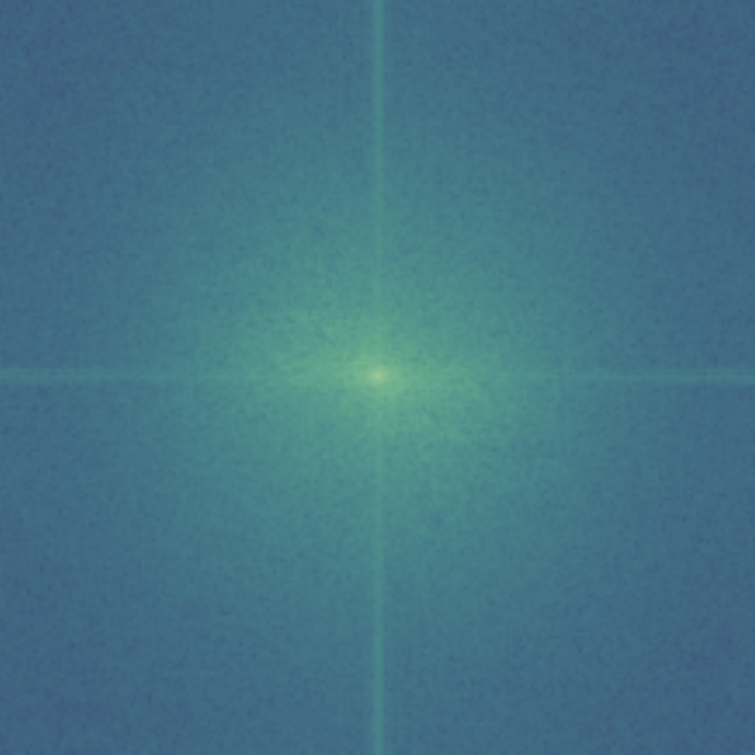} 
    \end{subfigure}\\
    \caption{\textbf{Left} Image reconstruction with different mappings of the input coordinates. \textbf{Right:} Magnitude of the DFT of the reconstruction. The \textbf{FFN} uses random Fourier encodings as defined in \cref{sec:INR}, and the \textbf{single frequency mappings} correspond to $\gamma(\bm r) = [\cos (2\pi f_0 \bm r), \sin (2\pi f_0 \bm r)]^T$.}
    \label{fig:aliasing_spatial}
    \vspace{-8pt}
\end{figure}

Let us illustrate this phenomenon for FFNs\footnote{The details of all our experiments can be found in the Appendix.}, but note that other types of architectures, such as SIRENs, also can suffer from spatial artifacts\footnote{We replicate our experiments for other networks in the Appendix.}. To that end, let us take the extreme case of an FFN $f_{\bm\theta}:\R^2\to\R^3$, with a deterministic single frequency Fourier encoding $\gamma(\bm r)=[\sin(2\pi f_0 \bm r), \cos(2\pi f_0 \bm r)]^\top$, reconstructing an image $f:[-1, 1]^2\to\R^3$, from samples in a grid of $N\times N$.

Now, note that, in light of \cref{thm:expressive}, this network can only represent signals with a frequency support in $\mathcal{H}(\bm\Omega)\subseteq \{2k\cdot\pi f_0 |k\in\mathbb{Z}\}$, i.e., containing only even multiples of $\pi f_0$. This means that if we choose $f_0=1$, the discrete Fourier transform (DFT) of the reconstruction will only have non-zero coefficients at frequencies corresponding to $2k\cdot 2\pi /N$, for $k=0,\dots, \lfloor (N-1)/2\rfloor$. This frequency covering is certainly not enough to completely represent images, as it misses all odd multiples of $2\pi/N$.

As shown in \cref{fig:aliasing_spatial}, reconstructing an image with such network produces severe artifacts. The learned representation with $f_0=1$ is highly distorted. That is, we see multiple displaced versions of the target image imposed over each other. The nature of this artifact is much more clear when we inspect the DFT of the reconstruction, which is supported on a perfect grid in the spectral domain, missing all the values of the spectrum at the odd coefficients.

Strikingly, setting $f_0=0.5$ is enough to completely get rid of this type of artifact. Indeed, when $f=0.5$ the set $\mathcal{H}(\bm\Omega)\subseteq \{\pi k|k\in\mathbb{Z}\}$, which means that the DFT of the reconstruction can have energy in all spectral coefficients. Nonetheless, we also observe that the resulting image is quite blurry. As we will see, this is due to the fast decay of the polynomial coefficients in \cref{eq:poly} for most activation functions, including ReLUs~\cite{mehmeti-gopel2021ringing}, which causes the weights of the high frequency harmonics in \cref{eq:expressive} to be very small. This phenomenon can be greatly alleviated, however, by increasing the frequency cover of the initial mapping $\gamma(\bm r)=\sin(\bm \Omega \bm r+\bm \phi)$ and sampling $\bm \Omega\in\R^{D\times T}$ using $\Omega_{i,j}\sim\mathcal{N}(0, \sigma^2)$. Indeed, using a large $T$ with a large $\sigma$ can reduce the probability of having a limited representation of the frequency spectrum of the target signal. Nevertheless, as we will see in \cref{sec:aliasing_freq}, setting $\sigma$ too large can introduce other problems.

\subsection{Aliasing}\label{sec:aliasing_freq}
It has been empirically shown that INRs with high fundamental frequencies in $\gamma(\bm r)$ converge faster, and achieve higher performances in the training set~\cite{sitzmann2019siren, tancik2020fourfeat}; even for targets with high frequency details. Nevertheless, it has also been reported that initializing these frequencies too high leads to poor performance outside the exact support of the training set, and produces aliasing artifacts~\cite{barron2021mip}. To the best of our knowledge, this behavior is still poorly understood.

\cref{thm:expressive} can, however, shed new light on this phenomenon. To that end, it is useful to see INRs as digital-to-analog converters (DAC), since INRs do little more than reconstruct a continuous signal from a set of discrete training samples. Classical sampling theory~\cite{oppenheim1999discrete} guarantees that one can reconstruct a bandlimited signal from its samples provided the sampling frequency is above the Nyquist rate. Nevertheless, it also states that without this prior knowledge, the problem of reconstructing a continuous signal from its samples is, in general, ill-posed -- there are many continuous functions that can lead to the exact same samples. Since INRs do not have an explicit knowledge of the bandwidth of the target, only their implicit bias can determine which of all these functions they reconstruct.

\def \subfigsizethree{4.1cm}
\def \textboxsizefigthreerotated{2cm}
\def \textboxsizefigthree{3.8cm}

\begin{figure}
\centering
\hspace{-6pt}
\rotatebox[origin=c]{90}{\makebox[\textboxsizefigthreerotated]{\centering \footnotesize{$w_0=300$} \vspace{-8pt}}}
\begin{subfigure}{\subfigsizethree}
    \centering 
    \includegraphics[width=\subfigsizethree]{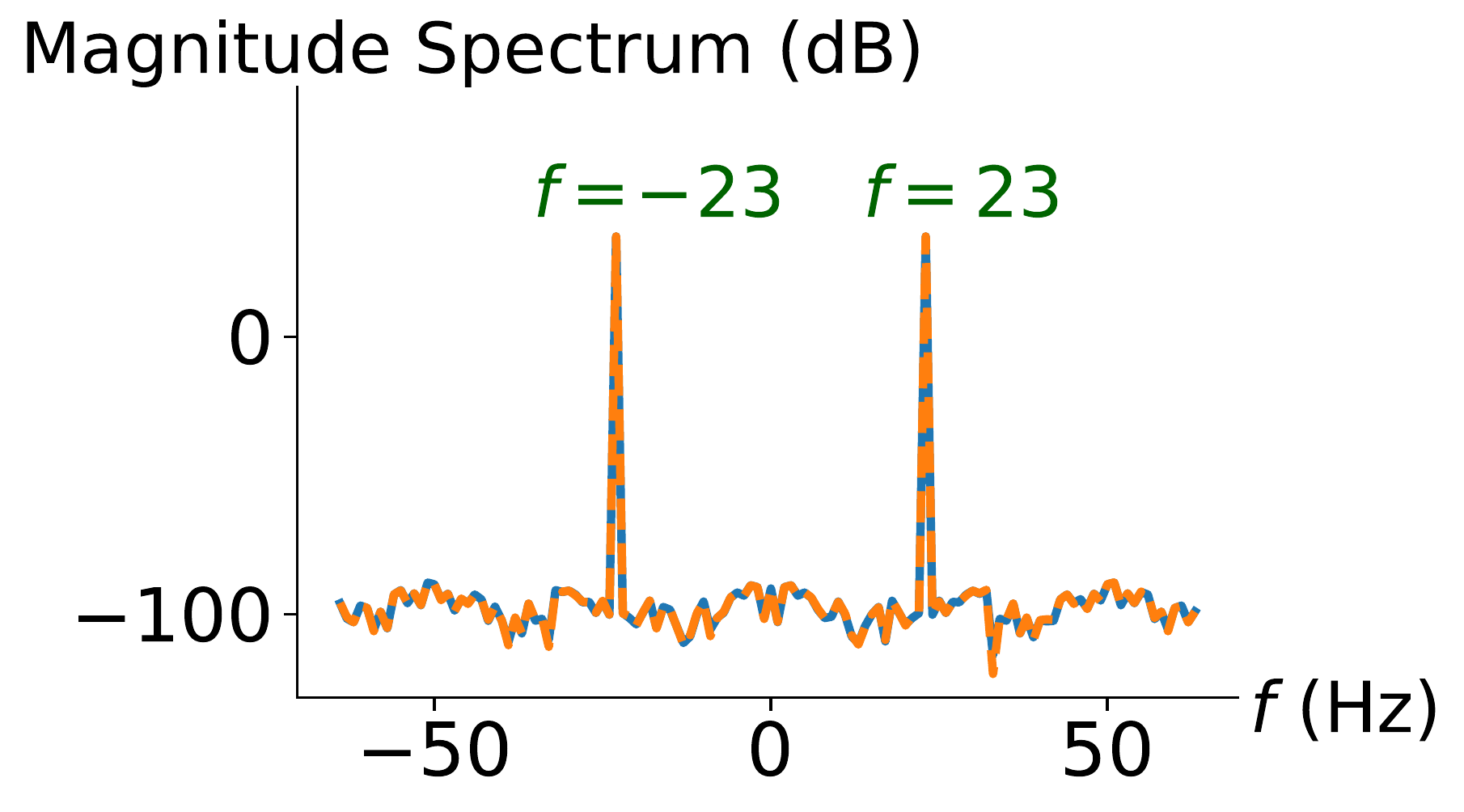}
\end{subfigure}
\hspace{-6pt}
\begin{subfigure}{\subfigsizethree}
    \centering
    \includegraphics[width=\subfigsizethree]{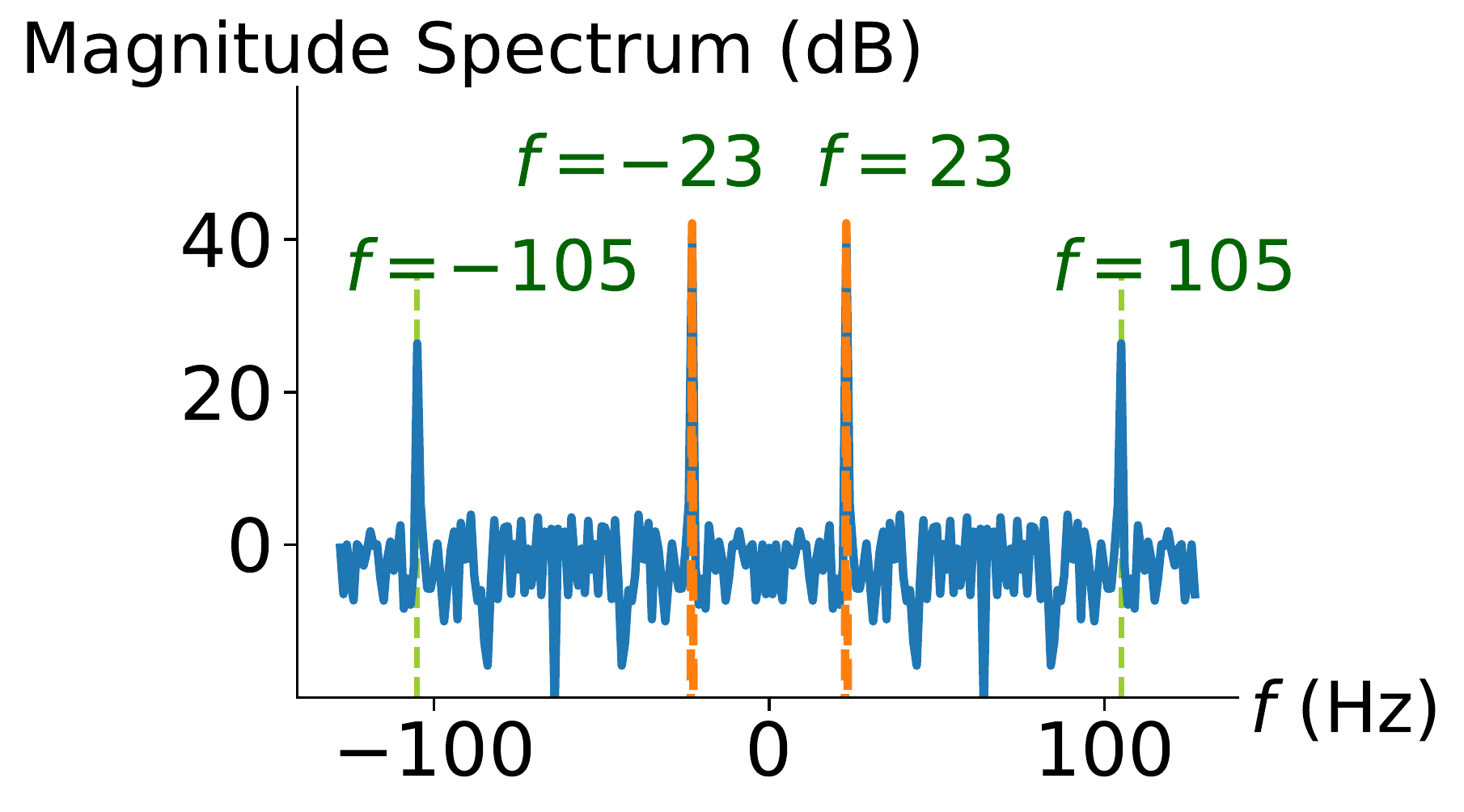}
\end{subfigure}\\
\hspace{-6pt}
\rotatebox[origin=c]{90}{\makebox[\textboxsizefigthreerotated]{\centering \footnotesize{$w_0=30$} \vspace{-8pt}}}
\begin{subfigure}{\subfigsizethree}
    \centering 
    \includegraphics[width=\subfigsizethree]{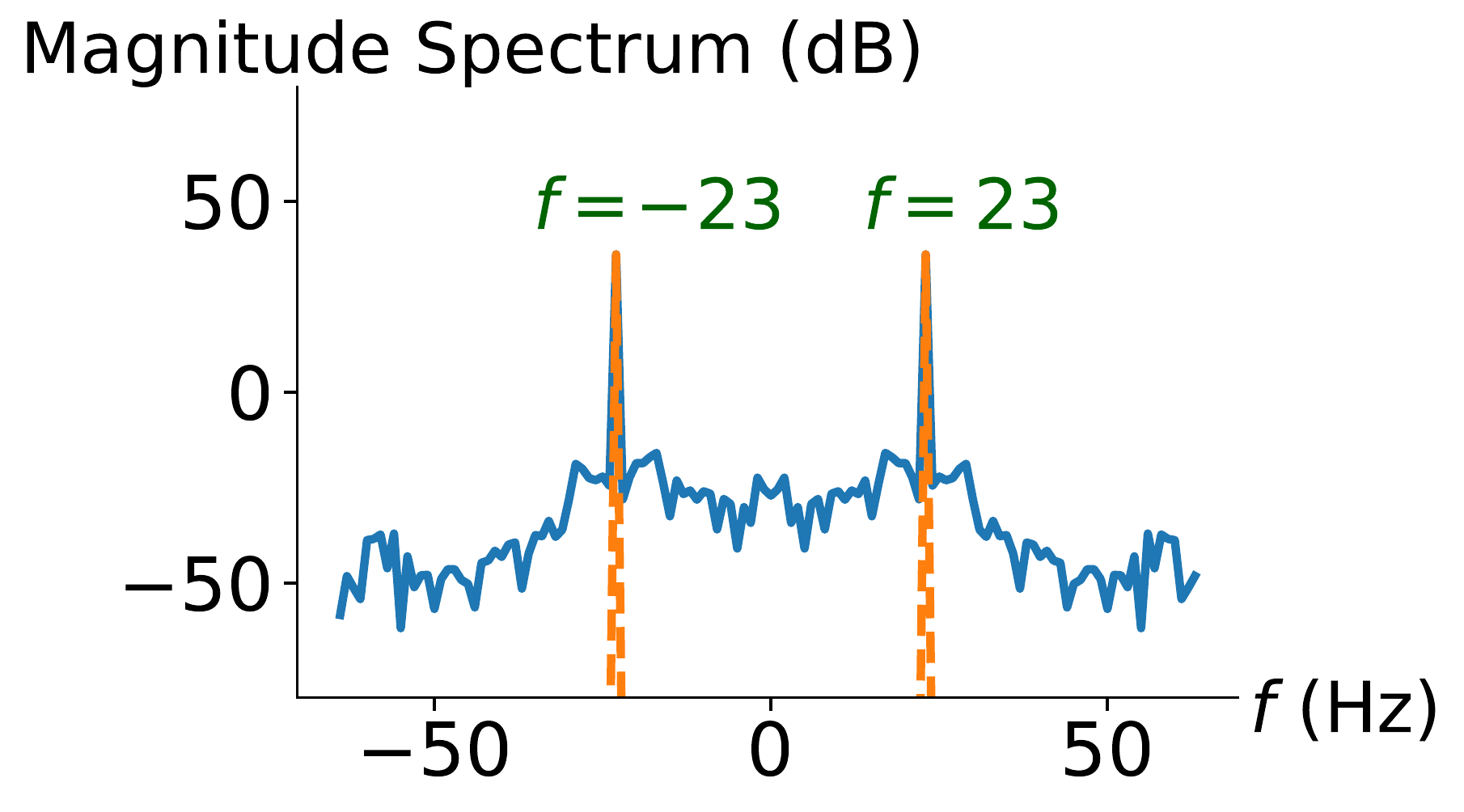}
\end{subfigure}
\hspace{-6pt}
\begin{subfigure}{\subfigsizethree}
    \centering
    \includegraphics[width=\subfigsizethree]{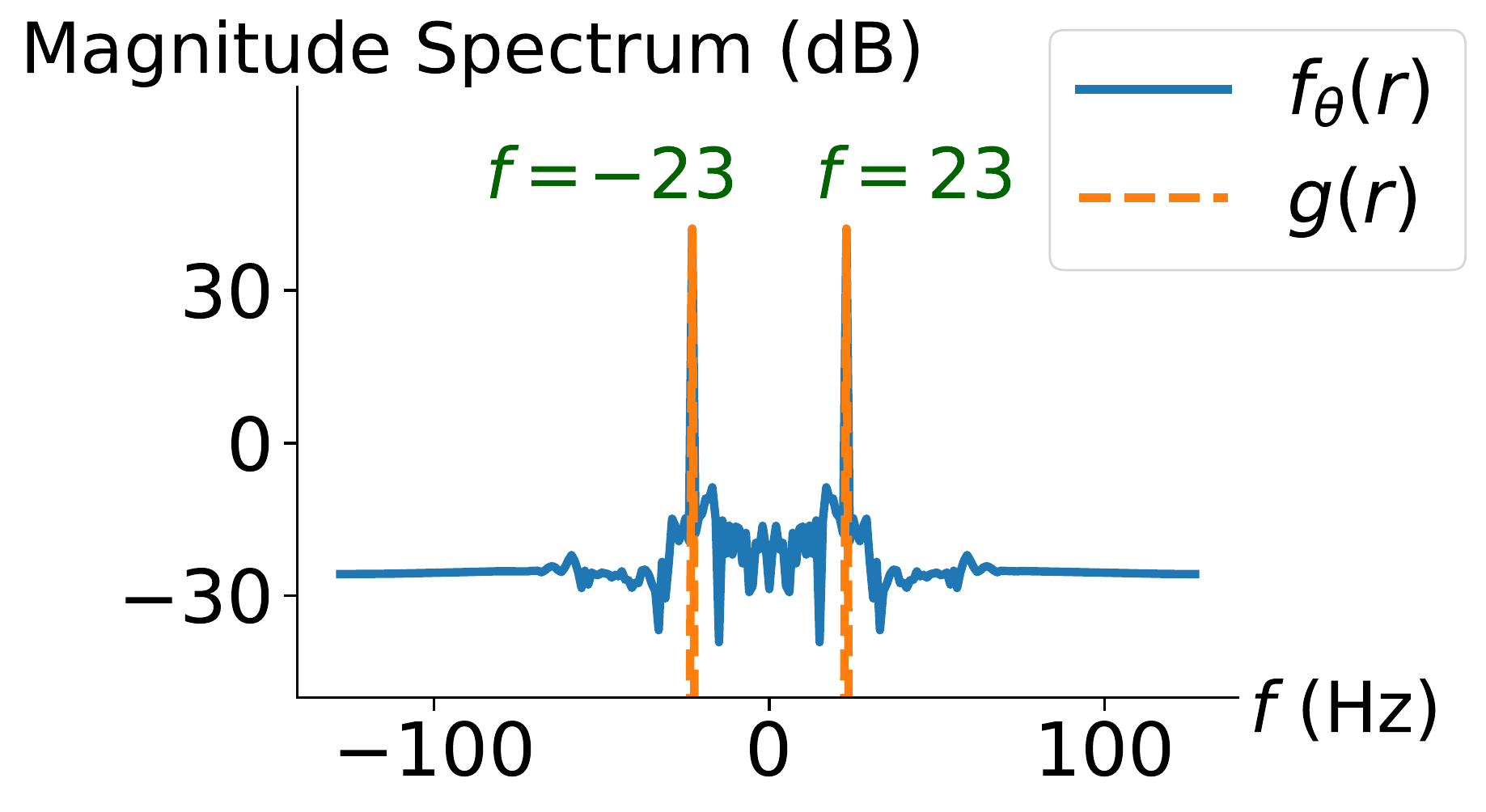}
\end{subfigure}
\\
\vspace{4pt}
\hspace{0.2cm}
\makebox[\textboxsizefigthree]{\centering\footnotesize{Sampling frequency $f_s = 128$}}
\hspace{0.1cm}
\makebox[\textboxsizefigthree]{\centering\footnotesize{Sampling frequency $f_s = 256$}}
\caption{Magnitude of the spectrum of $g(r)=\sin(2\pi\cdot 23r)$ and its SIREN reconstruction trained at $f_s = 128$ Hz. \textbf{Top row} shows $\omega_0=300$, and \textbf{bottom row} $\omega_0=30$. On the \textbf{left} the signals are sampled at $f_s=128$ Hz and on the \textbf{right} at $f_s=256$ Hz.}
\label{fig:freq_aliasing}
\vspace{-8pt}
\end{figure}

When the implicit bias does not match the nature of the signals, this can lead to reconstruction artifacts. Take for instance the problem of reconstructing a single-frequency signal $g(r)=\sin(2\pi\cdot23 r)$ using a SIREN ($\omega_0=300$ rad/s) trained on $128$ evenly spaced samples in the range $[0,1]$, i.e., sampled with a frequency of $f_s=128$ Hz. As we can see in \cref{fig:freq_aliasing}, the discrete-time Fourier transform of the reconstruction at the training points perfectly matches the target signal, i.e., the training loss is zero. Surprisingly, though, if one reconstructs the signal on a finer grid, e.g., $f_s=256$ Hz, which contains coordinates not seen during training, one can see that the spectrum of the reconstruction has an additional peak at $105$ Hz that is not present in the target signal. That is, the implicit bias of the network has ``chosen'' to reconstruct the signal using an aliased higher frequency component, as it had no way to discard this feasible solution. Interestingly, if one initializes the SIREN using $\omega_0=30$ rad/s, instead, this aliased copy disappears.

\cref{thm:expressive} gives the key to understand this behaviour. Specifically, note that most non-linearities used in INRs, e.g., ReLU or $\sin$, can be effectively approximated by polynomials of small order, or with rapidly decaying coefficients. As a result, even if the frequency support of the INRs can include harmonics of very high frequencies, theoretically, those components tend to be weighted with much smaller coefficients in practice. Increasing the value of the fundamental frequencies does help to include higher frequency components without relying in very high order harmonics. However, it does so, at the cost of introducing high frequency components with large weights in \cref{eq:expressive}, thus increasing the chances of yielding aliased reconstructions.

Reconstructing signals at low sampling rates makes the aliased high frequency components in \cref{eq:expressive} indistinguishable from lower frequency components. As we have seen this phenomenon stems from the underspecification~\cite{damour2020underspecification} of the reconstruction of the reconstruction problem in INRs, which can yield aliasing artifacts when testing at higher sampling rates. Solving this issues is crucial in application where a certain degree of generalization is required from the INRs. Applications such as super-resolution~\cite{chen2021learning, kim2021learning} or scene reconstruction~\cite{sitzmann2019siren} cannot rely on pure overfitting, and require INRs to generalize outside of their training support. Overall, we hope that our new insights can support the design of a new generation of INR architectures and algorithms that can mitigate this underspecification.

\section{Inductive bias of INRs}

All our results, so far, have only dealt with expressive power, i.e., the type of functions that can be represented by INRs. However, even if a network can express a signal, it does not mean that it can learn to represent it efficiently. MLPs, for instance, are widely known to be universal function approximators~\cite{cybenko1989approximation}, but still they have a hard time learning to high frequency functions~\cite{rahaman2018spectral}. To the best of our knowledge, the inductive bias of INRs is a largely unexplored topic. Besides the fact that INRs can circumvent the spectral bias~\cite{tancik2020fourfeat, sitzmann2019siren}, little is known of how different design choices influence the learnability of different signals.

In what follows, we will try to narrow this knowledge gap, as we will leverage recent results from deep learning theory to shed new light on the inductive bias of INRs, and how their initialization has a crucial role on what they learn.

\subsection{Overview of NTK theory}
Studying the inductive bias of deep learning is hard. This is mostly due to the non-linear nature of the mapping between parameters and functions specified by neural networks. Recent studies, however, have started arguing that studying learnability approximately is much more tractable. Notably, the neural tangent kernel (NTK) framework~\cite{jacot2018neural} proposes to approximate any neural network by its first order Taylor decomposition around the initialization $\bm \theta_0$, i.e.,
\begin{equation}
    f_{\bm\theta}(\bm r)\approx f_{\bm\theta_0}(\bm r)+(\bm\theta-\bm\theta_0)^\top\nabla_{\bm\theta}f_{\bm\theta_0}(\bm r), \label{eq:linearization}
\end{equation}
since using this approximation, the network is reduced to a simple linear predictor defined by the kernel
\begin{equation}
\bm\Theta(\bm r_1, \bm r_2)=\langle\nabla_{\bm\theta}f_{\bm\theta_0}(\bm r_1),\nabla_{\bm\theta}f_{\bm\theta_0}(\bm r_2)\rangle.\label{eq:ntk}
\end{equation}

Remarkably, while the understanding of deep learning is still in its infancy, the learning theory of kernels is much more developed~\cite{smola1998kernel}. Specifically, it can be shown that using the kernel in \cref{eq:ntk}, the sample complexity, and optimization difficulty, of learning a target function $g$ grows proportionally to its kernel norm~\cite{bartlettRademacherGaussianComplexities2001}, i.e.,

\begin{equation}
    \|g\|^2_{\bm\Theta}=\sum_{i=0}^\infty \cfrac{1}{\lambda_i}\left|\langle\phi_i,g\rangle\right|^2,\label{eq:kernel_norm}
\end{equation}
where $\langle \phi_i, g\rangle=\mathbb{E}_{\bm r}[\phi_i(\bm r)g(\bm r)]$, and $\{\lambda_i, \phi_i\}_{i=0}^\infty$ denote the eigenvalue, eigenfunction pairs of the kernel given by its Mercer's decomposition, i.e., $\bm\Theta(\bm r_1, \bm r_2)=\sum_{i=0}^\infty\lambda_i \phi_i(\bm r_1)\phi_i(\bm r_2)$. That is, those targets that are more concentrated in the span of the eigenfunctions associated with the largest eigenvalues of the kernel are easier to learn.

\cref{eq:linearization} holds with equality only if the neural network $f_{\bm\theta}$ is infinitely wide and has a specific structure~\cite{jacot2018neural, aroraCNTK2019}. For the finite-size neural networks used in practice, it only provides a rough approximation. Fortunately, recent studies have shown that even if finite-size neural networks and their kernel approximations do not have exactly the same dynamics, their sample complexity when learning a target $g$ scales in both cases with its kernel norm~\cite{ortiz2021can}, which makes \cref{eq:kernel_norm} a good proxy for learnability in deep learning. 

\begin{figure}[t]
    \centering
        \includegraphics[width=\linewidth]{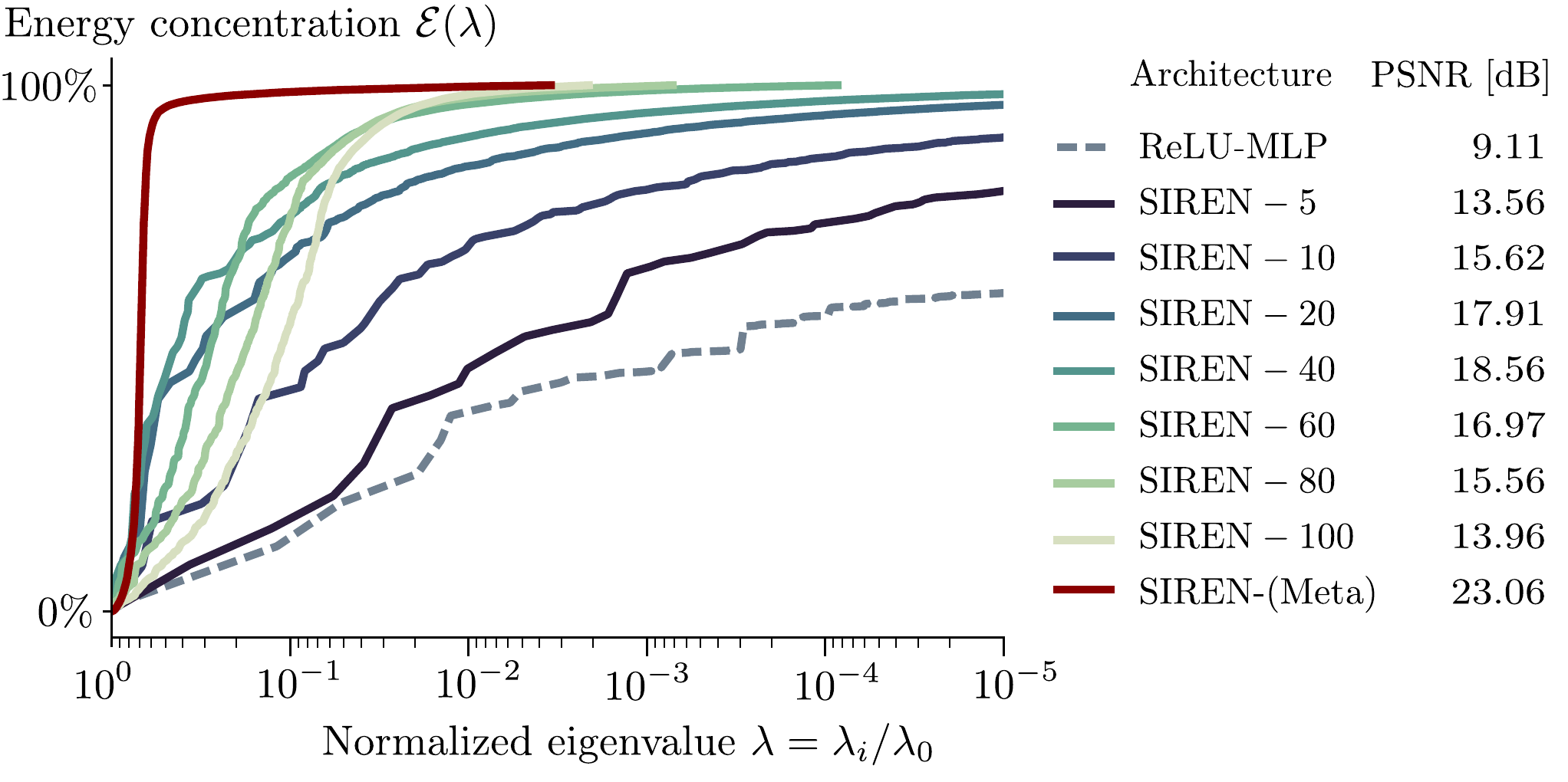}
    \vspace{-0.6cm}
    \caption{Average energy concentration of $100$ validation images from CelebA on subspaces spanned by the eigenfunctions of the empirical NTK associated to eigenvalues greater than a given threshold. Legend shows the average test PSNR after training to reconstruct those images from 50\% randomly selected pixels.}
    \label{fig:energy_vs_eigval}
    \vspace{-8pt}
\end{figure}
\subsection{NTK eigenfunctions as dictionary atoms}
\label{sec:ntk_init}

\def \subfigsizeone{1.4cm}
\def \textboxsizeone{1.4cm}
\def \textboxsizetwo{1.65cm}

\begin{figure*}[t!]
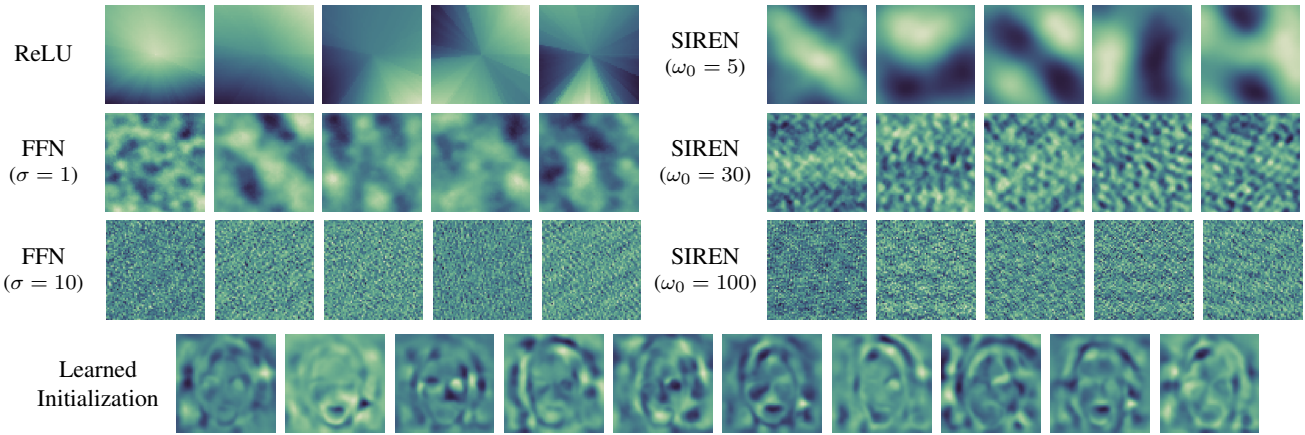

\begin{minipage}{0.5\textwidth}
    \centering
    \rotatebox[origin=c]{0}{\makebox[\textboxsizeone]{\small{ReLU}}}\hfil
    \foreach \i in {0,...,4} {
        \begin{subfigure}{\subfigsizeone}\includegraphics[width=\linewidth]{figures/MLPeigvec\i}
        \end{subfigure}
        \hspace{-6pt}
    }\\
    \rotatebox[origin=c]{0}{\parbox{\textboxsizeone}{\centering \small{FFN} \footnotesize{($\sigma=1$)}}}\hfil
    \foreach \i in {0,...,4} {
        \begin{subfigure}{\subfigsizeone}\includegraphics[width=\linewidth]{figures/FFrandom1eigvec\i}
        \end{subfigure}
        \hspace{-6pt}
    }\\
    \rotatebox[origin=c]{0}{\parbox{\textboxsizeone}{\centering \small{FFN} \footnotesize{($\sigma=10$)}}}\hfil
    \foreach \i in {0,...,4} {
        \begin{subfigure}{\subfigsizeone}\includegraphics[width=\linewidth]{figures/FFrandom10eigvec\i}
        \end{subfigure}
        \hspace{-6pt}
    }
    \vspace{2pt}
\end{minipage}
\begin{minipage}{0.5\textwidth}
    \centering
    \rotatebox[origin=c]{0}{\parbox{\textboxsizeone}{\centering \small{SIREN} \footnotesize{($\omega_0=5$)}}}\hfil
    \foreach \i in {0,...,4} {
        \begin{subfigure}{\subfigsizeone}\includegraphics[width=\linewidth]{figures/Siren_foz5eigvec\i}
        \end{subfigure}
        \hspace{-6pt}
    }\\
    \rotatebox[origin=c]{0}{\parbox{\textboxsizeone}{\centering \small{SIREN} \footnotesize{($\omega_0=30$)}}}\hfil
    \foreach \i in {0,...,4} {
        \begin{subfigure}{\subfigsizeone}\includegraphics[width=\linewidth]{figures/Siren_foz30eigvec\i}
        \end{subfigure}
        \hspace{-6pt}
    }\\
    \rotatebox[origin=c]{0}{\parbox{\textboxsizeone}{\centering \small{SIREN} \footnotesize{($\omega_0=100$)}}}\hfil
    \foreach \i in {0,...,4} {
        \begin{subfigure}{\subfigsizeone}\includegraphics[width=\linewidth]{figures/Siren_foz100eigvec\i}
        \end{subfigure}
        \hspace{-6pt}
    }
    \vspace{2pt}
\end{minipage}
\begin{minipage}{\textwidth}
\centering
\rotatebox[origin=c]{0}{\parbox{\textboxsizetwo}{\centering \small{Learned Initialization}}}
\foreach \i in {0,...,9} {
    \begin{subfigure}{\subfigsizeone}\includegraphics[width=\linewidth]{figures/meta_64_30_eigvec\i}
    \end{subfigure}
    \hspace{-6pt}
}
\end{minipage}
\caption{First eigenfunctions of the empirical NTK of different INRs at initialization. The first six architectures are initialized as described in \cref{sec:INR}. The \textbf{learned initialization} row shows the eigenfunctions of a SIREN initialized after meta-learning on $1,000$ training images from the CelebA dataset~\cite{liu2015faceattributes} following the procedure described in \cite{tancik2020meta}. Details of this experiment can be found in the Appendix.}
\label{fig:eigvec}
\vspace{-0.4cm}
\end{figure*}

The fact that the empirical NTK can approximately capture learnability in deep learning leads to a new interpretation of INRs: we can view INRs as signal dictionaries whose atoms are given by the eigenfunctions of the NTK at initialization. In this view, the study of the inductive bias of an INR is equivalent to the study of the representation capabilities of its NTK dictionary, in the sense that the functions that can be efficiently encoded by this dictionary are the ones that will be easier to learn.

The simplicity of this analogy allows us to investigate phenomena that appear complex otherwise. For example, we can use this perspective to constructively characterize the effect of the parameter $\omega_0$ in the inductive bias of a SIREN, and compare different networks, or initializations.
To that end, we measure the average energy concentration\footnote{Details of the experiments can be found in the Appendix.} of $N=100$ validation images $\{g_n\}_{n=1}^N$ from the CelebA dataset~\cite{liu2015faceattributes} on the span of the eigenfunctions of the NTK associated to eigenvalues greater than a given treshold, i.e.,
\begin{equation}
    \mathcal{E}(\lambda)=\cfrac{1}{N}\sum_{n=1}^N\sum_{\lambda_i/\lambda_0\geq \lambda}\cfrac{\left|\langle \phi_i, g_n \rangle\right|^2}{\left|\langle g_n, g_n \rangle\right|^2}.
\end{equation}
This metric is intimately connected to the kernel norm in \cref{eq:kernel_norm}, and it can give us a convenient perspective of the region of the NTK spectrum that will represent an image. The results of this procedure applied to different networks are shown in \cref{fig:energy_vs_eigval}. Remarkably, for very low values of $\omega_0$, most of the energy of these images is concentrated on the eigenfuctions corresponding to small eigenvalues. However, as we increase $\omega_0$, the energy concentration gets more skewed towards the eigenfunctions associated with large eigenvalues. Interestingly, after some point ($\omega_0>40$), the energy profile starts receding to the right, again.

Comparing the energy profiles with the generalization performance of these networks, we observe a clear pattern: the more energy is concentrated on the eigenfunctions associated with larger eigenvalues, the better the test peak signal-to-noise ratio (PSNR)\footnote{Correlations with other training metrics are shown in the Appendix.}. To understand this phenomenon, we can inspect the eigenfunctions of the NTK. As it is shown in \cref{fig:eigvec}, the eigenfunctions of the SIRENs with larger $\omega_0$ have higher frequency content. This means that increasing $\omega_0$ can have a positive effect in generalization as it yields a dictionary that better spans the medium-high frequency spectrum of natural images. Increasing $\omega_0$ too much, on the other hand, yields atoms with an overly high frequency content that cannot span the space of natural images efficiently, which explains their poor reconstruction performance of these networks.

Overall, we see how interpreting learnability as encoding efficiency of the NTK dictionary is a powerful analogy that can explain diverse phenomena, and lets us study under a single framework all sorts of INR questions, including those which might not be readily understood from \cref{thm:expressive}. This is a very powerful tool that we further exploit in \cref{sec:meta} to provide novel insights on the role of meta-learning in INRs.

\subsection{Meta-learning as dictionary learning}\label{sec:meta}

Prior work has shown that a correct initialization is key to ensure a good performance for INRs~\cite{tancik2020fourfeat, sitzmann2019siren}. In this sense, recent studies~\cite{tancik2020meta, sitzmann2020metasdf} have shown that the use of learned initialization, such as the ones obtained from meta-learning algorithms~\cite{finn2017model}, can significantly boost the performance of INRs. Indeed, initializing with meta-learned weights is one of the most effective remedies against the slow speed of convergence, and high sample complexity of INRs. However, while there has been recently great progress in understanding traditional forms of deep learning, the role of meta-learning on the inductive bias of deep neural networks remains largely overlooked. Interestingly, we now show how using the connections between INRs and signal dictionaries can help us understand meta-learning in general.

To do so, we follow the same experimental protocol as in \cref{sec:ntk_init}, where instead of computing the eigenfunctions of the NTK at a random initialization point, we linearize the INRs using \cref{eq:linearization} at the meta-learned weights, after pre-training on $1,000$ training images from CelebA using model agnostic meta-learning (MAML)~\cite{finn2017model, tancik2020meta}.

As it is shown in \cref{fig:energy_vs_eigval}, the meta-learned weights yield an eigenstructure that concentrates most of the energy of the target images on a subspace spanned by the eigenfunctions of the NTK with the largest eigenvalues, with almost no energy concentrated on the eigenfunctions corresponding to smaller eigenvalues. Therefore, training this INR starting from the meta-learned weights, results in a very fast speed of convergence and superior generalization capacity.

As it happened with the role of $\omega_0$ in \cref{sec:ntk_init}, visually inspecting the eigenfunctions of the NTK can help to build an intuition around this phenomenon. In this regard, recall that the CelebA dataset consists of a collection of face images. Strikingly, as illustrated in \cref{fig:eigvec}, the first eigenfunctions of the meta-learned NTK also look like faces. Clearly, meta-learning has reshaped the NTK so that the eigenfunctions have a large correlation with the target images. 

To the best of our knowledge, we are the first to report the NTK reshaping behavior of meta-learning, which cannot be obviously explained by first order approximation theories (cf. \cref{eq:linearization}). This result is remarkable for deep learning theory, as it helps us undertand the high-order dynamics of the NTK during training, which remains one of the main open questions of the field. Prior work had observed that standard training procedures change the first few eigenfunctions of the NTK so that they look like the target task~\cite{kopitkov2020neural, paccolat2021geometric, baratinNeuralAlignment2021, ortiz2021can}, but our observations in \cref{fig:energy_vs_eigval} and \cref{fig:eigvec} go one step further, and show that meta-learning has the potential to reshape a much larger space of the NTK dictionary by combining many tasks together, thus increasing the capacity of the NTK to efficiently encode a full meta-distribution of signals\footnote{In the Appendix we provide a more detailed experimental discussion.}. In this sense, we believe that that drawing parallels between classical dictionary learning algorithms~\cite{tovsic2011dictionary} and meta-learning can be a strong abstraction which can simplify the complexity of this problem, thus leading to a promising avenue for future research. Delving deeper in this connection will not only improve our understanding of meta-learning as a whole, but it can also provide new insights for the design of more efficient INRs by leveraging data to construct richer dictionaries.

\section{Related work}
INRs are a very active research field in computer vision, as they have become integral parts of many applications such as volume reconstruction~\cite{park2019deepsdf, mescheder2019occupancy}, scene rendering~\cite{sitzmann2019scene, niemeyer2020differentiable, mildenhall2020nerf}, texture synthesis~\cite{oechsle2019texture, henzler2020learning}, generative modelling~\cite{chen2019learning, chan2021pi, niemeyer2021giraffe}, or compression~\cite{dupont2021coin}. Recent architectural advances have focused mostly on improving the inference and training cost~\cite{Liu20neurips_sparse_nerf, park2021hypernerf, neff2021donerf, tancik2020meta, deng2021depth, sitzmann2020metasdf} of INRs, as well as on mitigating aliasing and improving generalization~\cite{barron2021mip, mehta2021modulated}.

The theory behind INRs has attracted much less attention, however. Similar to our work, Fathony \etal studied the expressive power of INRs, but their results only apply to their proposed multiplicative filter network architecture~\cite{fathony2021multiplicative}. Zheng \etal~\cite{zheng2021rethinking}, on the other hand, studied the trade-off between the rank and distance-preserving properties of different activation functions on INRs. Most notably, however, Tancik \etal~\cite{tancik2020fourfeat} showed that precoding the input of an infinitely wide ReLU-network with random Fourier features~\cite{rahimi2008random} is equivalent to using a tunable shift-invariant kernel method. This gives a static intuition of how randomly initialized FFNs circumvent the spectral bias~\cite{rahaman2018spectral}. Our work goes one step further, and builds upon recent empirical results~\cite{ortiz2021can} to extend this NTK analysis to finite networks with arbitrary weights and activations, e.g., meta-learned SIRENs. This allows us to investigate dynamical aspects of INRs such as the role of pre-training.


Interestingly, Kopitkov and Indelman~\cite{kopitkov2020neural} also used the visualization of the eigenfunctions of the NTK during training to understand other high-order training effects, such as the increase of alignment of the NTK with the target signal~\cite{kopitkov2020neural, paccolat2021geometric, baratinNeuralAlignment2021, ortiz2021can}. Our experiments use a similar approach to show the complex dictionary learning behaviour of MAML~\cite{finn2017model} in the NTK, which to the best of our knowledge is the first time this has been reported in the literature.

Connected to \cref{thm:expressive}, other works have also used a similar harmonic expansion to analyze certain effects in deep learning, such as the increase in roughness of the loss landscape with respect to the weights for deeper layers~\cite{mehmeti-gopel2021ringing}, or how skip-connections can avoid shattered gradients~\cite{balduzzi2017shattered}.

Finally, we note that most of our work draws inspirations from the classical signal processing literature~\cite{oppenheim1999discrete}. Some of our derivations are intimately connected to standard techniques in communications~\cite{proakis_salehi_2014}, and most of our analogies are founded on the field of signal representation~\cite{mallat1999wavelet} and dictionary design~\cite{tovsic2011dictionary}. Moving forward, delving deeper on these connections will be a fruitful avenue for future work.


\section{Conclusion}

In this paper, we have analyzed the expressive power and inductive bias of modern INRs from a unified perspective. We have shown that the expressive power of a large class of INRs with sinusoidal encodings is given by the space of linear combinations of the integer harmonics of their input mapping. This allows INRs to encode signals with an exponentially large frequency support using a few coefficients, but also cause them to suffer from imperfect signal recovery or aliasing. We have also seen that the inductive bias of INRs is captured by the ability of the empirical NTK to encode signals efficiently, and we have revealed that meta-learning can modify the NTK and increase this efficieny.

A natural future extension would be to generalize \cref{thm:expressive} to input mappings beyond sinusoids~\cite{barron2021mip, fathony2021multiplicative} or include normalization layers~\cite{ba2016layer}. Similarly, one could also study the effect of out-of-distribution data on the alignment with the NTK after meta-training.

Finally, it is important to note that our insights should be readily extensible to higher dimensional settings, although most of our practical results were performed using one or two-dimensional signals. In this sense, designing methods to visualize the eigenfunctions of the NTK in higher dimensions would clearly help to inform practitioners about the inductive bias of different INRs.

\section*{Acknowledgements}
We thank Alessandro Favero, Apostolos Modas, Seyed-Mohsen Moosavi-Dezfooli and Arun Venkitaraman for their fruitful discussions and feedback. This work has been partially supported by a GCP Research Credit Award.

\small
\bibliographystyle{ieee_fullname}
\bibliography{main}
\appendix
\onecolumn
\section*{Appendix}
\startcontents[sections]
\printcontents[sections]{l}{1}{\setcounter{tocdepth}{2}}
\clearpage

\section{Deferred proofs}
\subsection{Proof of Theorem 1}

We provide here the proof of \Cref{thm:expressive} which gives an explicit expression to the expressive power of INRs. However, before we delve deeper in this proof we will prove a few useful lemmas.

\subsubsection{Preliminary lemmas}

\begin{lemma}
Let $\{\bm\omega^{(1)}_k\in\R^D\}_{k\in\mathcal{K}}$ and $\{\bm\omega^{(2)}_j\in\R^D\}_{j\in\mathcal{J}}$, and $\{\phi^{(1)}_k\in\R\}_{k\in\mathcal{K}}$ and $\{\phi^{(2)}_j\in\R\}_{j\in\mathcal{J}}$ be two collections of frequency vectors and scalar phases, respectively, indexed by the sets $\mathcal{K},\mathcal{J}\subseteq \mathbb{N}$. Furthermore, let $\{\beta^{(1)}_k\in\R\}_{k\in\mathcal{K}}$ and $\{\beta^{(2)}_j\in\R\}_{j\in\mathcal{J}}$ be two sets of scalar coefficients and $\bm r\in\R^D$. Then,
\begin{align}
    \left(\sum_{k\in \mathcal{K}} \beta_{k}^{(1)} \cos\left(\inner{\bm\omega_{k}^{(1)}}{\bm r} + \phi_{k}^{(1)}\right)\right)
    \left(\sum_{j \in \mathcal{J}} \beta_{j}^{(2)} \cos\left(\inner{\bm\omega_{j}^{(2)}}{\bm r} + \phi_{j}^{(2)}\right)\right)
    =& \sum_{\bm\omega'\in\mathcal{D}} \Tilde{\beta}_{\bm\omega'} \cos(\inner{\bm\omega'}{\bm r} + \Tilde{\phi}_{\bm\omega'})
\end{align}
where 
\begin{equation}
    		\mathcal{D}\left(\left\{ \bm\omega_{k}^{(1)} \right\}_{k\in \mathcal{K}},\left\{ \bm\omega_{j}^{(2)} \right\}_{j \in \mathcal{J}}\right)=\left\{\bm\omega'= \bm\omega_{k}^{(1)} \pm \bm\omega_{j}^{(2)}\Bigg{|} k\in\mathcal{K}, j\in\mathcal{J}\right\}
\end{equation}
for some $\left\{ \Tilde{\phi}_{\bm\omega'}\in\R \,\Bigg{|}\,\bm\omega'\in\mathcal D\right\}$, $\left\{ \Tilde{\beta}_{\bm\omega'}\in\R\,\Bigg{|}\,\bm\omega'\in\mathcal D\right\}$.
\end{lemma}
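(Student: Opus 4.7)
The plan is to expand the product of the two trigonometric sums via distributivity, apply the elementary product-to-sum identity to each resulting cross term to convert a product of cosines into a sum of two cosines with frequencies that are the sum and the difference of the original frequencies, and finally regroup by frequency. The only nontrivial bookkeeping step is to collapse the possibly many cosines with identical frequency vector into a single cosine with a new amplitude and phase.

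First I would expand
\begin{equation*}
\Bigg(\sum_{k\in\mathcal{K}}\beta_k^{(1)}\cos(\inner{\bm\omega_k^{(1)}}{\bm r}+\phi_k^{(1)})\Bigg)\Bigg(\sum_{j\in\mathcal{J}}\beta_j^{(2)}\cos(\inner{\bm\omega_j^{(2)}}{\bm r}+\phi_j^{(2)})\Bigg) = \sum_{k\in\mathcal{K}}\sum_{j\in\mathcal{J}}\beta_k^{(1)}\beta_j^{(2)}\,\cos(\alpha_{k})\cos(\alpha_{j})
\end{equation*}
with $\alpha_k=\inner{\bm\omega_k^{(1)}}{\bm r}+\phi_k^{(1)}$ and $\alpha_j=\inner{\bm\omega_j^{(2)}}{\bm r}+\phi_j^{(2)}$. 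Then I would apply the identity $\cos(a)\cos(b)=\tfrac{1}{2}[\cos(a-b)+\cos(a+b)]$ to each summand, which yields
\begin{equation*}
\sum_{k,j}\frac{\beta_k^{(1)}\beta_j^{(2)}}{2}\Big[\cos\big(\inner{\bm\omega_k^{(1)}-\bm\omega_j^{(2)}}{\bm r}+\phi_k^{(1)}-\phi_j^{(2)}\big)+\cos\big(\inner{\bm\omega_k^{(1)}+\bm\omega_j^{(2)}}{\bm r}+\phi_k^{(1)}+\phi_j^{(2)}\big)\Big].
\end{equation*}
Every term already has a frequency vector of the form $\bm\omega_k^{(1)}\pm\bm\omega_j^{(2)}\in\mathcal{D}$, so the spectral support claim is immediate.

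Next I would regroup the double sum by frequency: for each $\bm\omega'\in\mathcal{D}$, collect all pairs $(k,j,\epsilon)\in\mathcal{K}\times\mathcal{J}\times\{+,-\}$ such that $\bm\omega_k^{(1)}+\epsilon\,\bm\omega_j^{(2)}=\bm\omega'$. This gives a finite sum of cosines, all sharing the same argument $\inner{\bm\omega'}{\bm r}$ but with different phases. The final step is to invoke the elementary identity that any finite sum $\sum_i A_i\cos(\theta+\psi_i)$ equals $R\cos(\theta+\Psi)$, where $R$ and $\Psi$ are obtained by writing each cosine as the real part of a complex exponential, summing in the complex plane, and reading off modulus and argument; explicitly, $R=|\sum_i A_i e^{\imath\psi_i}|$ and $\Psi=\arg(\sum_i A_i e^{\imath\psi_i})$. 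Defining $\tilde\beta_{\bm\omega'}$ and $\tilde\phi_{\bm\omega'}$ this way for each $\bm\omega'\in\mathcal{D}$ yields the claimed expression.

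The only subtlety is the bookkeeping when different $(k,j,\epsilon)$ triples collide on the same $\bm\omega'$; the complex-exponential trick above handles this cleanly without requiring any assumption on the $\bm\omega_k^{(1)}$ and $\bm\omega_j^{(2)}$. I do not expect any real obstacle, as the lemma is essentially a repeated application of the product-to-sum identity plus a regrouping argument.
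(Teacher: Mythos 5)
Your proof is correct and follows essentially the same route as the paper's: distribute the product, apply the product-to-sum identity $\cos(a)\cos(b)=\tfrac{1}{2}[\cos(a+b)+\cos(a-b)]$ termwise, and regroup by resulting frequency. If anything, you are slightly more careful than the paper, since you explicitly handle the case where several $(k,j,\pm)$ triples yield the same frequency vector $\bm\omega'$ by combining the phasors via $R=\bigl|\sum_i A_i e^{\imath\psi_i}\bigr|$ and $\Psi=\arg\bigl(\sum_i A_i e^{\imath\psi_i}\bigr)$, whereas the paper leaves this regrouping implicit in its final line.
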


\begin{proof}
\begin{align}
    &\left(\sum_{k\in \mathcal{K}} \beta_{k}^{(1)} \cos\left(\inner{\bm\omega_{k}^{(1)}}{\bm r} + \phi_{k}^{(1)}\right)\right)
    \left(\sum_{j \in \mathcal{J}} \beta_{j}^{(2)} \cos\left(\inner{\bm\omega_{j}^{(2)}}{\bm r} + \phi_{j}^{(2)}\right)\right) \notag \\
    =& \sum_{k\in \mathcal{K}} \sum_{j \in \mathcal{J}} \beta_{k}^{(1)} \beta_{j}^{(2)} \cos\left(\inner{\bm\omega_{k}^{(1)}}{\bm r} + \phi_{k}^{(1)}\right) \cos\left(\inner{\bm\omega_{j}^{(2)}}{\bm r} + \phi_{j}^{(2)}\right)  \notag \\
    =& \sum_{k\in \mathcal{K}} \sum_{j \in \mathcal{J}} \beta_{k}^{(1)} \beta_{j}^{(2)} \frac{1}{2} 
    \left( \cos\left( \inner{\bm\omega_{k}^{(1)}}{\bm r} + \inner{\bm\omega_{j}^{(2)}}{\bm r} + \phi_{k}^{(1)}+\phi_{j}^{(2)} \right) + 
    \cos\left( \inner{\bm\omega_{k}^{(1)}}{\bm r} - \inner{\bm\omega_{j}^{(2)}}{\bm r} + \phi_{k}^{(1)} - \phi_{j}^{(2)} \right) \right) \notag \\
    =& \sum_{k\in \mathcal{K}} \sum_{j \in \mathcal{J}} \beta_{k}^{(1)} \beta_{j}^{(2)} \frac{1}{2} 
    \left( \cos\left( \inner{\bm\omega_{k}^{(1)}+\bm\omega_{j}^{(2)}}{\bm r} + \phi_{k}^{(1)}+\phi_{j}^{(2)} \right) +
    \cos\left( \inner{\bm\omega_{k}^{(1)}-\bm\omega_{j}^{(2)}}{\bm r} + \phi_{k}^{(1)} - \phi_{j}^{(2)} \right) \right) \notag \\
    =& \sum_{\bm\omega'\in\mathcal{D}} \Tilde{\beta}_{\bm\omega'} \cos(\inner{\bm\omega'}{\bm r} + \Tilde{\phi}_{\bm\omega'})
\end{align}
\end{proof}

\begin{lemma}\label{lemma2}
Let $\{\bm\omega_j\in\R^D\}_{j\in\mathcal{J}}$ and $\{\phi_j\in\R\}_{j\in\mathcal{J}}$ be a collection of frequency vectors and scalar phases, respectively, indexed by the set $\mathcal{J}\subseteq \mathbb{N}$. Furthermore,  $\{\beta_j\in\R\}_{j\in\mathcal{J}}$ be a set of scalar coefficients, and let $k\in \mathbb{N}$ Then, 
\begin{equation}
    \left(\sum_{j\in \mathcal{J}} \beta_{j} \cos\left(\inner{\bm\omega_{j}}{\bm r} + \phi_{j}\right)\right)^k =  \sum_{\bm\omega'\in\mathcal{H}_k} \Tilde{\beta}_{\bm\omega'} \cos(\inner{\bm\omega'}{\bm r} + \Tilde{\phi}_{\bm\omega'}) \label{eq:lemma2}
\end{equation}

where
\begin{equation}
    \mathcal{H}_k\left(\left\{ \bm\omega_{j} \right\}_{j\in \mathcal{J}}\right)\subseteq \Tilde{\mathcal{H}}_k\left(\left\{ \bm\omega_{j} \right\}_{j\in \mathcal{J}}\right):=\left\{\bm\omega'= \sum_{j \in \mathcal{J}} c_j\bm\omega_{j} \Bigg{|} c_j\in\mathbb{Z} \wedge \sum_{j\in \mathcal{J}}|c_j| \leq k   \right\}
\end{equation}
\end{lemma}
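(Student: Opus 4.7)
The plan is to prove Lemma 2 by induction on the exponent $k$, using Lemma 1 as the product-to-sum workhorse in the inductive step.

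\textbf{Base case.} For $k=1$, the left-hand side of \eqref{eq:lemma2} is already in the required form, with frequencies indexed by $j\in\mathcal{J}$. Each such $\bm\omega_j$ is trivially an element of $\Tilde{\mathcal{H}}_1$: take $c_{j'}=\delta_{j'j}$, so $\sum_{j'}|c_{j'}|=1\leq k$. Hence $\mathcal{H}_1\subseteq\Tilde{\mathcal{H}}_1$ and the claim holds.

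\textbf{Inductive step.} Suppose the statement holds for some $k\geq 1$, i.e.\ the $k$-th power admits the expansion in \eqref{eq:lemma2} with frequencies in $\mathcal{H}_k\subseteq\Tilde{\mathcal{H}}_k$. Writing
\begin{equation*}
\left(\sum_{j\in \mathcal{J}} \beta_{j} \cos\left(\inner{\bm\omega_{j}}{\bm r} + \phi_{j}\right)\right)^{k+1}
= \left(\sum_{\bm\omega'\in\mathcal{H}_k} \Tilde{\beta}_{\bm\omega'} \cos(\inner{\bm\omega'}{\bm r} + \Tilde{\phi}_{\bm\omega'})\right)\left(\sum_{j\in \mathcal{J}} \beta_{j} \cos\left(\inner{\bm\omega_{j}}{\bm r} + \phi_{j}\right)\right),
\end{equation*}
I apply Lemma~1 with the first collection indexed by $\mathcal{H}_k$ and the second by $\mathcal{J}$. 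This yields a sum of cosines whose frequencies are of the form $\bm\omega'\pm\bm\omega_j$ with $\bm\omega'\in\mathcal{H}_k$ and $j\in\mathcal{J}$, which defines $\mathcal{H}_{k+1}$.

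\textbf{Bounding the integer combinations.} The key verification is that $\mathcal{H}_{k+1}\subseteq\Tilde{\mathcal{H}}_{k+1}$. By the inductive hypothesis any $\bm\omega'\in\mathcal{H}_k$ can be written as $\sum_{j\in\mathcal{J}} c_j\bm\omega_j$ with $c_j\in\mathbb{Z}$ and $\sum_{j\in\mathcal{J}}|c_j|\leq k$. Adding or subtracting a single base frequency $\bm\omega_{j_0}$ perturbs a single coefficient by $\pm 1$, producing integer coefficients $c_j'$ with
\begin{equation*}
\sum_{j\in\mathcal{J}} |c_j'| \;\leq\; \sum_{j\in\mathcal{J}} |c_j| + 1 \;\leq\; k+1,
\end{equation*}
by the triangle inequality. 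Hence every frequency in $\mathcal{H}_{k+1}$ belongs to $\Tilde{\mathcal{H}}_{k+1}$, closing the induction.

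\textbf{Main obstacle.} The calculation itself is straightforward once Lemma~1 is in hand; the only subtle point is the bookkeeping on the integer coefficients to show that the $\ell^1$-budget grows by at most one per multiplication. This is exactly what the triangle inequality gives, so no real difficulty arises. A minor stylistic care is required when handling the case where $\bm\omega'\pm\bm\omega_{j_0}$ happens to collapse to a previously-seen frequency (terms merge), but since the claim is only a \emph{subset} inclusion $\mathcal{H}_k\subseteq\Tilde{\mathcal{H}}_k$, such collisions are harmless.
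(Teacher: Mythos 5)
Your proof is correct and follows essentially the same strategy as the paper's: induction on $k$, with Lemma~1 converting the product of the $k$-th power and a single factor into a sum over $\bm\omega'\pm\bm\omega_j$, and the triangle inequality on the $\ell^1$-budget of the integer coefficients giving the inclusion $\mathcal{H}_{k+1}\subseteq\Tilde{\mathcal{H}}_{k+1}$. Your remark that collisions between frequencies are harmless because only a subset inclusion is asserted is a small but welcome clarification not spelled out in the paper.
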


Note that we will often use the notation $\mathcal{H}_k$ and $\Tilde{\mathcal{H}}_k$ instead of explicitly writing the dependence on the set $\left(\left\{ \bm\omega_{j} \right\}_{j\in \mathcal{J}}\right)$ when it is clear from the context. 

\begin{proof}
The statement trivially holds for $k=1$. Assume it also holds for $k$, then 

\begin{align}
    \left(\sum_{j\in \mathcal{J}} \beta_{j} \cos\left(\inner{\bm\omega_{j}}{\bm r} + \phi_{j}\right)\right)^{k+1} 
    &= \left(\sum_{j\in \mathcal{J}} \beta_{j} \cos\left(\inner{\bm\omega_{j}}{\bm r} + \phi_{j}\right)\right)^k \left(\sum_{j\in \mathcal{J}} \beta_{j} \cos\left(\inner{\bm\omega_{j}}{\bm r} + \phi_{j}\right)\right) \\
    &= \left( \sum_{\bm\omega'\in\mathcal{H}_k} \Tilde{\beta}_{\bm\omega'} \cos(\inner{\bm\omega'}{\bm r} + \Tilde{\phi}_{\bm\omega'}) \right) 
    \left(\sum_{j\in \mathcal{J}} \beta_{j} \cos\left(\inner{\bm\omega_{j}}{\bm r} + \phi_{j}\right)\right) \label{eq:byassumption}\\
    &= \sum_{\bm\omega'\in\mathcal{D}\left\{ \mathcal{H}_k, \left\{ \bm\omega_{j} \right\}_{j \in \mathcal{J}}  \right\}} {\beta}'_{\bm\omega'} \cos(\inner{\bm\omega'}{\bm r} + {\phi}'_{\bm\omega'}) \label{eq:bylemma}\\
    &= \sum_{\bm\omega'\in\mathcal{H}_{k+1}} {\beta}'_{\bm\omega'} \cos(\inner{\bm\omega'}{\bm r} + {\phi}'_{\bm\omega'}) \\    
\end{align}
where \cref{eq:byassumption} holds by assumption and \cref{eq:bylemma} holds because of the previous lemma. Moreover we have

\begin{align}
    \mathcal{H}_{k+1} 
    = {D}\left\{ \mathcal{H}_k, \left\{ \bm\omega_{i} \right\}_{i \in \mathcal{J}}  \right\}
    &=\left\{   \bm\omega'= \bm\omega_{h} \pm \bm\omega_{i} \Bigg{|} \bm\omega_h\in\mathcal{H}_k, i\in\mathcal{J}    \right\} \\
    &\subseteq \left\{   \bm\omega'= \sum_{j \in \mathcal{J}} c_j\bm\omega_{j} \pm \bm\omega_{i} \Bigg{|}  c_j\in\mathbb{Z} \wedge \sum_{j\in \mathcal{J}}|c_j| \leq k , i\in\mathcal{J}    \right\} \\
    &\subseteq \left\{   \bm\omega'= \sum_{j \in \mathcal{J}} c_j\bm\omega_{j}\Bigg{|}  c_j\in\mathbb{Z} \wedge \sum_{j\in \mathcal{J}}|c_j| \leq k+1  \right\} \\
\end{align}
So \cref{eq:lemma2} holds for $k+1$ as well. Then by induction \cref{eq:lemma2} holds $\forall k \in \mathbb{N}$
\end{proof}

\subsubsection{Main proof}

Recall that we are interested in understanding the expressive power of INR architectures that can be decomposed into a mapping function $\gamma:\R^D\to\R^T$ followed by a multilayer perceptron (MLP), with weights $\bm W^{(\ell)}\in\R^{F_{\ell-1}\times F_\ell}$, bias $\bm b^{(\ell)}\in\R^{F_\ell}$, and activation function $\rho^{(\ell)}:\R\to\R$, applied elementwise; at each layer $\ell=1,\dots, L-1$. That is, if we denote by $\bm z^{(\ell)}$ each layers post activation, most INR architectures compute
\begin{align}
    \bm z^{(0)} &= \gamma(\bm r), \nonumber \\
    \bm z^{(\ell)} &= \rho^{(\ell)}\left(\bm W^{(\ell)}\bm z^{(\ell-1)}+\bm b^{(\ell)}\right),\; \ell=1,\dots,L-1 \label{eq:INR_supp}  \\
    f_{\bm\theta}(\bm r) &= \bm W^{(L)}\bm z^{(L-1)} + \bm b^{(L)}. \nonumber
\end{align}
Based on this architecture we can prove the following theorem.

\begin{theorem*}
	Let $f_{\bm\theta}:\R^D\to\R$ be an INR of the form of \cref{eq:INR_supp} with $\rho^{(\ell)}(z)=\sum_{k=0}^K\alpha_k z^k$ for $\ell>1$. Furthermore, let $\bm\Omega=[\bm\Omega_0,\dots,\bm\Omega_{T-1}]^\top\in\R^{T\times D }$ and $\bm\phi\in\R^T$ denote the matrix of frequencies and vector of phases, respectively, used to map the input coordinate $\bm r\in\R^D$ to $\gamma(\bm r)=\sin(\bm\Omega \bm r + \bm \phi)$. This architecture can only represent functions of the form
	\begin{equation}
		f_\vtheta(r) = \sum_{\bm\omega'\in\mathcal{H}(\bm\Omega)}c_{\bm\omega'}\sin{(\langle\bm\omega', \bm r\rangle + \phi_{\bm\omega'})}, \label{eq:expressive_supp}
	\end{equation}
	where
	\begin{equation}
		\mathcal{H}(\bm\Omega)\subseteq\left\{\bm\omega'=\sum_{t=0}^{T-1} s_t \bm\Omega_t\; \Bigg{|}\; s_t\in\mathbb{Z} \wedge \sum_{t=0}^{T-1} |s_t| \leq K^{L-1} \right\}.
	\end{equation}
\end{theorem*}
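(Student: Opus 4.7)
My plan is to prove the theorem by induction on the layer index $\ell$, using the two preliminary lemmas as the main algebraic workhorses. Although those lemmas are stated in terms of cosines, the identity $\sin(x)=\cos(x-\pi/2)$ lets me treat sine-based sums identically, so I can freely transport their conclusions into our setting. The key invariant I will carry through the induction is: after the $\ell$-th activation, every component of $\bm z^{(\ell)}$ can be written as a finite linear combination of sinusoids whose frequencies lie in $\mathcal H_{K^{\ell}}(\bm\Omega)\subseteq\widetilde{\mathcal H}_{K^{\ell}}(\bm\Omega)$.

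The base case $\ell=0$ is immediate: each coordinate of $\bm z^{(0)}=\sin(\bm\Omega\bm r+\bm\phi)$ is a single sinusoid with frequency $\bm\Omega_t\in\mathcal H_1(\bm\Omega)$. For the inductive step, I would split the computation of $\bm z^{(\ell)}$ into two substeps. First, the affine map $\bm W^{(\ell)}\bm z^{(\ell-1)}+\bm b^{(\ell)}$ produces, coordinate-wise, a linear combination of the sinusoids inherited from $\bm z^{(\ell-1)}$ together with the constant $b^{(\ell)}_i$; since the zero frequency $\bm 0=\sum_t 0\cdot\bm\Omega_t$ satisfies $\sum_t|s_t|=0\le K^{\ell-1}$, this constant is already a sinusoid (of zero frequency) in $\mathcal H_{K^{\ell-1}}(\bm\Omega)$, so the pre-activation stays in this same set. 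Second, applying the polynomial $\rho^{(\ell)}(z)=\sum_{k=0}^{K}\alpha_k z^k$ term-by-term reduces to controlling each $(\,\cdot\,)^k$ with $k\le K$. Lemma~\ref{lemma2}, applied to the set $\{\bm\Omega_t\}$ rather than raw frequencies, tells me the output lies in $\mathcal H_{k\cdot K^{\ell-1}}(\bm\Omega)\subseteq\mathcal H_{K^{\ell}}(\bm\Omega)$, which closes the induction.

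Here the main technical care is in reapplying Lemma~\ref{lemma2}: the lemma is phrased as raising a sum of $\cos(\langle\bm\omega_j,\bm r\rangle+\phi_j)$'s to an integer power, producing a new family of sinusoids whose frequencies are integer combinations of the input $\bm\omega_j$'s with $\ell^1$-mass bounded by the exponent. In our inductive step the ``input'' sinusoids already have frequencies of the form $\sum_t s_t\bm\Omega_t$ with $\sum_t|s_t|\le K^{\ell-1}$, so composing yields integer combinations of $\bm\Omega_t$'s with $\ell^1$-mass bounded by $k\cdot K^{\ell-1}\le K^{\ell}$ — this is precisely the multiplicative compounding that produces the $K^{L-1}$ bound. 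Taking the final linear layer $f_{\bm\theta}(\bm r)=\bm W^{(L)}\bm z^{(L-1)}+\bm b^{(L)}$ is harmless by the same argument used for affine maps, giving an expression of the form \eqref{eq:expressive_supp} with frequencies in $\mathcal H(\bm\Omega)=\mathcal H_{K^{L-1}}(\bm\Omega)$.

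The subtle point I expect to have to nail down explicitly is the bookkeeping of the bias constants across many layers and the bound $k\cdot K^{\ell-1}\le K^{\ell}$ rather than something looser like $K^{k(\ell-1)}$. Once one recognizes that the $\ell^1$-norm of the integer-coefficient vector is the right ``complexity measure'' to track, and that Lemma~\ref{lemma2} multiplies this norm by the polynomial degree, the exponential dependence on depth $L$ falls out immediately. Everything else — the closure under affine maps, absorption of biases as zero-frequency components, and translation between sines and cosines — is routine and can be deferred to a short remark before the induction is carried out.
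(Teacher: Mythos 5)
Your plan is correct and follows essentially the same route as the paper: induction on the layer index, absorbing the affine map and biases as phase/amplitude changes and a zero-frequency term, and using Lemma~\ref{lemma2} together with the observation that the $\ell^1$-mass of the integer coefficient vector over $\{\bm\Omega_t\}$ compounds multiplicatively by a factor of $K$ at each polynomial activation, which is precisely the intermediate inner induction the paper carries out in Eq.~\eqref{eq:layerlpowerk}. The only place you are slightly terser than the paper is in justifying the bound $k\cdot K^{\ell-1}$ on the $\ell^1$-mass after re-expanding $\sum_j c_j\bm\omega'_j$ in the $\bm\Omega_t$ basis, which the paper proves explicitly via a nested induction on $k$ and a triangle inequality, but the idea you name is exactly that one.
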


\begin{proof}
We will prove the statement by induction. To that end, let us denote the preactivation vector at each layer as $\bm v^{(\ell)}$, i.e. $\bm z^{(\ell)} = \rho^{(\ell)} \left( \bm v^{(\ell)}  \right)$. We will first derive the expressions for the base case.

\paragraph{Base case} Consider the preactivation of a node at the first layer of the neural network for any mapping of the form in \cref{eq:INR_supp}. Then
\begin{equation}
    \bm v^{(1)}_j =  \bm W^{(1)}_{j }\gamma(\bm r) = \sum_{t=0}^{T-1} b_{tj} \cos{(\inner{\bm\Omega_t}{\bm r} + \phi_{tj})}
\end{equation}
with some $b_{tj}\in\R$ and $\phi_{tj}\in\R$ depending on the first layer weights connected to that node. Also note that interchanging sines with cosines only affects the phase terms.

Therefore, using the result of Lemma \ref{lemma2}, and after applying the activation function, the output of each node at the first layer is given by
\begin{align}
    \bm z^{(1)}_j = \rho^{(1)} \left( \bm v^{(1)}_j  \right)
    = \sum_{k=0}^K\alpha_k \left(\bm v^{(1)}_j\right)^k
    =& \sum_{k=0}^K\alpha_k \left( \sum_{t=0}^{T-1} b_{tj} \cos{(\inner{\bm\Omega_t}{\bm r} + \phi_{tj})} \right)^k \\
    =& \sum_{k=0}^K\alpha_k \sum_{\bm\omega_k'\in\mathcal{H}_k} {\beta}_{\bm\omega_k'} \cos(\inner{\bm\omega_k'}{\bm r} + {\phi}_{\bm\omega_k'}) \\
    =& \sum_{\bm\omega_k\in\mathcal{H}_K'} \Tilde{\beta}_{\bm\omega_k} \cos(\inner{\bm\omega_k}{\bm r} + \Tilde{\phi}_{\bm\omega_k})
\end{align}
where  $\mathcal{H}_K' := \bigcup\limits_{j=1}^K \mathcal{H}_j$ and we use the definitions of $\mathcal{H}_k$ and $\Tilde{\mathcal{H}}_k$ in Lemma \ref{lemma2}. Therefore, since $\forall k  \mathcal{H}_k \subseteq \Tilde{\mathcal{H}}_k\; $ by construction, and  $ \forall j \leq k $  $ \mathcal{H}_j \subseteq \Tilde{\mathcal{H}}_k $; then it holds that $\mathcal{H}_K' \subseteq \Tilde{\mathcal{H}}_K$, i.e.,
\begin{equation}
	\mathcal{H}_K' \subseteq \Tilde{\mathcal{H}}_K = \left\{\bm\omega'=\sum_{t=0}^{T-1} s_t \bm\Omega_t\; \Bigg{|}\; s_t\in\mathbb{Z} \wedge \sum_{t=0}^{T-1}|s_t| \leq K^{} \right\}.
\end{equation}

\paragraph{Induction step} Assume the output of the nodes at layer $\ell$ satisfy the following expression:
\begin{equation}
    \bm z_j^{(\ell)} = \sum_{\omega'\in\mathcal{H}^{(\ell)}(\bm\Omega)}c_{\bm\omega', j}\sin{(\inner{\bm\omega'}{\bm r} + \phi_{\bm\omega', j})} 
\end{equation}
where
\begin{equation}
   \mathcal{H}^{(\ell)} \subseteq \Tilde{\mathcal{H}}_{K^{\ell}} = \left\{\bm\omega'=\sum_{t=0}^{T-1} s_t \bm\Omega_t\; \Bigg{|}\; s_t\in\mathbb{Z} \wedge \sum_{t=0}^{T-1}|s_t| \leq K^{\ell} \right\}. 
\end{equation}
Then, the preactivation of any node at the $(\ell+1)^{th}$ layer can be expressed as:
\begin{equation}
    v_j^{(\ell+1)} = \sum_{\bm\omega'\in\mathcal{H}^{(\ell)}(\bm\Omega)}b_{\bm\omega', j}\sin{(\inner{\bm\omega'}{\bm r} + \Tilde{\phi}_{\bm\omega', j})} 
\end{equation}
since the sum of cosines with the same frequency only result in a cosine with the same frequency but with a modified phase and amplitude. Hence, after applying the activation function the output of the $j^{th}$ node at the $(\ell+1)^{th}$ layer can be written as:
\begin{align}
    \bm z^{(\ell+1)}_j = \rho^{(\ell+1)} \left( \bm v^{(\ell+1)}_j  \right)
    = \sum_{k=0}^K\alpha_k \left(\bm v^{(\ell+1)}_j\right)^k
    =& \sum_{k=0}^K\alpha_k \left( \sum_{\bm\omega'\in\mathcal{H}^{(\ell)}(\bm\Omega)}b_{\bm\omega', j}\sin{(\inner{\bm\omega'}{\bm r} + \Tilde{\phi}_{\bm\omega', j})}
    \right)^k \label{eq:activationlplus1}
\end{align}
Let us inspect the term $
    \left( \sum_{\bm\omega'\in\mathcal{H}^{(\ell)}(\bm\Omega)}b_{\bm\omega', j}\sin{(\inner{\bm\omega'}{\bm r}+ \Tilde{\phi}_{\bm\omega', j})}
    \right)^k
$. Instead of directly applying \cref{lemma2}, we will leverage the fact that all the frequencies $\bm\omega'\in\mathcal{H}^{(\ell)}$ share a similar structure. More precisely, they all can be represented as a sum of the frequencies in the set $\bm\Omega$. To that end, let us show the following intermediate result:
\begin{equation}
    \left( \sum_{\bm\omega'\in\mathcal{H}^{(\ell)}(\bm\Omega)}b_{\bm\omega', j}\sin{(\inner{\bm\omega'}{\bm r} + \Tilde{\phi}_{\bm\omega', j})}
    \right)^k
    = \sum_{\bm\omega'\in\mathcal{H}_k^{(\ell)}(\bm\Omega)}\Tilde{b}_{\bm\omega', j}\sin{(\inner{\bm\omega'}{\bm r} + \Tilde{\Tilde{\phi}}_{\bm\omega', j})} 
    \label{eq:layerlpowerk}
\end{equation}
where $\mathcal{H}_k^{(\ell)} \subseteq \Tilde{\mathcal{H}}_{kK^{\ell}}$. The base case for $k=1$ holds trivially. Now assume \cref{eq:layerlpowerk} holds for $k$, then 
\small
\begin{align}
    \left( \sum_{\bm\omega'\in\mathcal{H}^{(\ell)}(\bm\Omega)}b_{\bm\omega', j}\sin{(\inner{\bm\omega'}{\bm r} + \Tilde{\phi}_{\bm\omega', j})}
    \right)^{k+1}
    =&  \left( \sum_{\bm\omega'\in\mathcal{H}^{(\ell)}(\bm\Omega)}b_{\bm\omega', j}\sin{(\inner{\bm\omega'}{\bm r} + \Tilde{\phi}_{\bm\omega', j})}
    \right)^{k}  \left( \sum_{\bm\omega'\in\mathcal{H}^{(\ell)}(\bm\Omega)}b_{\bm\omega', j}\sin{(\inner{\bm\omega'}{\bm r} + \Tilde{\phi}_{\bm\omega', j})}
    \right) \\
    =& \left( \sum_{\bm\omega'\in\mathcal{H}_k^{(\ell)}(\bm\Omega)}\Tilde{b}_{\bm\omega', j}\sin{(\inner{\bm\omega'}{\bm r} + \Tilde{\Tilde{\phi}}_{\bm\omega', j})} \right) \left( \sum_{\bm\omega'\in\mathcal{H}^{(\ell)}(\bm\Omega)}b_{\bm\omega', j}\sin{(\inner{\bm\omega'}{\bm r} + \Tilde{\phi}_{\bm\omega', j})}
    \right) \\
    =& \sum_{\bm\omega'\in \mathcal{D}\left\{ \mathcal{H}_k^{(\ell)}, \mathcal{H}^{(\ell)}   \right\} }\Tilde{\Tilde{b}}_{\bm\omega', j}\sin{(\inner{\bm\omega'}{\bm r} + \Tilde{\Tilde{\Tilde{\phi}}}_{\bm\omega', j})} 
\end{align} 
\normalsize
where last equality holds because of \cref{lemma2} and we have:
\begin{align}
    \mathcal{D}\left\{ \mathcal{H}_k^{(\ell)}, \mathcal{H}^{(\ell)}   \right\}
    &= \left\{ \bm\omega_1 \pm \bm\omega_2 \Bigg{|} \bm\omega_1 \in \mathcal{H}_k^{(\ell)}, \bm\omega_2 \in \mathcal{H}^{(\ell)}
    \right\} \\
    & \subseteq \left\{ \bm\omega_1 \pm \bm\omega_2 \Bigg{|} \bm\omega_1 \in \mathcal{\Tilde{H}}_{kK^{\ell}} , \bm\omega_2 \in \Tilde{\mathcal{H}}_{K^{\ell}}
    \right\} \\
    &= \left\{ \sum_{t=0}^{T-1} s_t^{(1)} \bm\Omega_t \pm\sum_{t=0}^{T-1} s_t^{(2)}  \bm\Omega_t  \Bigg{|} \sum_{t=0}^{T-1}|s_t^{(1)} | \leq {kK^{\ell}} , \sum_{t=0}^{T-1}|s_t^{(2)} | \leq {K^{\ell}}
    \right\} \\
    &= \left\{ \sum_{t=0}^{T-1} \left(s_t^{(1)} \pm s_t^{(2)} \right) \bm\Omega_t  \Bigg{|} \sum_{t=0}^{T-1}|s_t^{(1)} | \leq {kK^{\ell}} , \sum_{t=0}^{T-1}|s_t^{(2)} | \leq {K^{\ell}}
    \right\} \\
    &\subseteq \left\{ \sum_{t=0}^{T-1} s_t' \bm\Omega_t  \Bigg{|} \sum_{t=0}^{T-1}|s_t' | \leq {(k+1)K^{\ell}} 
    \right\} = \Tilde{\mathcal{H}}_{(k+1)K^{\ell}}\\        
\end{align}
where the last line follows from triangle inequality. This proves our intermediate result in \cref{eq:layerlpowerk}. 

Now, let us use this result to complete the proof of the inductive step. In particular, we can now write \cref{eq:activationlplus1} as
\begin{align}
    \bm z^{(\ell+1)}_j &= \sum_{k=0}^K\alpha_k \sum_{\bm\omega'\in \mathcal{H}_k^{(\ell)}(\bm\Omega)}\Tilde{b}_{\bm\omega', j, k}\sin{(\inner{\bm\omega'}{\bm r} + \Tilde{\Tilde{\phi}}_{\bm\omega', j, k})} \\
    &= \sum_{\bm\omega'\in\mathcal{H}^{(\ell+1)}(\bm\Omega)}c_{\bm\omega', j}\sin{(\inner{\bm\omega'}{\bm r} + \phi_{\bm\omega', j})} 
\end{align}
where $\mathcal{H}^{(\ell+1)} := \bigcup\limits_{k=1}^K \mathcal{H}_k^{(\ell)} \subseteq \bigcup\limits_{k=1}^K \Tilde{\mathcal{H}}_{kK^{\ell}} \subseteq \Tilde{\mathcal{H}}_{KK^{\ell}} 
= \Tilde{\mathcal{H}}_{K^{\ell+1}}$. This sequence of inclusions concludes the proof.

\end{proof}
\clearpage

\subsection{Two-layer SIREN example}
\begin{example*}
    Let $f_\vtheta$ be a three-layer SIREN defined as \footnote{Note that we have omitted the bias terms to simplify the notation. These biases only change the phase terms in the sinusoids of the sum. }
    \begin{equation}
        f_\vtheta(r)={\bm w^{(2)}}^\top\sin\left(\mW^{(1)}\sin\left( \bm\Omega r \right) \right),
    \end{equation}
    where $r \in \R$, $\bm\Omega\in\R^{T}$, $\mW^{(1)}\in\R^{F \times T}$, and $\bm w^{(2)}\in\R^{F}$. The output of this network can equivalently be represented as
    \begin{equation}
    	f_\vtheta(r) = \sum_{m=0}^{F-1} \sum_{s_1, \dots, s_T=-\infty}^{\infty} c_{m,s_1,\dots,s_T}\sin{ \left( \left({\sum_{t=0}^{T-1}   s_t \omega_t}\right) {r} \right)},\label{eq:bessel_supp}
    \end{equation}
    where $\omega_t^\top \in \R^D$ denotes the $t^{th}$ row of $\Omega$,
    \begin{equation}
    	c_{m,s_1,\dots,s_T} = \left(\prod_{t=0}^{T-1} J_{s_t}\left(W^{(1)}_{m,t}\right)\right) w^{(2)}_m,\label{eq:coeffs_siren_supp}
    \end{equation}
    and $J_s$ denotes the Bessel function of first kind of order $s$.
\end{example*}

\begin{proof}
    As we have discussed before, the first layer in SIREN plays the role of the frequency mapping, i.e.
    \begin{equation}
        \bm z^{(0)} = \sin \left(\bm W^{(0)} \bm r  \right)  = \sin(\bm\Omega \bm r).
    \end{equation}
    Hence the input of a node at the next layer is a linear combination of sinusoids at mapping frequencies. The output of a node at second layer can be written as: 
    
    \begin{align}
	z_m^{(1)} &= \sin\left( \bW^{(1)}_{m,:} \sin\left( \bm\Omega r  \right)  \right) \label{eq:z_k1} \\
	 &= \sin\left(\sum_{t=0}^{T-1} \bW^{(1)}_{m,t} \sin\left( {\omega_t}{r}  \right)  \right) \\	
	 &= \operatorname{Im} \left\{ \exp{ \left( j \left(\sum_{t=0}^{T-1} \bW^{(1)}_{m,t} \sin\left( {\omega_t}{r}  \right)  \right)  \right) }  \right\} \\
	 &= \operatorname{Im} \left\{ \prod_{t=0}^{T-1}  \exp{ \left( j \bW^{(1)}_{m,t} \sin\left( {\omega_t}{r}  \right)  \right) }  \right\} \\
	 &= \operatorname{Im} \left\{  \prod_{t=0}^{T-1} \sum_{s_t=-\infty}^{\infty} J_{s_t}(\bW^{(1)}_{m,t}) \exp{ \left( j s_t  {\omega_t}{r}   \right) }  \right\} \label{eq:zk_bessel}\\
	 &= \operatorname{Im} \left\{ \sum_{s_0=-\infty}^{\infty} \dots \sum_{s_{T-1}=-\infty}^{\infty}  \prod_{t=0}^{T-1} J_{s_t}(\bW^{(1)}_{m,t}) \exp{ \left( j s_t  {\omega_t}{r}   \right) }  \right\} \\
	 &=\sum_{s_1, \dots, s_T=-\infty}^{\infty}
	 \operatorname{Im} \left\{  \prod_{t=0}^{T-1} J_{s_t}(\bW^{(1)}_{m,t}) \exp{ \left( j s_t {\omega_t}{r}   \right) }  \right\} \\
	 &= \sum_{s_1, \dots, s_T=-\infty}^{\infty} \operatorname{Im} \left\{  \left( \prod_{t=0}^{T-1} J_{s_t}(\bW^{(1)}_{m,t}) \right) \exp{ \left( j \sum_{t=0}^{T-1} s_t {\omega_t}{r} \right) }  \right\} \\
	 &= \sum_{s_1, \dots, s_T=-\infty}^{\infty}  \left( \prod_{t=0}^{T-1} J_{s_t}(\bW^{(1)}_{m,t}) \right) \operatorname{Im} \left\{   \exp{ \left( j \sum_{t=0}^{T-1} s_t {\omega_t}{r} \right) }  \right\} \\	 
	 &= \sum_{s_1, \dots, s_T=-\infty}^{\infty} \left( \prod_{t=0}^{T-1} J_{s_t}(\bW^{(1)}_{m,t}) \right) \sin{ \left( \sum_{t=0}^{T-1} s_t {\omega_t}{r} \right) } \label{eq:z_kend}
     \end{align}
     where $J_n(\cdot)$ represents the Bessel function of the first kind of order $n$ and   (\ref{eq:zk_bessel}) follows from the Fourier series expansion of $\exp(j\beta\sin(\omega_0x))$:
    \begin{equation}
	    \exp(j\beta\sin(\omega_0x)) = \sum_{n= -\infty}^{\infty} J_n(\beta)\exp{(j n \omega_0x)}. \label{eq:FSexpansionofexpjsin}
    \end{equation}
    Therefore, the output of the neural network can be written as:
    \begin{align}
        f_\vtheta(r)={\bm w^{(2)}}^\top z^{(1)}  = \sum_{m=0}^{F-1} {\bm w_m^{(2)}} z_m^{(1)} 
        =& \sum_{m=0}^{F-1} {\bm w_m^{(2)}}
        \sum_{s_1, \dots, s_T=-\infty}^{\infty} \left( \prod_{t=0}^{T-1} J_{s_t}(\bW^{(1)}_{m,t}) \right) \sin{ \left( \sum_{t=0}^{T-1} s_t {\omega_t}{r} \right) } \\
        =& \sum_{m=0}^{F-1} 
        \sum_{s_1, \dots, s_T=-\infty}^{\infty} {\bm w_m^{(2)}} \left( \prod_{t=0}^{T-1} J_{s_t}(\bW^{(1)}_{m,t}) \right) \sin{ \left( \sum_{t=0}^{T-1} s_t {\omega_t}{r} \right) }
    \end{align}
    
\end{proof}

\clearpage
\section{Imperfect recovery}
\subsection{Experimental details}

For the experiment in Section 4.1, we train an FFN with four fully connected layers: three hidden layers with dimension $256$ followed by an output layer of dimension $C=3$. All the networks presented in Fig.2 are trained for 2000 iterations with Adam optimizer \cite{kingma2014adam}. The training image has the size of $512\times512$ and we use all the available pixels during training.

\subsection{Additional experiments}
We further demonstrate the imperfect recovery phenomenon with INRs for different networks and configurations. In \cref{fig:siren_imperfect}, we present the results for SIREN~\cite{sitzmann2019siren}, where the first layer of the form $\bm z^{(0)} = \sin \left(\omega_0 (\bm W^{(0)} \bm r + \bm b^{(0)}) \right)$ can be considered as the input mapping $\gamma(\bm r)$. For a fair comparison with FFNs and to better illustrate the strong dependence of the learned representation on the chosen mapping, we do not perform any updates on the parameters of the initial layer, i.e., $\bm W^{(0)}$ and $\bm b^{(0)}$, during training. The initialization of these parameters ensures a similar mapping to that of FFN presented in Figure 2 of the main text, i.e. two single frequency mappings with frequencies $f_0=1$ and $f_0=0.5$ followed by a rich mapping. As for the rest of the architecture, we use the same number of layers and training strategy. As you can see in \cref{fig:siren_imperfect}, initializing the SIREN with this $\gamma(\bm r)$ results also in a set $\mathcal{H}(\bm \Omega)\subseteq\{2\pi k | k\in\mathbb{Z}\}$, which leads to a very imperfect recovery with the reconstruction looking aliased in the spatial domain.

\def \subfigsizeimrecsiren{2.8cm}
\def \textboxsizeone{2.8cm}
\begin{figure}[h!]
\centering
\begin{subfigure}{\subfigsizeimrecsiren}
    \centering 
    \includegraphics[width=\subfigsizeimrecsiren]{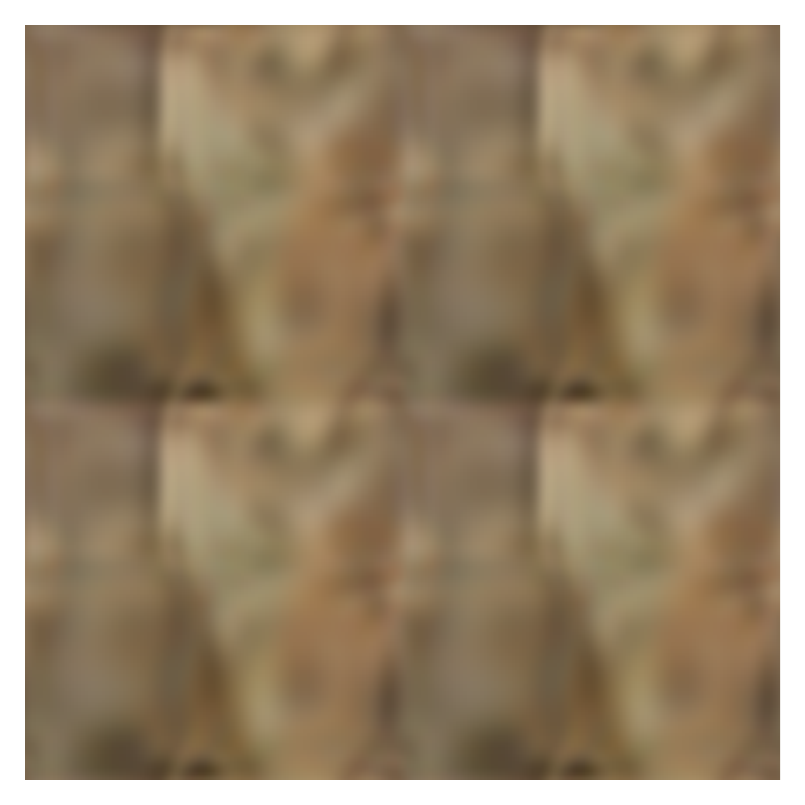} \\
    \includegraphics[width=\subfigsizeimrecsiren]{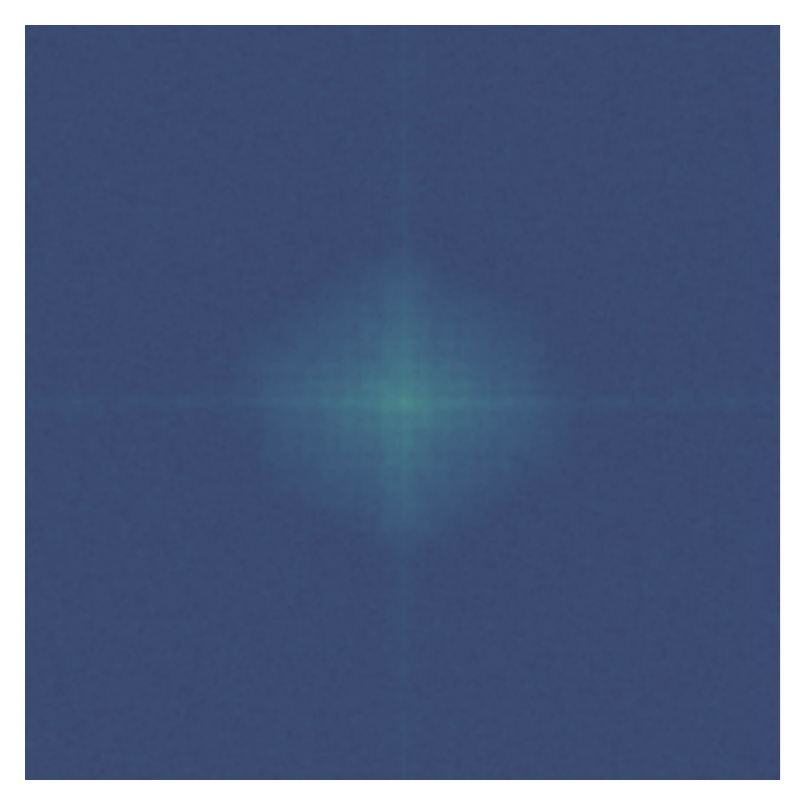} \\
    \includegraphics[width=\subfigsizeimrecsiren]{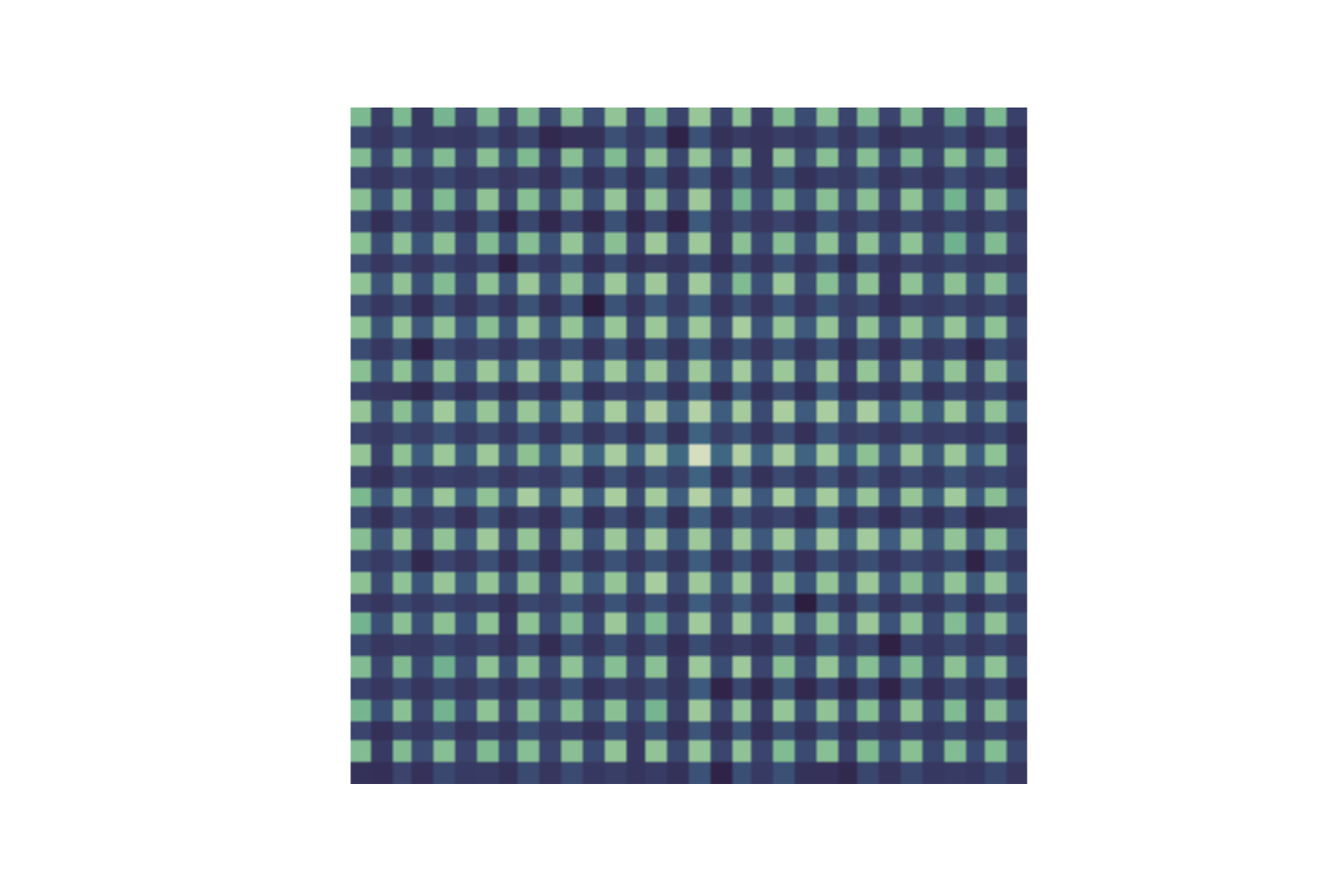} \\
    \parbox{\textboxsizeone}{\centering \small (a) SIREN ($\omega_0=1$) \\ \footnotesize{$\bm W^{(0)} = I$}}
\end{subfigure}
\begin{subfigure}{\subfigsizeimrecsiren}
    \centering 
    \includegraphics[width=\subfigsizeimrecsiren]{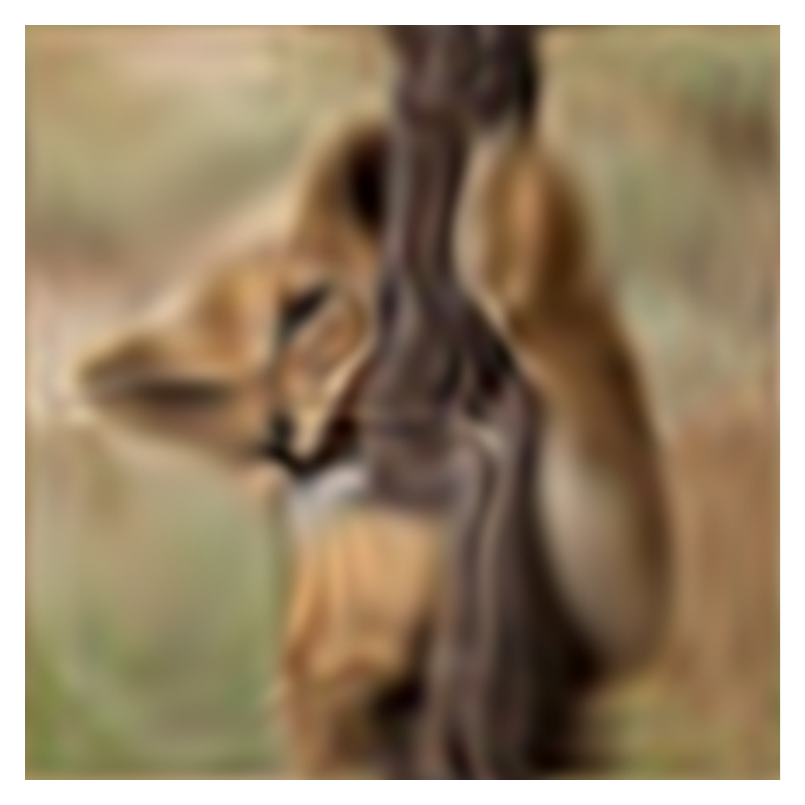} \\
    \includegraphics[width=\subfigsizeimrecsiren]{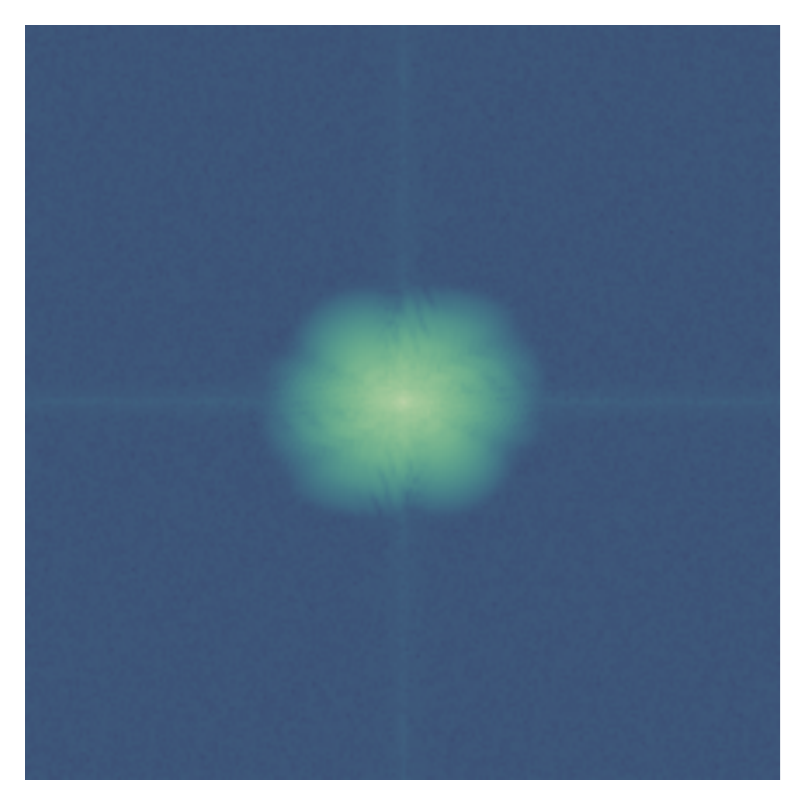} \\
    \includegraphics[width=\subfigsizeimrecsiren]{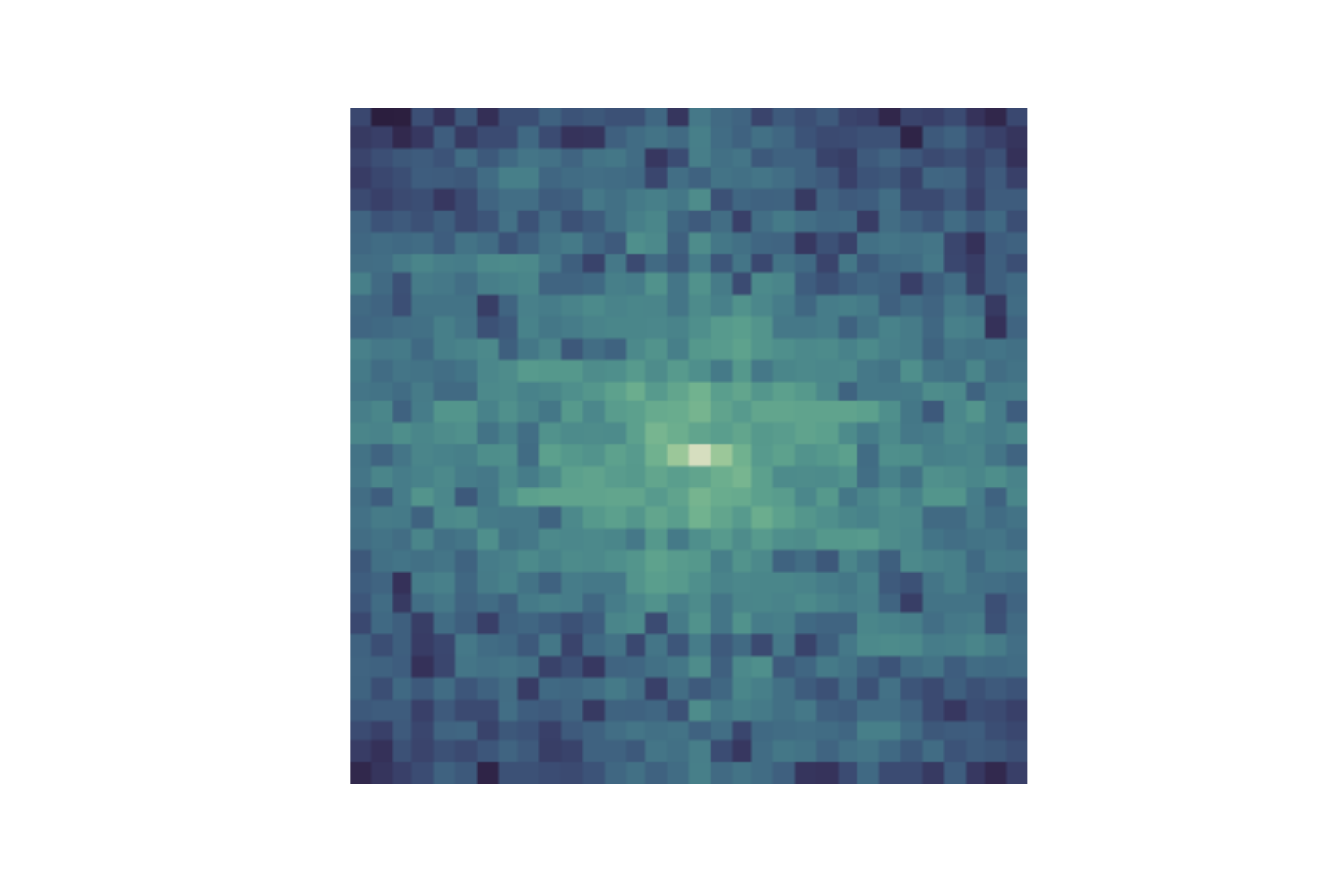} \\
    \parbox{\textboxsizeone}{\centering \small (b) SIREN ($\omega_0=1$) \\ \footnotesize{$\bm W^{(0)} = 0.5 \times I$}}
\end{subfigure}
\begin{subfigure}{\subfigsizeimrecsiren}
    \centering 
    \includegraphics[width=\subfigsizeimrecsiren]{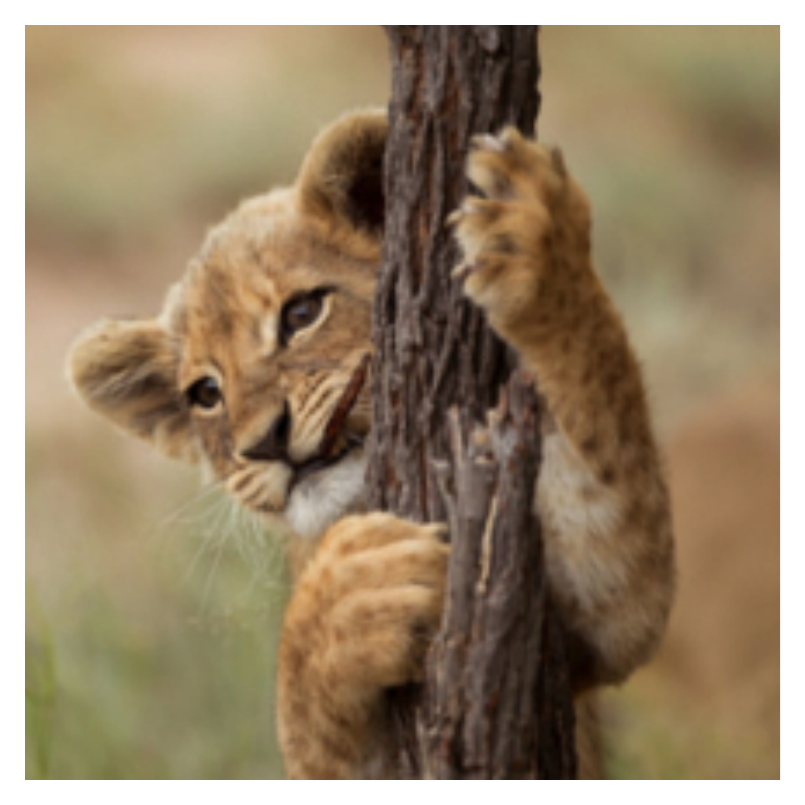} \\
    \includegraphics[width=\subfigsizeimrecsiren]{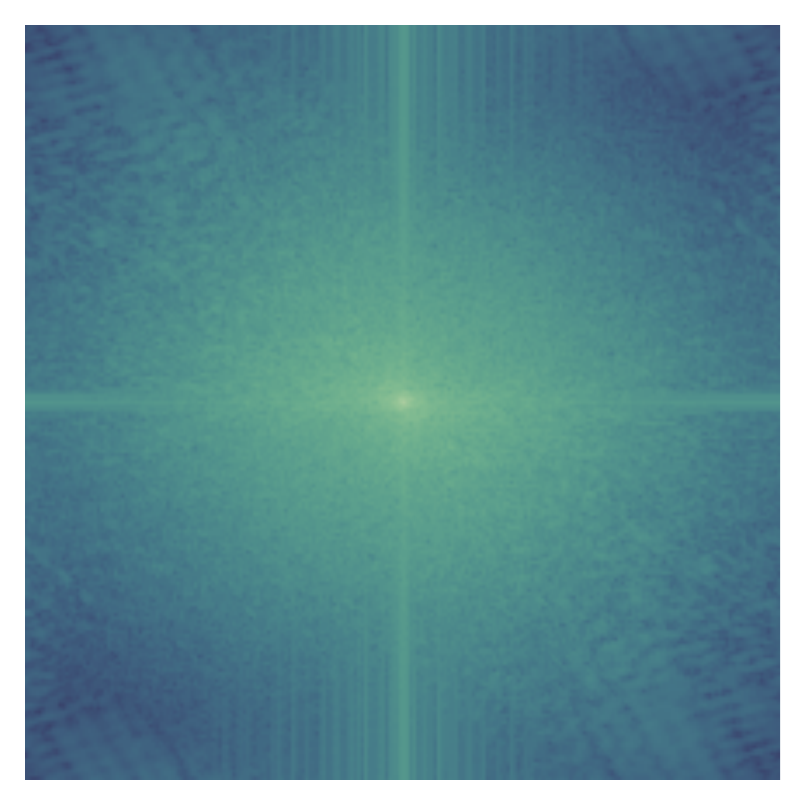} \\
    \includegraphics[width=\subfigsizeimrecsiren]{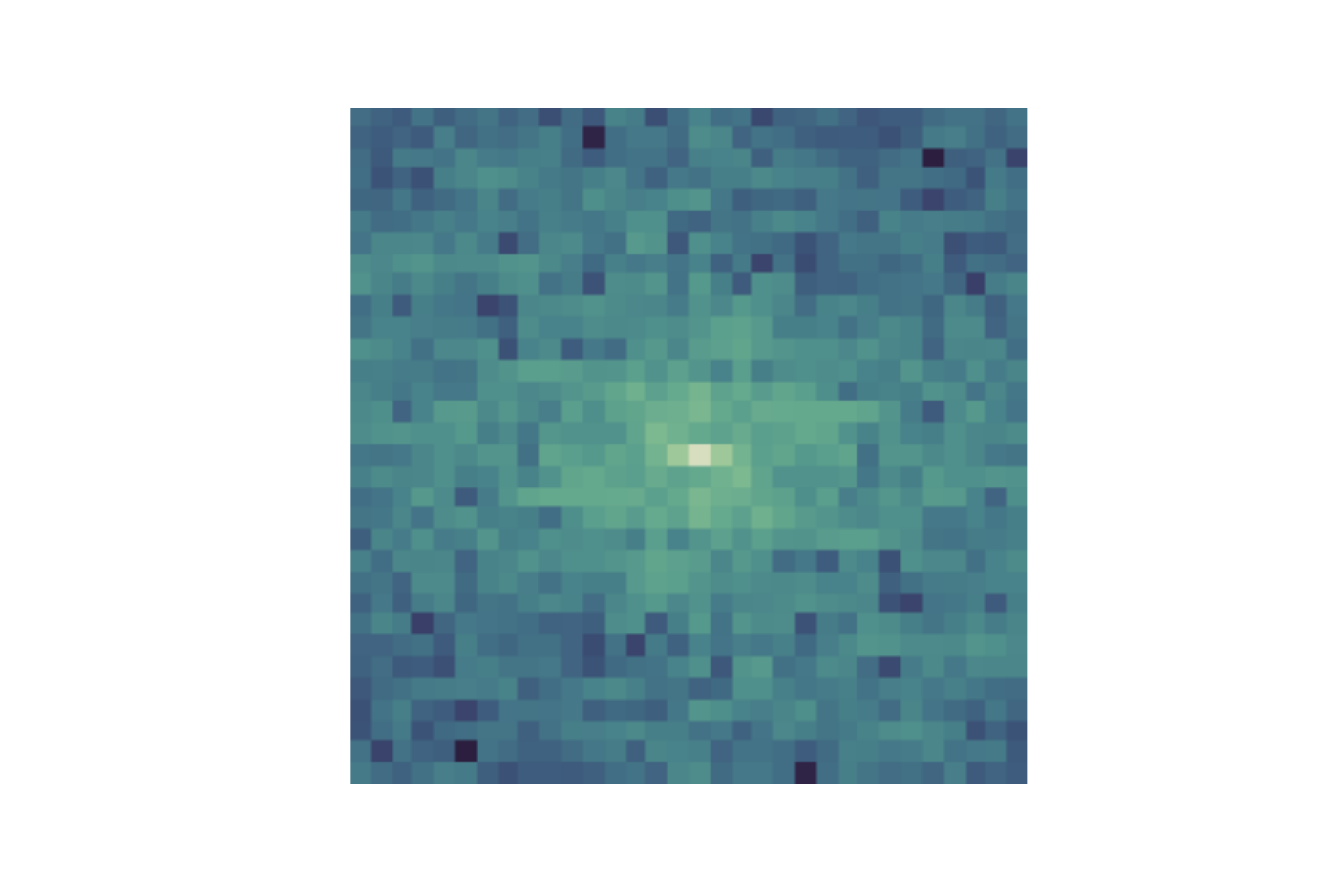} \\
    \parbox{\textboxsizeone}{\centering \small (c) SIREN ($\omega_0=30$) \\ \footnotesize{$\bm W^{(0)} \in \R^{256 \times 2}$}}
\end{subfigure}
\begin{subfigure}{\subfigsizeimrecsiren}
    \centering 
    \includegraphics[width=\subfigsizeimrecsiren]{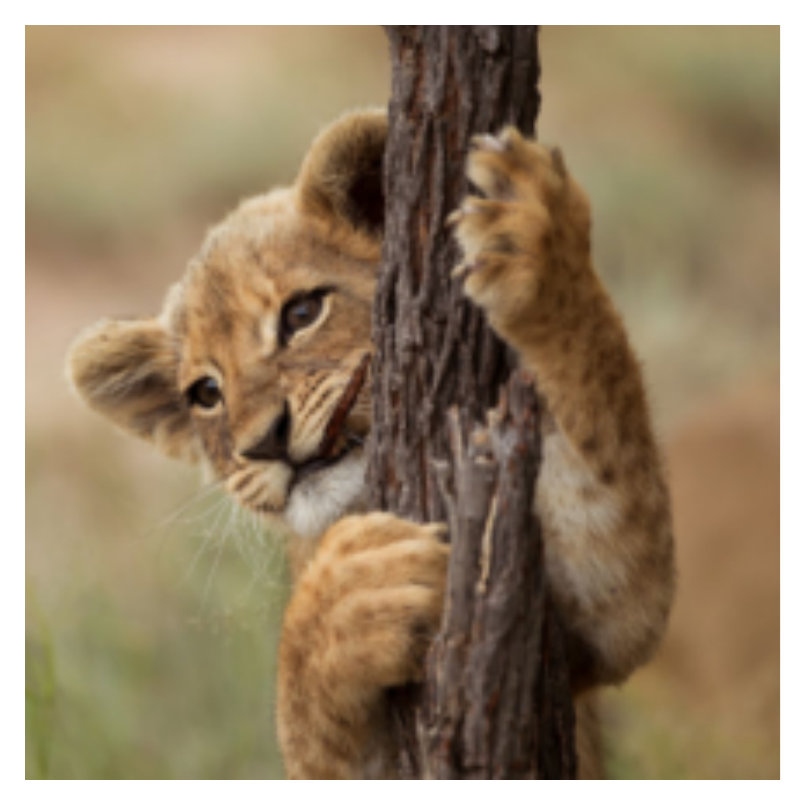} \\
    \includegraphics[width=\subfigsizeimrecsiren]{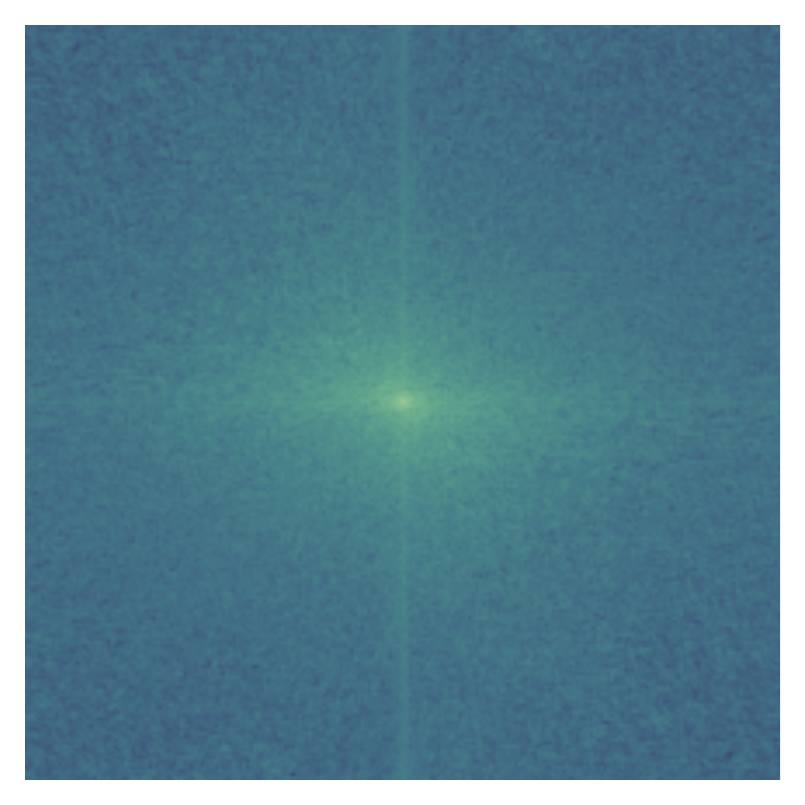} \\
    \includegraphics[width=\subfigsizeimrecsiren]{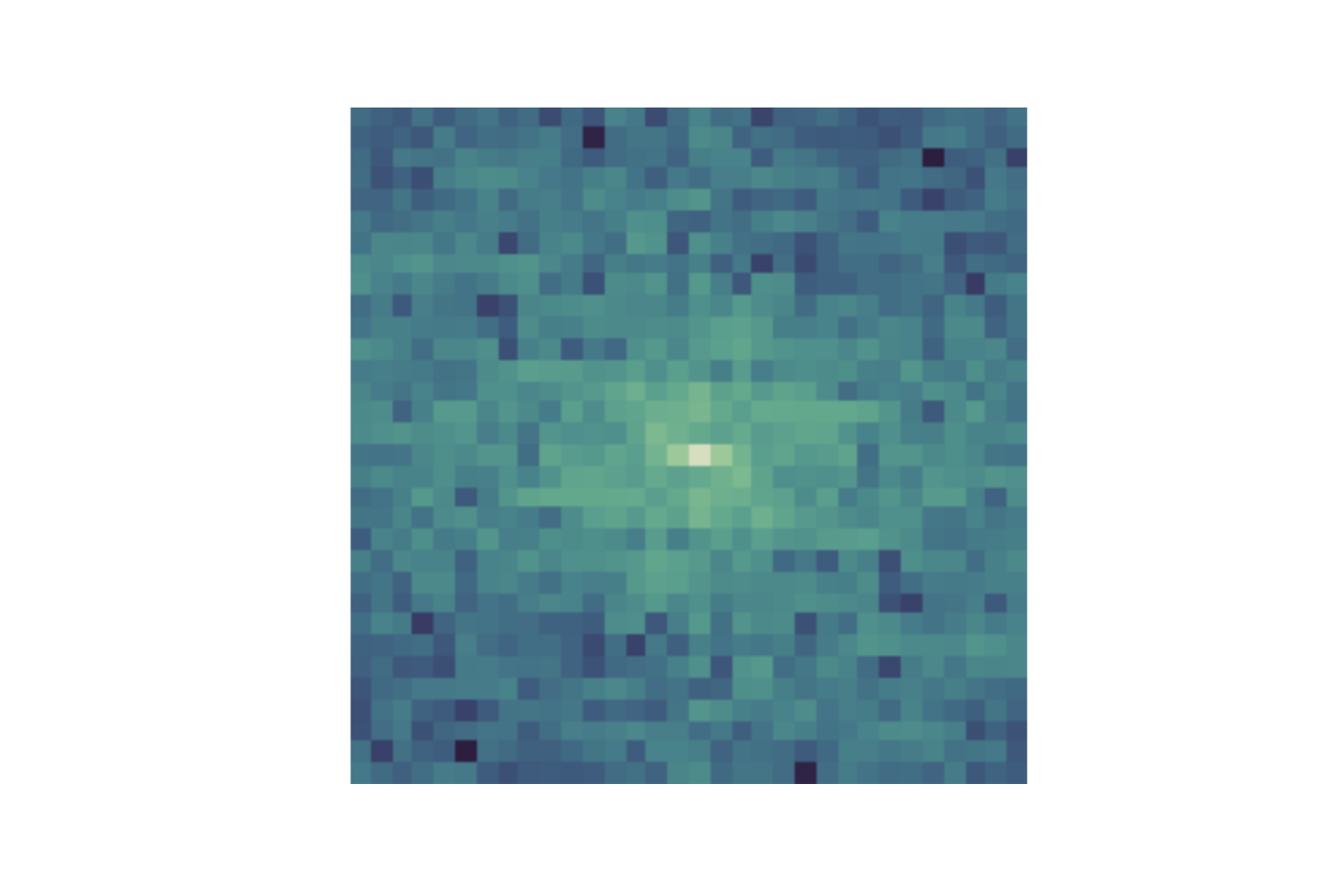} \\
    \parbox{\textboxsizeone}{\centering \small (d) Ground Truth \\ \hfill}
\end{subfigure}
\caption{\textbf{Top row:} Image reconstruction with SIREN \cite{sitzmann2019siren} for different configurations. \textbf{Middle row:} Magnitude of the DFT of the reconstruction. \textbf{Bottom row:} The center crop of size $32 \times 32$ from the magnitude of the DFT of the reconstruction.}
\label{fig:siren_imperfect}
\end{figure}

Note, however, that in the original formulation of SIREN~\cite{sitzmann2019siren}, the parameters of the initial layer are also updated during training. To see the effect that this has in the reconstruction, we repeat the same experiment with a SIREN\footnote{The notation *, e.g., SIREN*, denotes the corresponding INR involving input mapping with learnable parameters.} with trainable $\bm W^{(0)}$ and $\bm b^{(0)}$. Interestingly, as illustrated in \cref{fig:siren_imperfect_trainable}, the aliasing effect is mostly eliminated in this case. This a result of having a dynamic input mapping, which enables spreading the energy over all spectral coefficients rather than only the even ones. Nevertheless, similar to our observations for Fig.2, we see that the reconstruction for low values of $\omega_0$ is also blurry with most of the energy concentrated in the low frequencies. 

\def \subfigsizeimrecsiren{2.8cm}

\begin{figure}
\centering
\begin{subfigure}{\subfigsizeimrecsiren}
    \centering 
    \includegraphics[width=\subfigsizeimrecsiren]{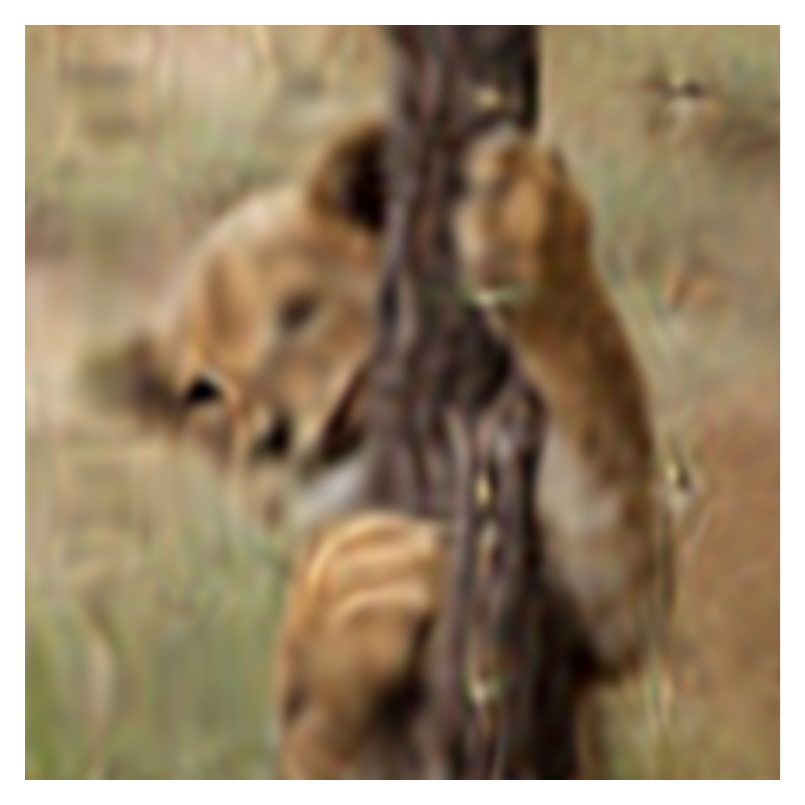} \\
    \includegraphics[width=\subfigsizeimrecsiren]{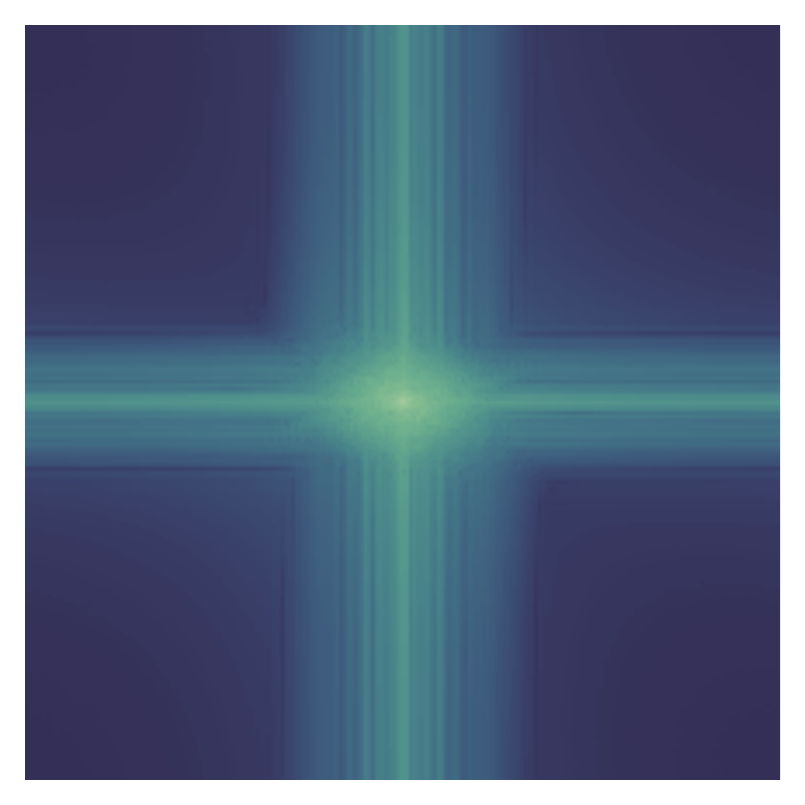} \\
    \includegraphics[width=\subfigsizeimrecsiren]{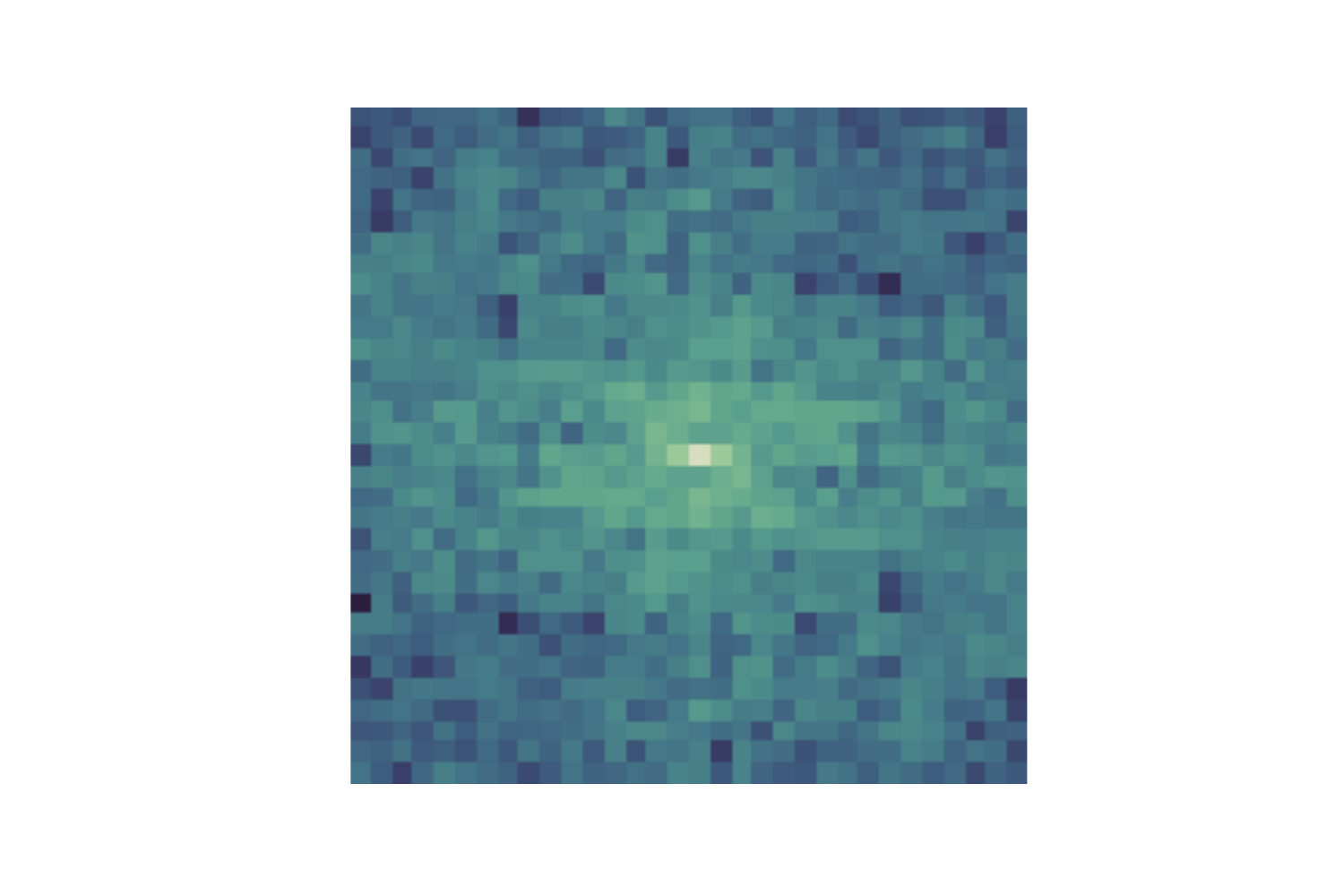} \\
    \parbox{\textboxsizeone}{\centering \small (a) SIREN* ($\omega_0=1$) \\ \footnotesize{$\bm W^{(0)} = I$}}
\end{subfigure}
\begin{subfigure}{\subfigsizeimrecsiren}
    \centering 
    \includegraphics[width=\subfigsizeimrecsiren]{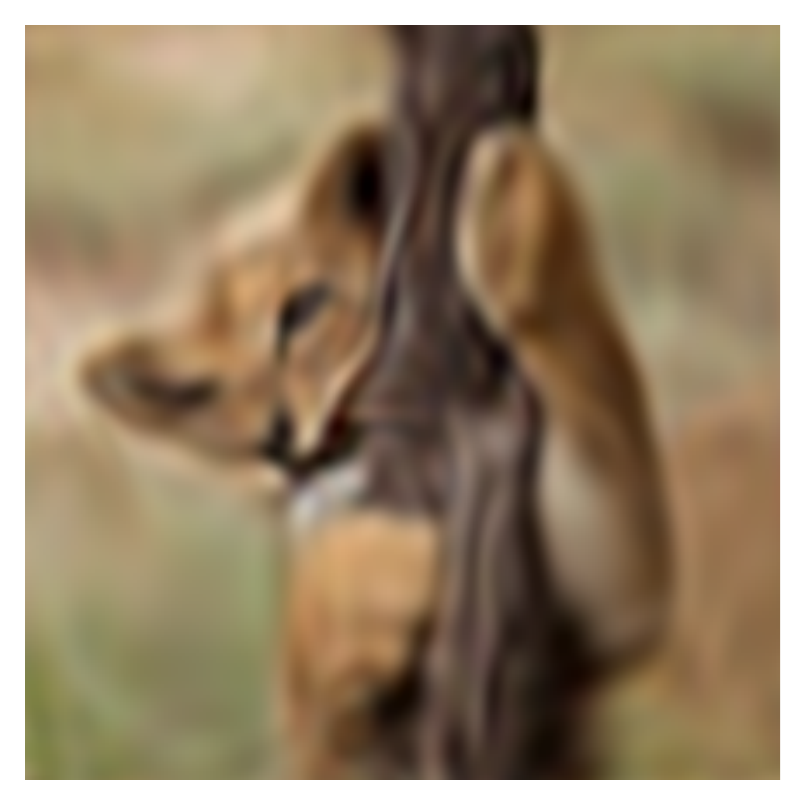} \\
    \includegraphics[width=\subfigsizeimrecsiren]{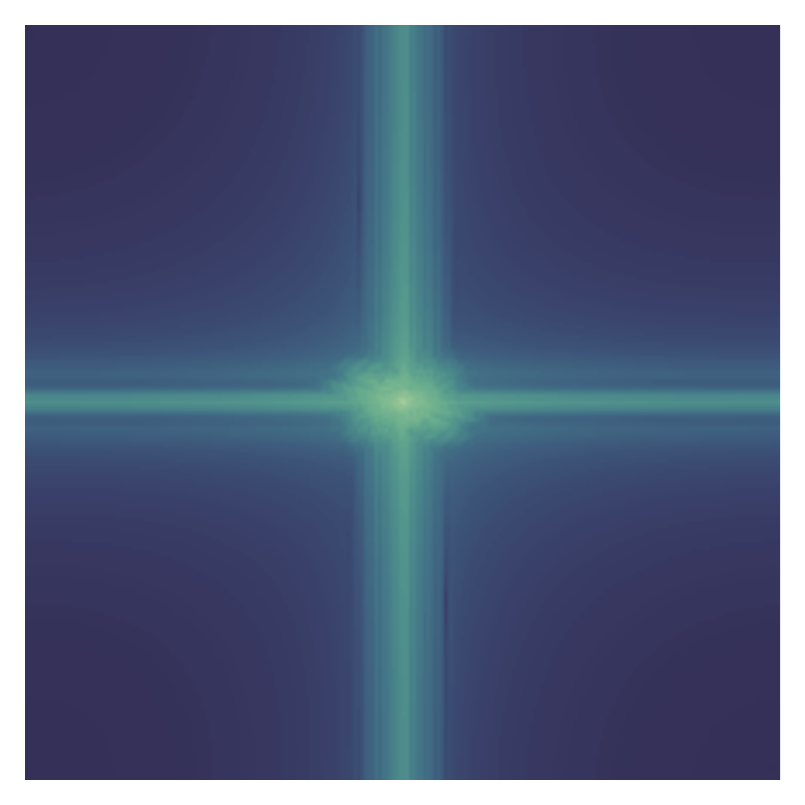} \\
    \includegraphics[width=\subfigsizeimrecsiren]{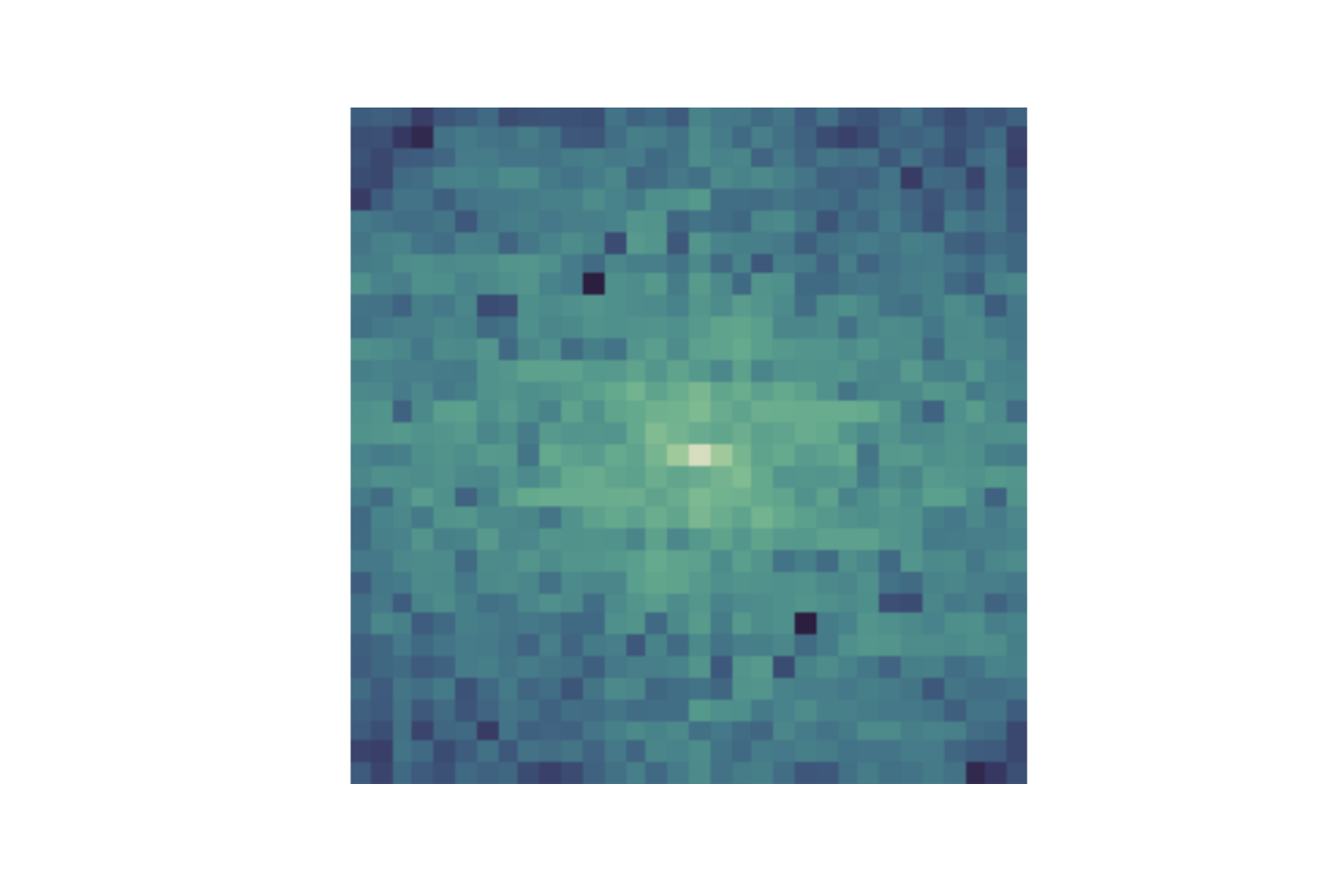} \\
    \parbox{\textboxsizeone}{\centering \small (b) SIREN* ($\omega_0=1$) \\ \footnotesize{$\bm W^{(0)} = 0.5 \times I$}}
\end{subfigure}
\begin{subfigure}{\subfigsizeimrecsiren}
    \centering 
    \includegraphics[width=\subfigsizeimrecsiren]{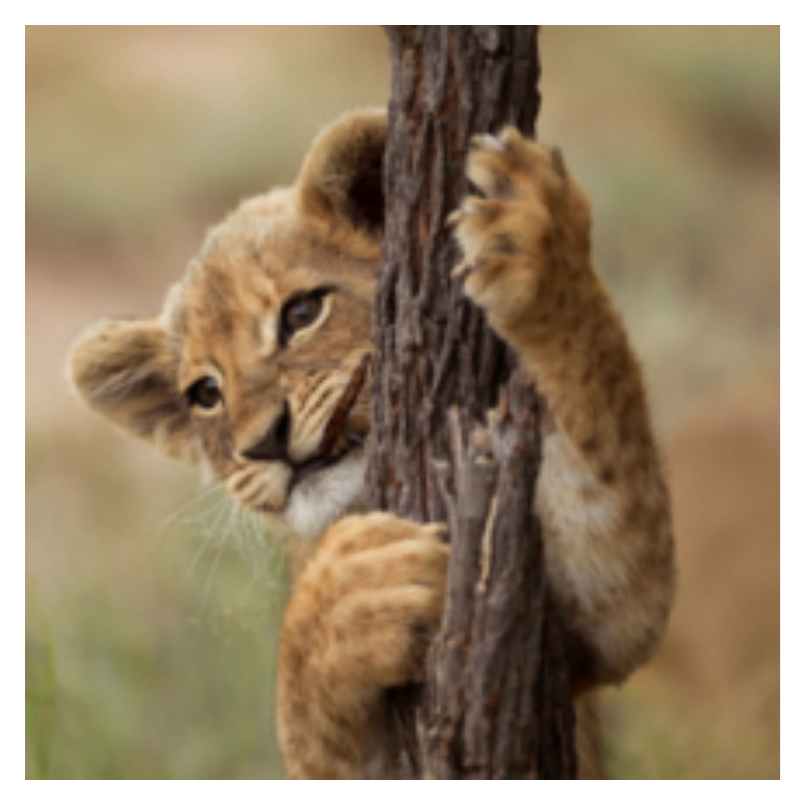} \\
    \includegraphics[width=\subfigsizeimrecsiren]{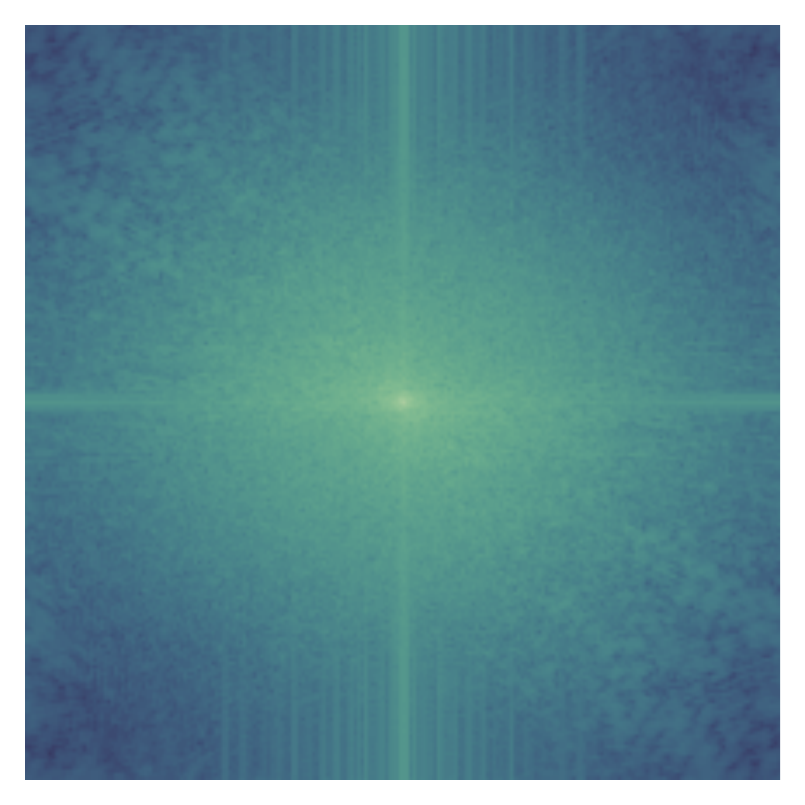} \\
    \includegraphics[width=\subfigsizeimrecsiren]{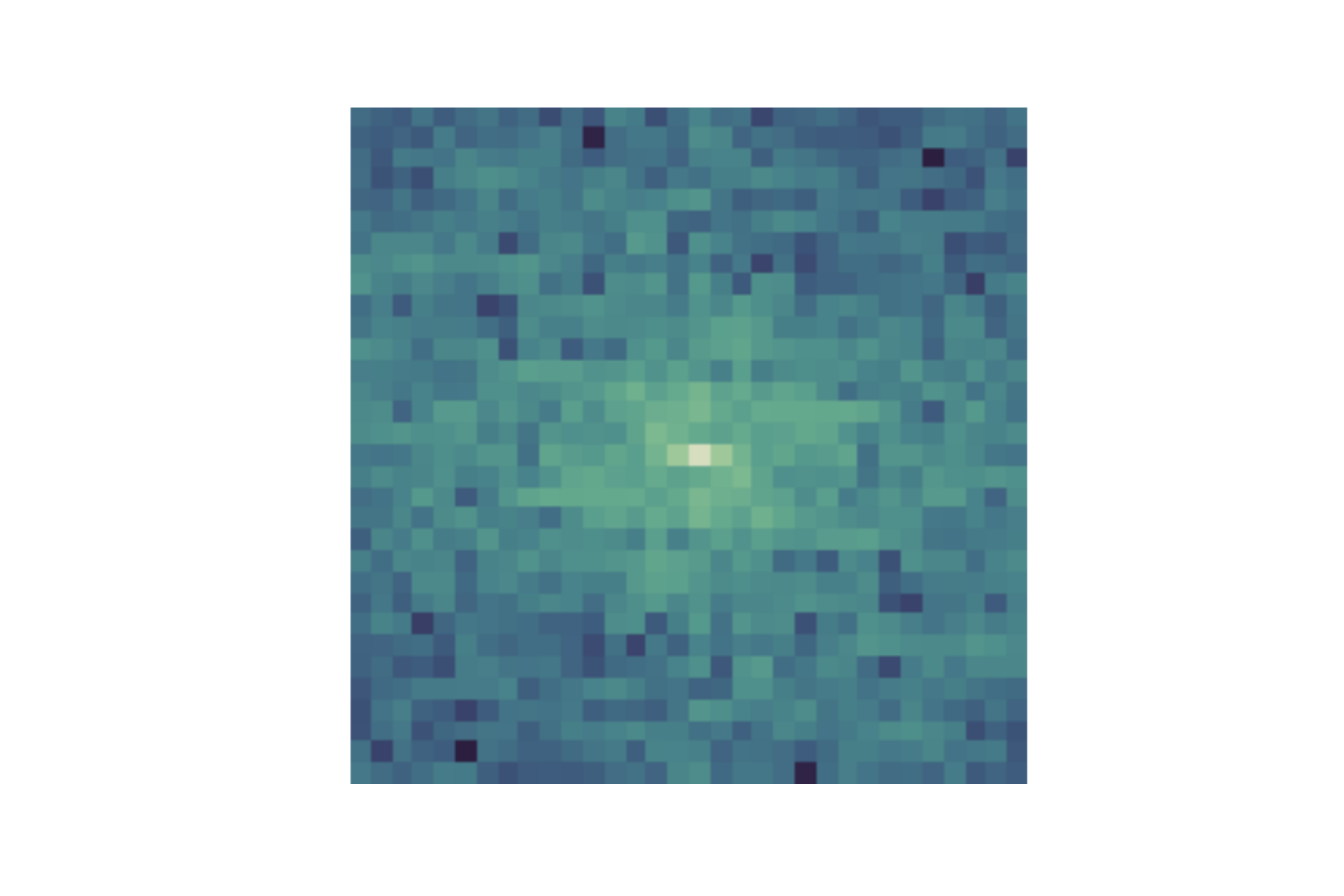} \\
    \parbox{\textboxsizeone}{\centering \small (c) \footnotesize SIREN* ($\omega_0=30$) \\ \footnotesize{$\bm W^{(0)} \in \R^{256 \times 2}$}}
\end{subfigure}
\begin{subfigure}{\subfigsizeimrecsiren}
    \centering 
    \includegraphics[width=\subfigsizeimrecsiren]{supp_figures/lion_512_gt__9.pdf} \\
    \includegraphics[width=\subfigsizeimrecsiren]{supp_figures/lion_512_gt_ft__2.pdf} \\
    \includegraphics[width=\subfigsizeimrecsiren]{supp_figures/gt_ft_zoomed.pdf}  \\
    \parbox{\textboxsizeone}{\centering \small (d) Ground Truth \\ \hfill}
\end{subfigure}

\caption{\textbf{Top row:} Image reconstruction with SIREN \cite{sitzmann2019siren} for different configurations. The notation * indicates that the parameters of the initial layer are subject to gradient updates during training. \textbf{Middle row:} Magnitude of the DFT of the reconstruction. \textbf{Bottom row:} The center crop of size $32 \times 32$ from the magnitude of the DFT of the reconstruction.}
\label{fig:siren_imperfect_trainable}
\end{figure}

For the sake of completeness, we also investigate the effect of having learnable parameters in $\gamma(\bm r)$ for FFNs. Specifically, we repeat this same experiment for the single frequency mapping of the main text, $\gamma(\bm r) = [\cos (2\pi f \bm r), \sin (2\pi f \bm r)]^T$, where we initialize the frequency variable $f$ as $f_0$. Note that $f$, now, is a trainable parameter. We denote this network as SFM\textsuperscript{*}. Similarly, we also train with an FFN\textsuperscript{*} with trainable $\bm\Omega_{i,j}$, randomly initialized as explained in Sec.2. \Cref{fig:ffn_imperfect_trainable} shows the results of these experiments with identical findings as in the main text and the SIRENs presented above.
\def \subfigsizeimrecsiren{2.8cm}

\begin{figure}
\centering
\begin{subfigure}{\subfigsizeimrecsiren}
    \centering 
    \includegraphics[width=\subfigsizeimrecsiren]{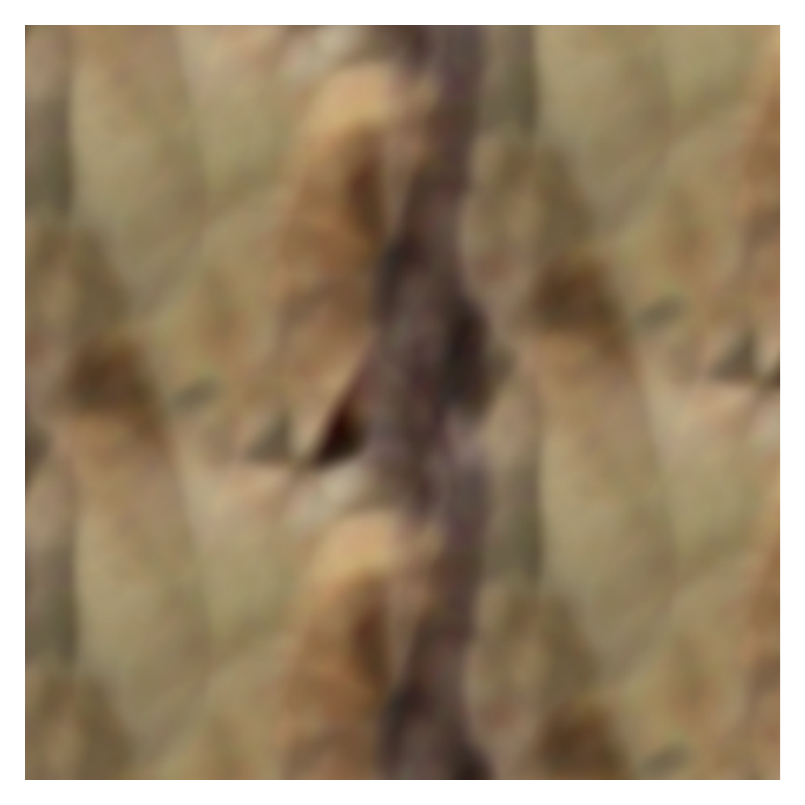} \\
    \includegraphics[width=\subfigsizeimrecsiren]{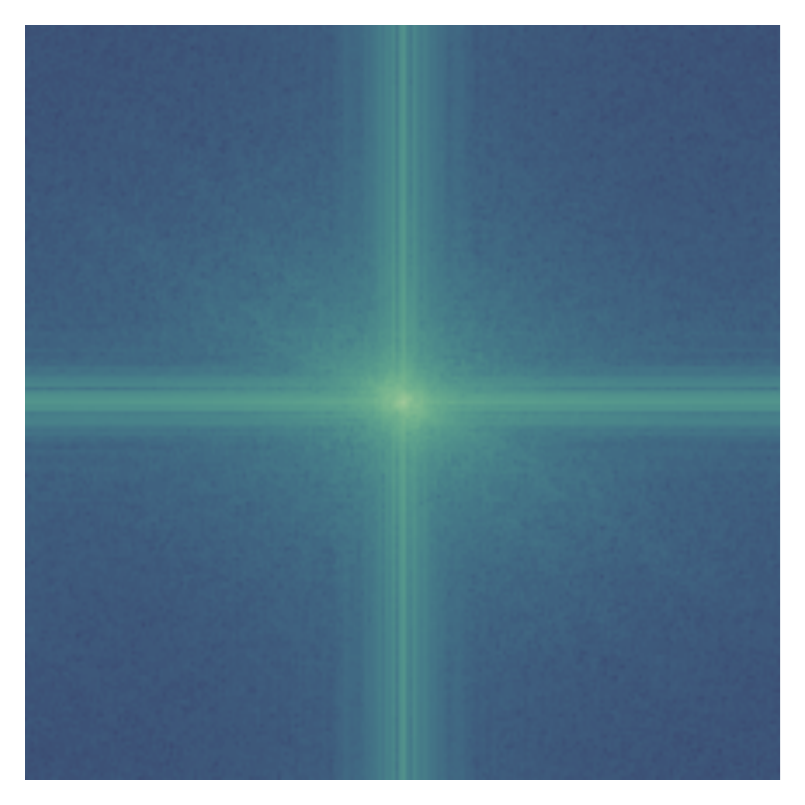} \\
    \includegraphics[width=\subfigsizeimrecsiren]{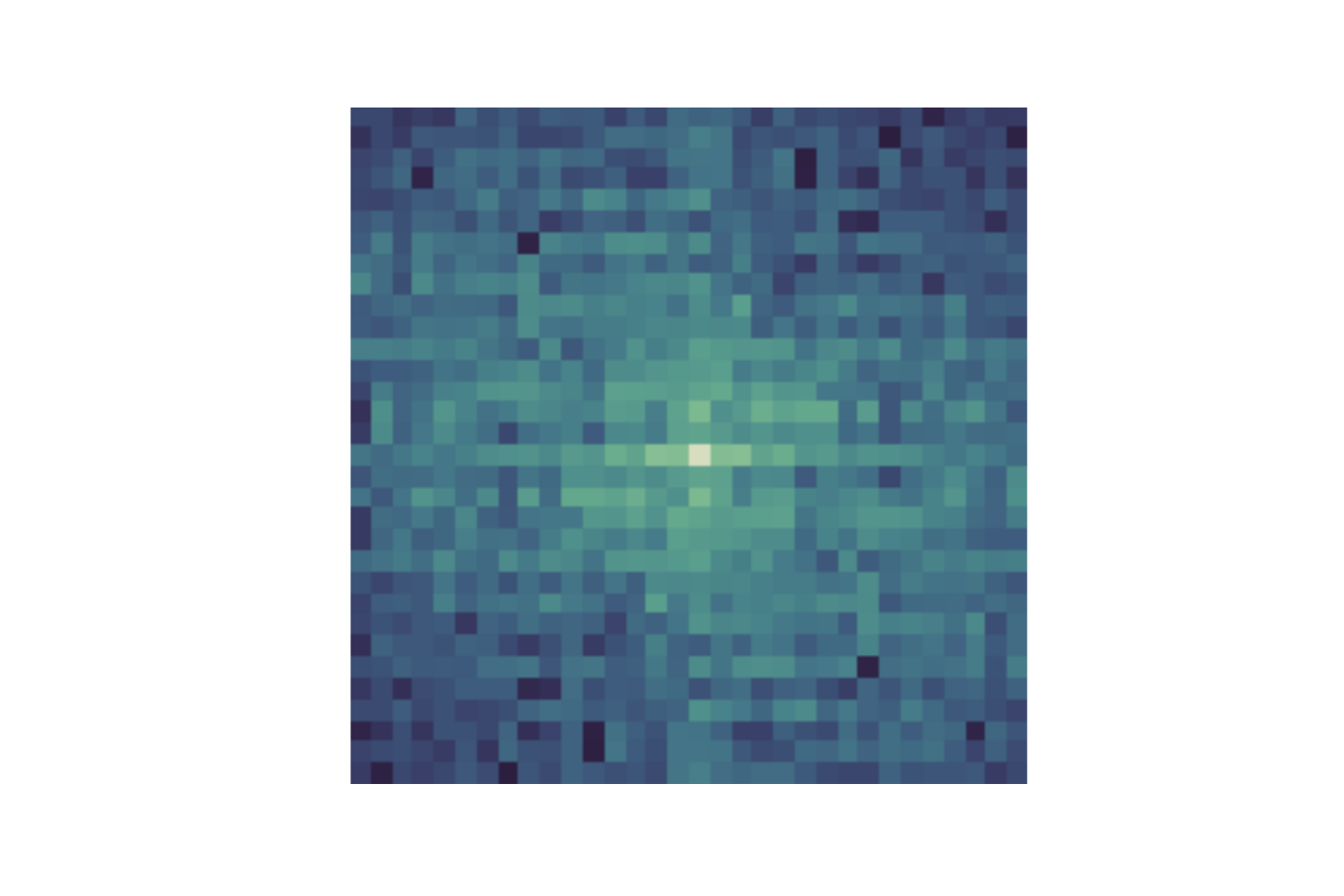} 
    \caption{SFM* ($f_0=1$)}
\end{subfigure}
\begin{subfigure}{\subfigsizeimrecsiren}
    \centering 
    \includegraphics[width=\subfigsizeimrecsiren]{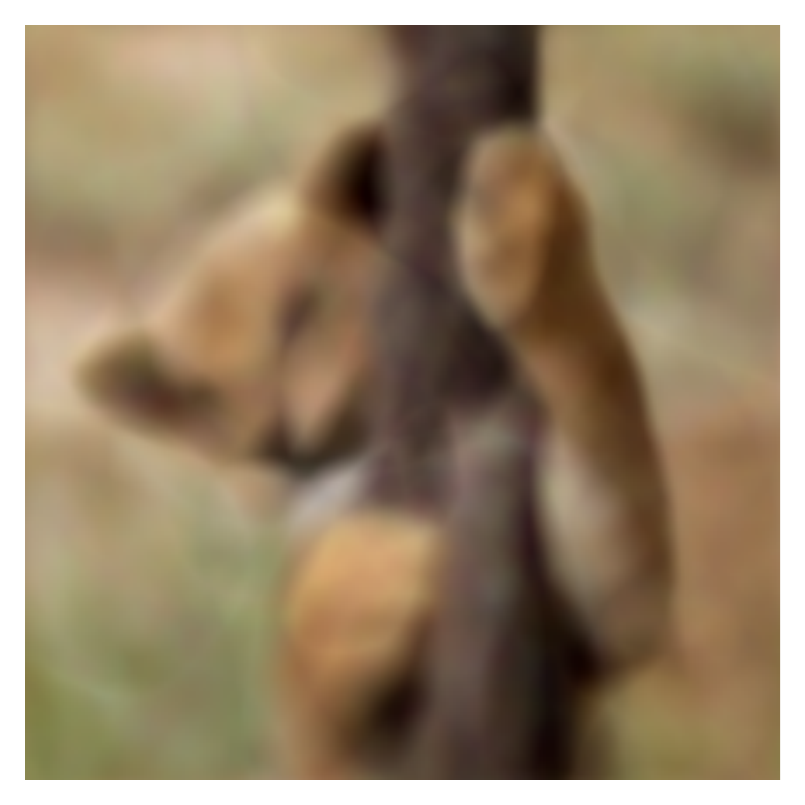} \\
    \includegraphics[width=\subfigsizeimrecsiren]{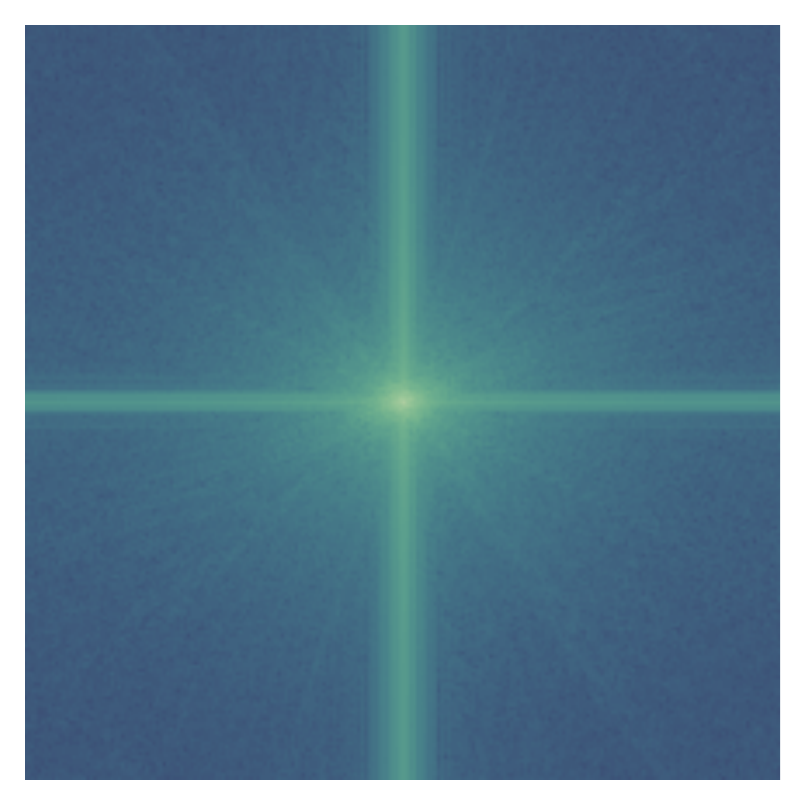} \\
    \includegraphics[width=\subfigsizeimrecsiren]{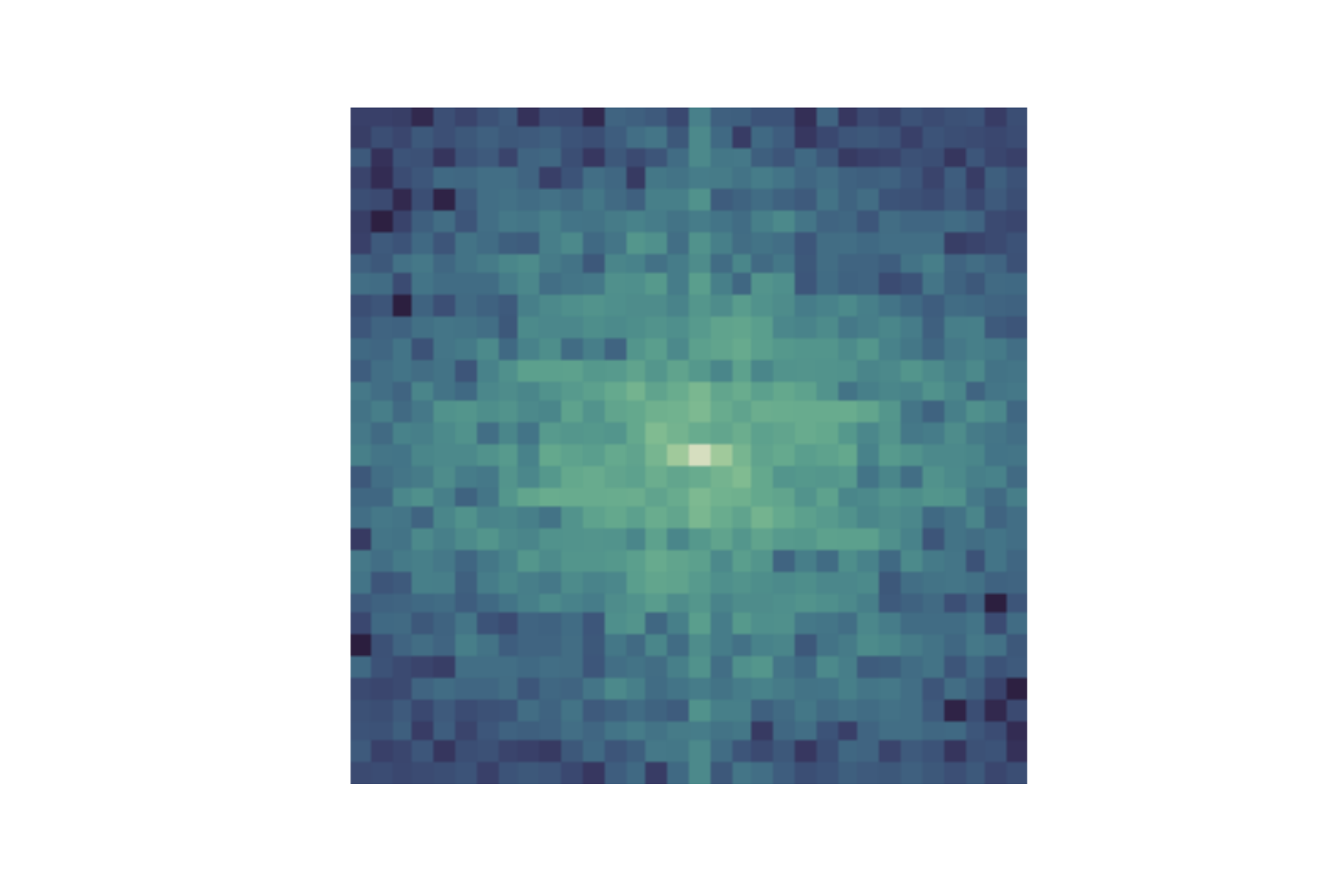} 
    \caption{SFM* ($f_0=0.5$)}
\end{subfigure}
\begin{subfigure}{\subfigsizeimrecsiren}
    \centering 
    \includegraphics[width=\subfigsizeimrecsiren]{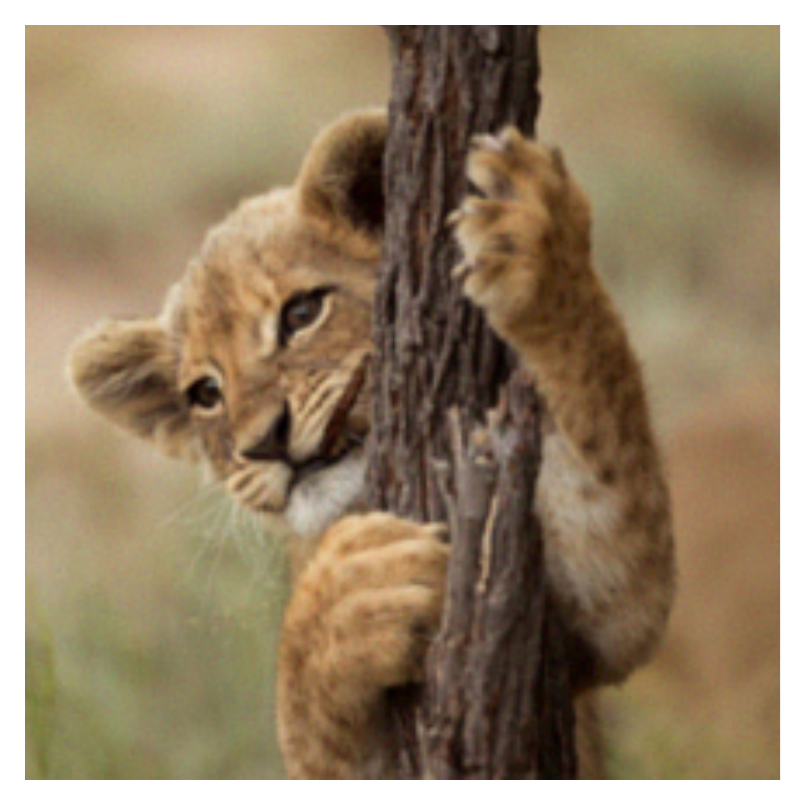} \\
    \includegraphics[width=\subfigsizeimrecsiren]{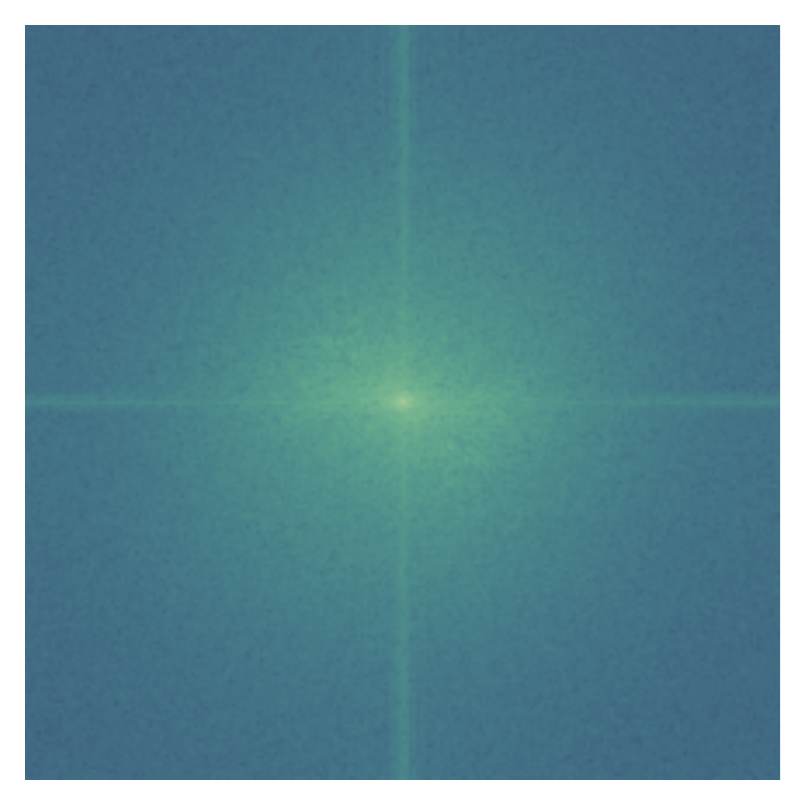} \\
    \includegraphics[width=\subfigsizeimrecsiren]{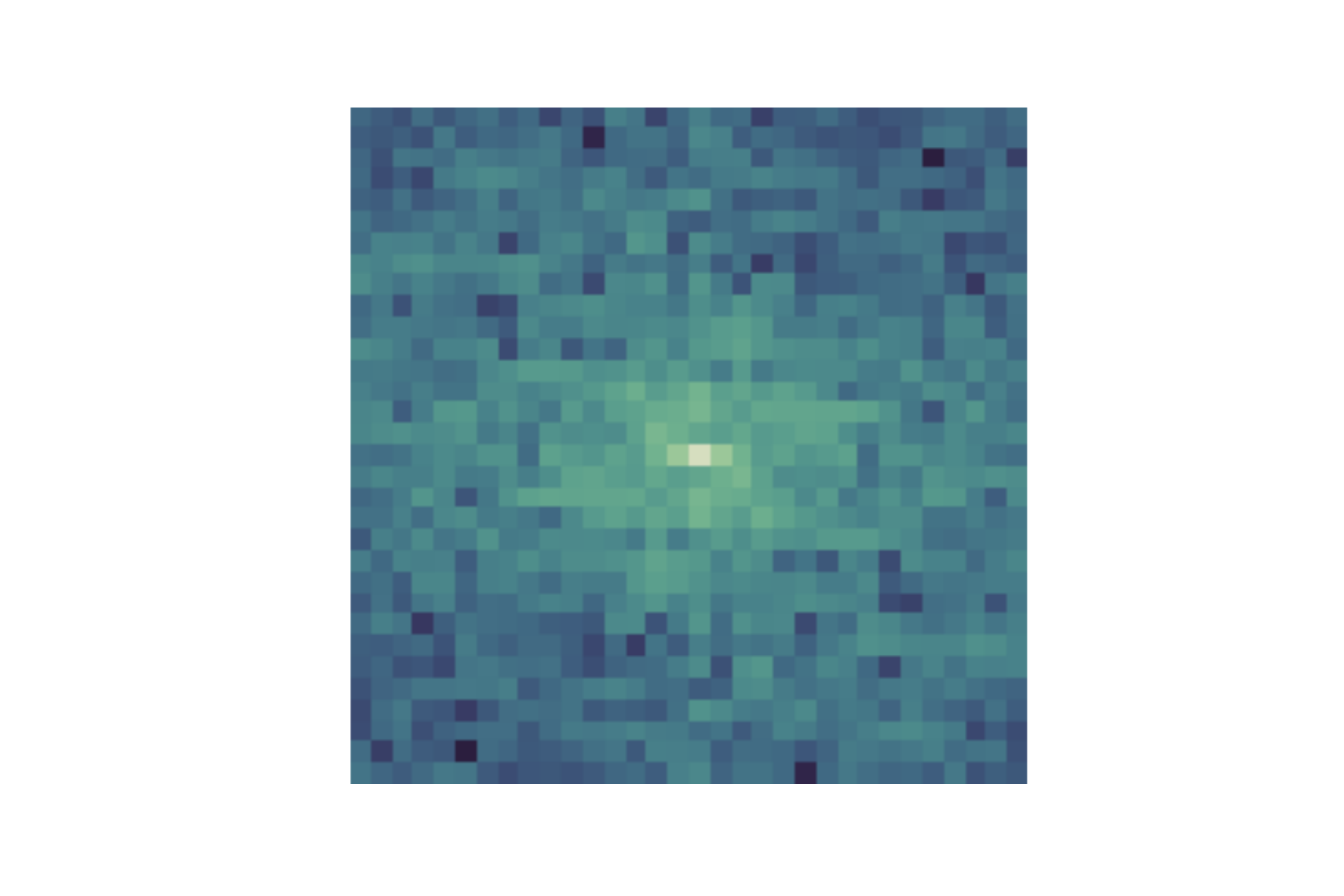} 
    \caption{FFN* ($\sigma=1$)}
\end{subfigure}
\begin{subfigure}{\subfigsizeimrecsiren}
    \centering 
    \includegraphics[width=\subfigsizeimrecsiren]{supp_figures/lion_512_gt__9.pdf} \\
    \includegraphics[width=\subfigsizeimrecsiren]{supp_figures/lion_512_gt_ft__2.pdf} \\
    \includegraphics[width=\subfigsizeimrecsiren]{supp_figures/gt_ft_zoomed.pdf} 
    \caption{Ground Truth}
\end{subfigure}
\caption{\textbf{Top row:} Image reconstruction with FFN \cite{tancik2020fourfeat} for different configurations. SFM stands for single frequency mapping as in Figure 2, i.e., $\gamma(\bm r, f) = [\cos (2\pi f \bm r), \sin (2\pi f \bm r)]^T$ where the frequency variable is initialized as $f_0$ and subject to gradient updates. In general, the notation * indicates learnable input mapping. \textbf{Middle row:} Magnitude of the DFT of the reconstruction. \textbf{Bottom row:} The center crop of size $32 \times 32$ from the magnitude of the DFT of the reconstruction.}
\label{fig:ffn_imperfect_trainable}
\end{figure}

\clearpage
\section{Aliasing}
\subsection{Experimental details}
For the experiment in Section 4.2, we train a SIREN with three fully connected layers: two hidden layers with dimension 128 followed by the output layer of dimension 1. All the networks presented in Fig.3 are trained for 2000 iterations using Adam \cite{kingma2014adam} with a learning rate of $1\times10^{-4}$. For the training data, we generate a single frequency signal $g(r)=\sin(2\pi\cdot23 r)$ on $128$ evenly spaced samples in the range $[0,1]$, i.e., sampled with a frequency of $f_s=128$ and test the learned representation $f_\theta(r)$ with samples from the same signal with $f_s=256$.

\subsection{Additional experiments}
For the sake of completeness, and to show that aliasing is prevalent across INR families, we repeat the same experiment using FFNs~\cite{tancik2020fourfeat} initialized with different $\sigma$. Recall that $\sigma$ determines the standard deviation of the the distibution of $\bm \Omega_{i,j}$ at initialization, i.e., $\bm\Omega_{i,j}\sim\mathcal{N}(0,\sigma^2)$. The results presented in \cref{fig:freq_aliasing_ffn} highlight that the same aliasing phenomenon observed for SIRENs in Fig.3 happens identically for FFNs.

\def \subfigsizealiasing{7cm}
\def \textboxsizealiasingrotated{3.5cm}
\def \textboxsizealiasing{7cm}

\begin{figure}[h!]
\centering
\rotatebox[origin=c]{90}{\makebox[\textboxsizealiasingrotated]{\centering $\sigma=100$}}
\begin{subfigure}{\subfigsizealiasing}
    \centering 
    \includegraphics[width=\subfigsizealiasing]{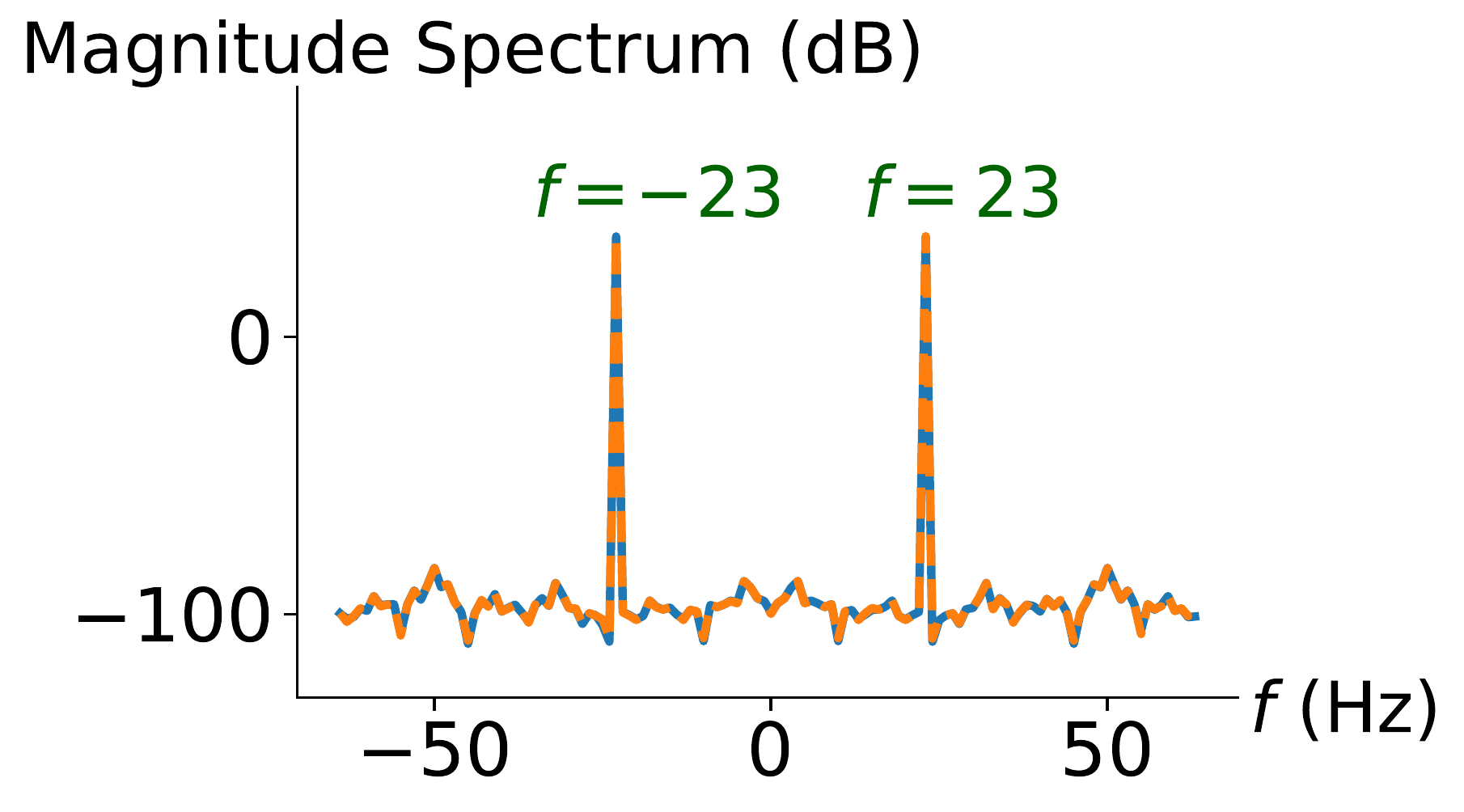}
\end{subfigure}
\begin{subfigure}{\subfigsizealiasing}
    \centering
    \includegraphics[width=\subfigsizealiasing]{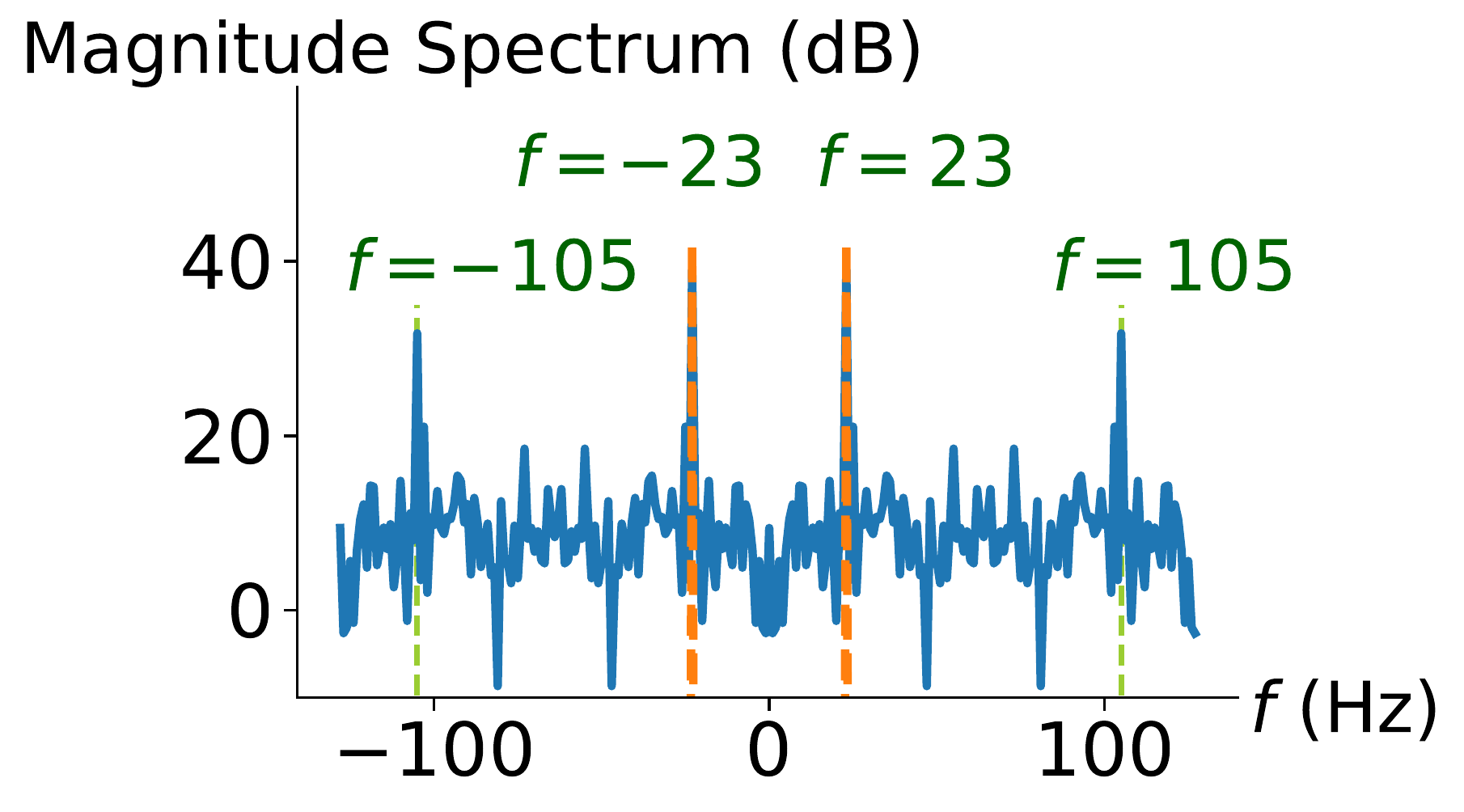}
\end{subfigure}
\vspace{0.5cm}
\\
\rotatebox[origin=c]{90}{\makebox[\textboxsizealiasingrotated]{\centering $\sigma=3$}}
\begin{subfigure}{\subfigsizealiasing}
    \centering 
    \includegraphics[width=\subfigsizealiasing]{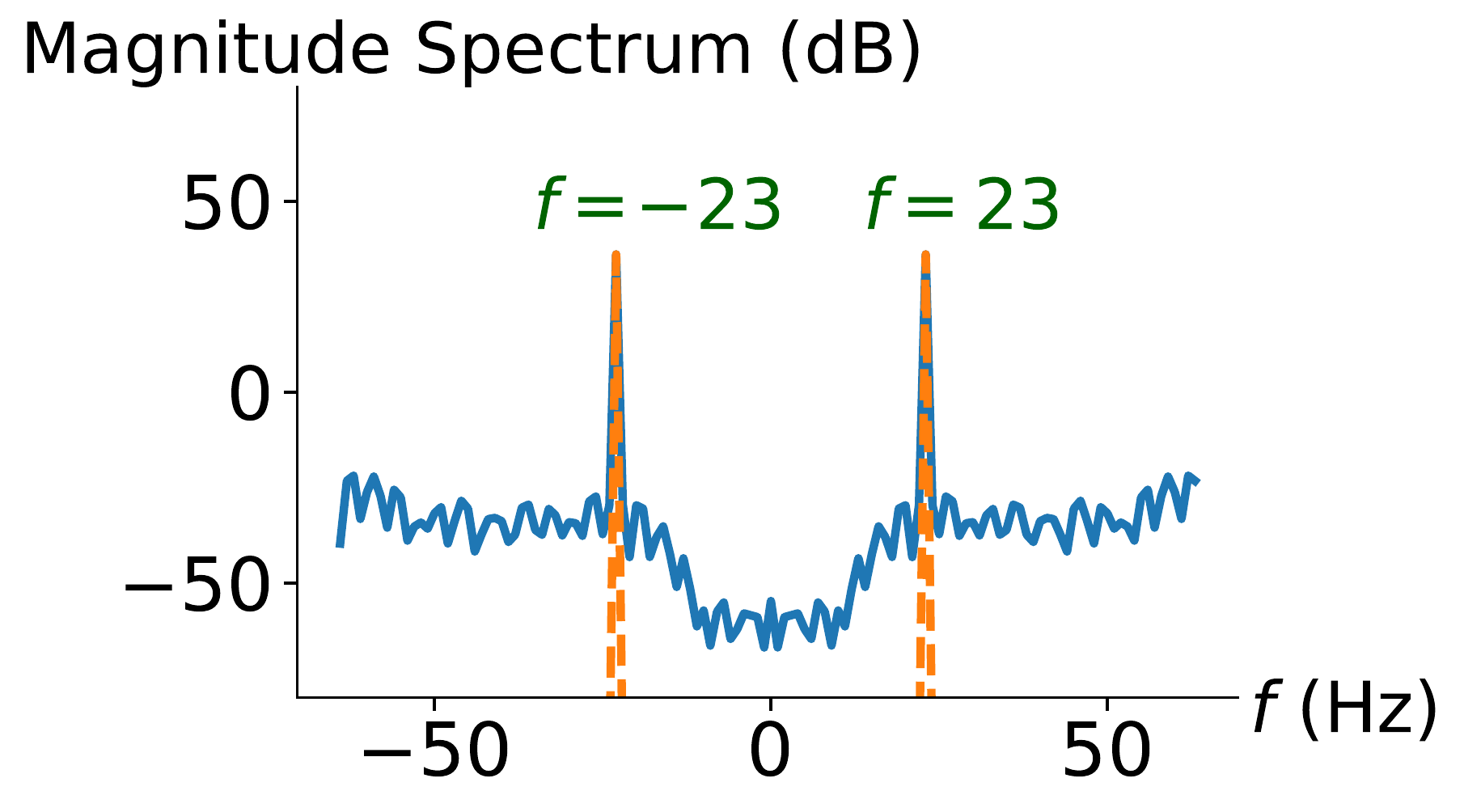}
\end{subfigure}
\begin{subfigure}{\subfigsizealiasing}
    \centering
    \includegraphics[width=\subfigsizealiasing]{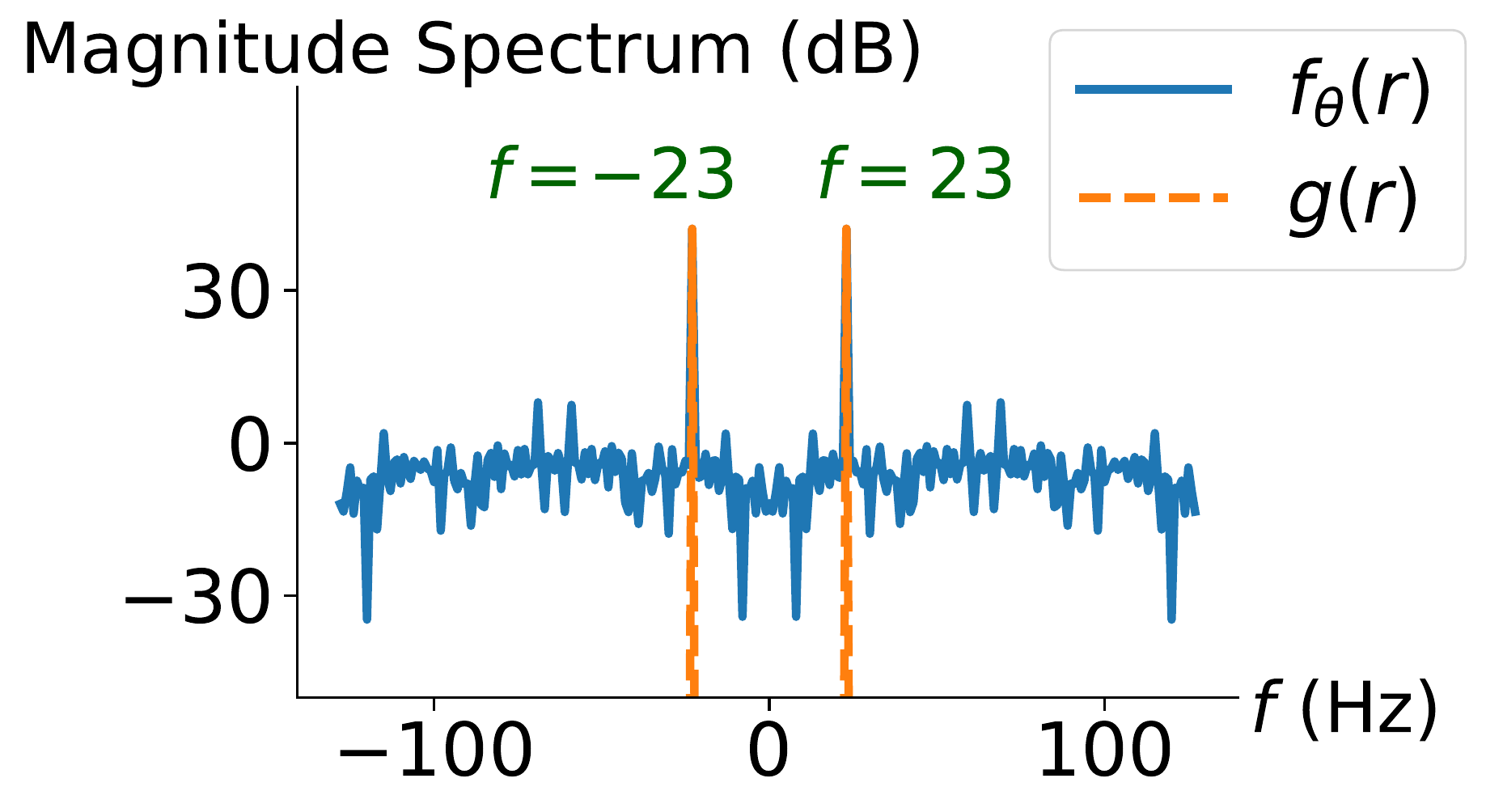}
\end{subfigure}
\vspace{0.25cm}\\
\makebox[\textboxsizealiasing]{\centering Sampling frequency $f_s = 128Hz$}
\makebox[\textboxsizealiasing]{\centering Sampling frequency $f_s = 256Hz$}
\caption{Magnitude of the spectrum of $g(r)=\sin(2\pi\cdot 23r)$ and its FFN~\cite{tancik2020fourfeat} reconstruction trained at $f_s = 128$ Hz. \textbf{Top row} shows $\sigma=100$, and \textbf{bottom row} $\sigma=3$. On the \textbf{left} the signals are sampled at $f_s=128$ Hz and on the \textbf{right} at $f_s=256$ Hz.}
\label{fig:freq_aliasing_ffn}
\end{figure}

\newpage
\section{NTK eigenfunctions as dictionary atoms}

\subsection{Estimation of eigenfunctions of the NTK}
As it is common in the kernel literature, in this work, we use the eigenvectors of the kernel Gram matrix to approximate the eigenfunctions of the NTK. Specifically, unless stated otherwise, in all our experiments we compute the Gram matrix of the NTK at any $\bm \theta_0$ using as samples the coordinates of all the pixels of an image, laid out on a grid of fixed resolution {($64\times64$)}. That is, we compute a Gram matrix of size {$64^2\times 64^2$}. To that end, we use the \texttt{empirical\_kernel\_fn} function from the \texttt{neural\_tangents} library~\cite{novakNeuralTangentsFast} which allows to compute this matrix using a batch implementation. Note that this operation can be computationally intense, scaling quadratically with the number of samples, but also quadratically with the number of outputs. For this reason, we decided to use INRs with a single output and work only with grayscale images. The results, however, are easily extensible to the multi-output setting.

Once we have the Gram matrix, we can perform its eigendecomposition and use the resulting eigenvectors to approximate the values of the eigenfunctions $\phi_j(\bm r)$ at the pixel coordinates. The inner products $\langle \phi_j, g_n\rangle$ are then easily approximated as
\begin{equation}
    \langle \phi_j, g_n\rangle\approx\sum_{i=1}^{64^2}\phi_j(\bm r_i)g_n(\bm r_i).
\end{equation}

\subsection{Training details}\label{sec:ntk_train}
In Section 5.2, we compare the generalization performance of several SIRENs ({four hidden layers with dimension 256 followed by the output layer of dimension 1})  with different initialization strategies. To that end, we train each of these networks to reconstruct 100 validation images from the CelebA dataset using half of the pixels of the images for training. The training pixel locations are random, but we use the same across all validation images. Generalization performance, is then tested using the remaining half of the pixels.  To be consistent with the empirical protocol proposed in \cite{tancik2020meta}, we use full batch Adam \cite{kingma2014adam} with a learning rate of {$10^{-4}$} for the randomly initialized weights and use full batch gradient descent with learning rate {$10^{-2}$} for the meta learned weights, which they reported to be the optimal choice of optimizers and learning rates for each individual case.

\Cref{fig:siren_ntk_train} shows the evolution of the average training and test curves for each of these networks. As we can see, the networks with a better energy concentration in Fig. 4, are the networks that train faster, and reach the best test performances.

\def \subfigsizeperf{8cm}
\begin{figure}[ht!]
    \centering
    \begin{subfigure}{\subfigsizeperf}
        \centering
        \includegraphics[width=\subfigsizeperf]{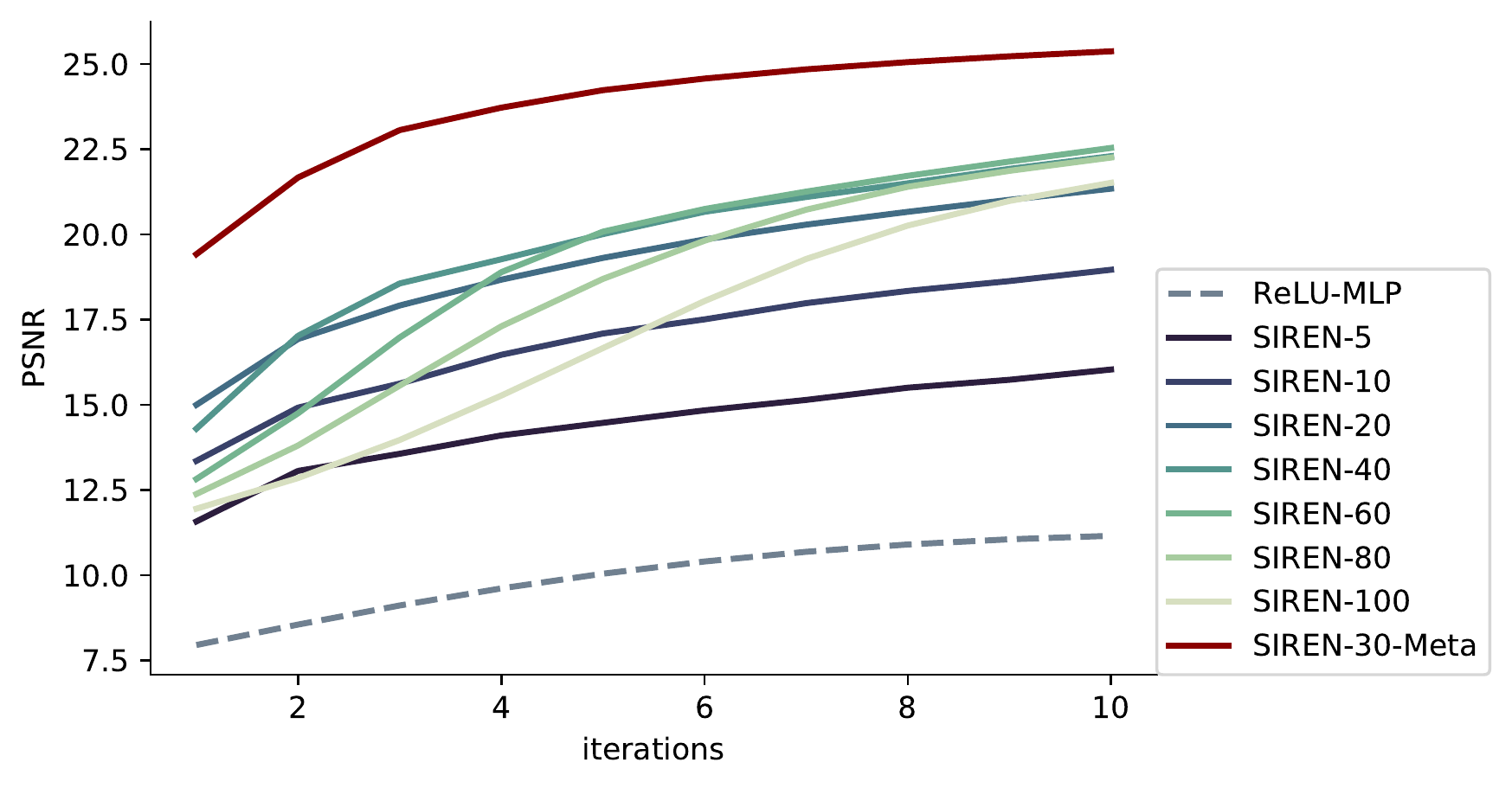}
        \caption{Test}
    \end{subfigure}
    \begin{subfigure}{\subfigsizeperf}
        \centering
        \includegraphics[width=\subfigsizeperf]{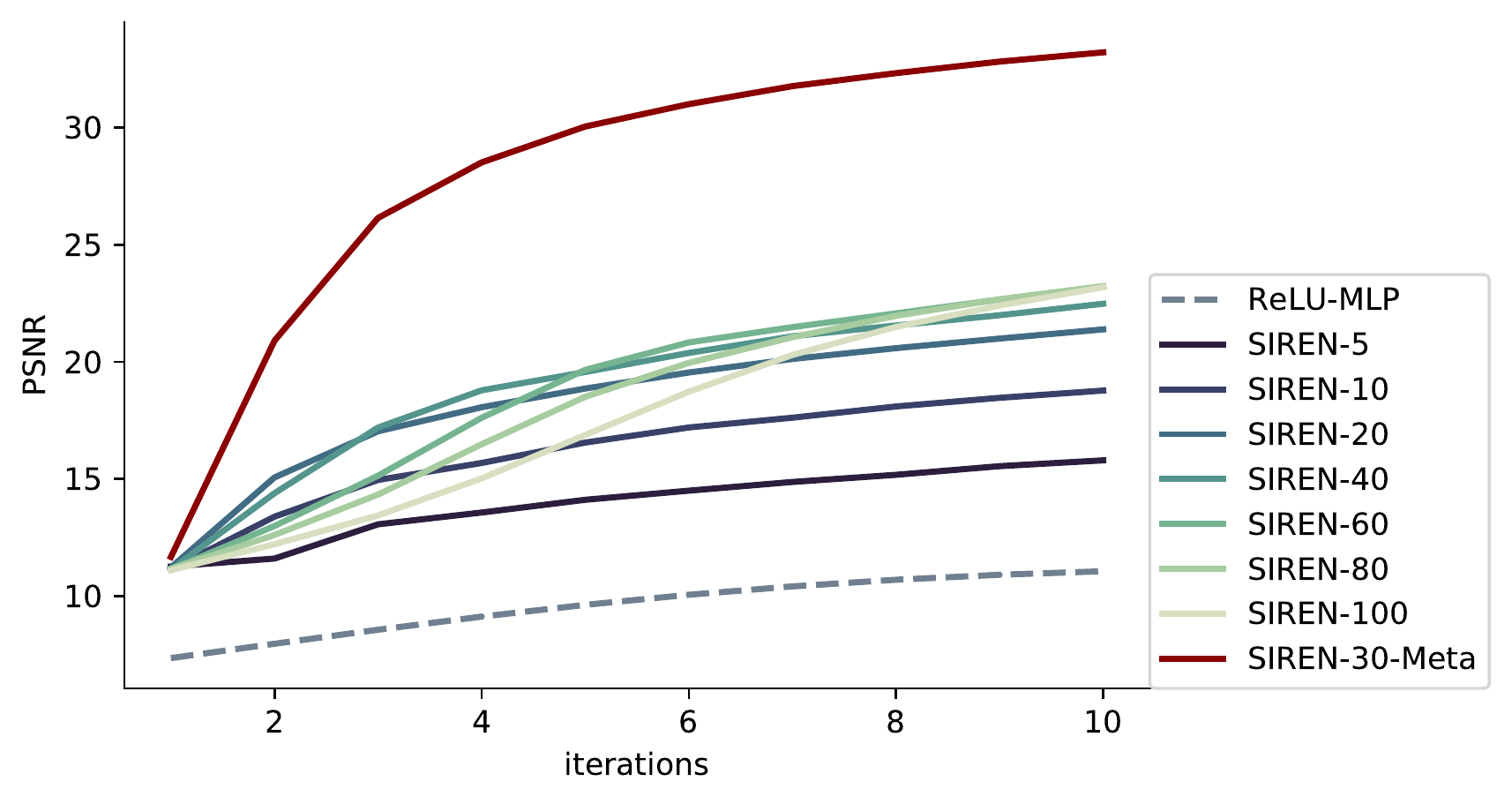}
        \caption{Training}
    \end{subfigure}
    \caption{The evolution of the reconstruction performance (average PSNR) of different representations for the first 10 training iterations when trained on 100 validation images from CelebA dataset. The numbers for different SIREN instances indicate the value of $\omega_0$. Meta indicates the learned initialization, whereas all the other networks are initialized randomly in accordance with the original implementation~\cite{sitzmann2019siren}.
    }
    \label{fig:siren_ntk_train}
\end{figure}

\subsection{Experiments on additional networks}

For completeness, we also provide the results of these experiments using FFNs ({a sinusoidal mapping of size 256 followed by three hidden layers with dimension 256 followed by the output layer of dimension 1}), instead of SIRENs in \cref{fig:ntk_ffns}. Again, we observe that those networks which have an energy profile more concentrated on the largest eigenvalues perform much better.

\begin{figure}[ht!]
    \centering
    \includegraphics[width=0.7\linewidth]{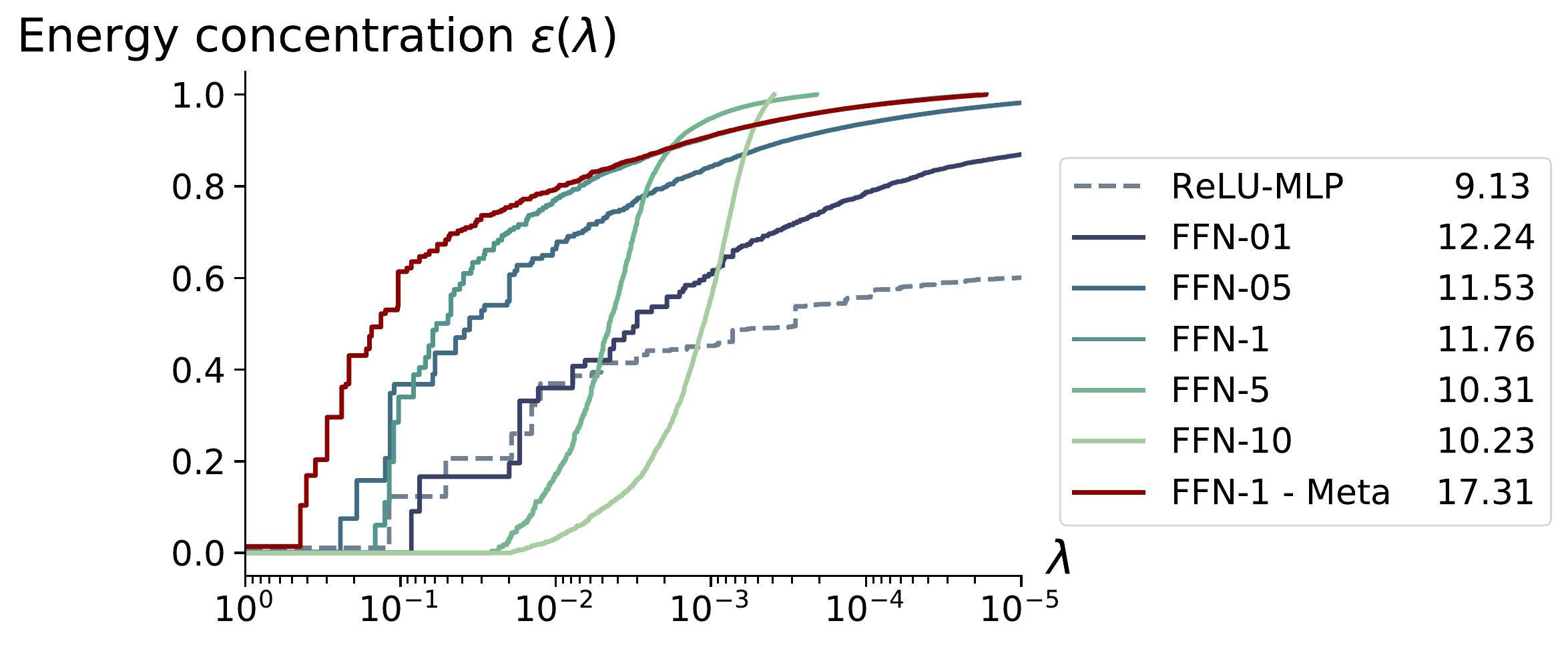}
    \caption{Average energy concentration of $100$ validation images from CelebA on subspaces spanned by the eigenfunctions of the empirical NTK associated to eigenvalues greater than a given threshold. Legend shows the average test PSNR after training to reconstruct those images from 50\% randomly selected pixels for 3 iterations. The value following FFN specifies the $\sigma$ parameter of the given network. }
    \label{fig:ntk_ffns}
\end{figure}


\section{Meta-learning experiment}

\subsection{Experimental details}\label{sec:ntk_setting}

Our meta-learning experiments consist of two phases: A first pre-training phase, in which we use MAML, {to meta-learn a good initialization from $5,000$ training images from CelebA using a learning rate of $10^{-5}$ as indicated in \cite{tancik2020meta} and $5000$ meta-iterations. In our experiments, we use a SIREN with four hidden layers with dimension 256 followed by the output layer of dimension 1, randomly initialized prior to meta-learning using $\omega_0=30$.} After pretraining, we finetune the networks starting at the meta-learned weights using the training protocol described in \cref{sec:ntk_train}. 

To estimate the eigenfunctions of the NTK at the meta-learned weights we use the experimental setting described in \cref{sec:ntk_setting}. 

\subsection{Experiments with an additional meta-learning algorithm}
We repeat the same experiments by replacing the meta-learning algorithm with Reptile. \cref{fig:energy_reptile} shows the resulting energy concentration plot and \cref{fig:eigvec_singletask} shows the eigenfunctions of the NTK at the meta-learned weights using Reptile. As we can see, the results agree completely with those found using MAML. This suggests that the reshaping effect of meta-learning on the NTK is a general phenomenon, which might be induced using multiple algorithms.

\begin{figure}[ht!]
    \centering
    \includegraphics[width=0.7\linewidth]{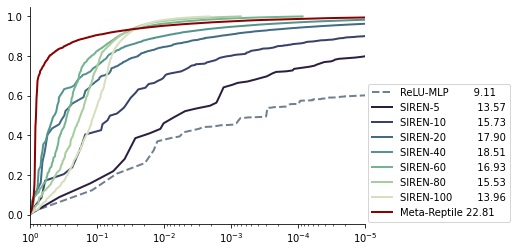}
    \vspace{-1em}
    \caption{Average energy concentration of $100$ validation images from CelebA on subspaces spanned by the eigenfunctions of the empirical NTK associated to eigenvalues greater than a given threshold. Legend shows the average test PSNR after training to reconstruct those images from 50\% randomly selected pixels. The meta-learned weights are computed using Reptile.}
    \vspace{-1em}
    \label{fig:energy_reptile}
\end{figure}

\subsection{Experiments on additional networks}

For completeness, we also provide the results of these experiments using an FFN ({a sinusoidal mapping of size 256 followed by three hidden layers with dimension 256 followed by the output layer of dimension 1) instead of a SIREN. Prior to meta-learning the input mapping is initialized using $\sigma=1$.} \cref{fig:ntk_ffns} shows these additional results, where we see that meta-learning does also improve the energy concentration of the validation images on the principal eigenspace of the NTK for the FFNs. Performance does also improve significantly in this case.

\subsection{Comparison with standard training}

Finally, as a baseline, we provide a comparison between standard training and meta-learning, both in terms of the dynamics of the NTK, and fine-tuning performance. Specifically, we repeated the same experimental pipeline described before, where instead of pretraining using MAML, {we pretrain a SIREN using Adam} to reconstruct one training image in CelebA. 

\begin{figure}[ht!]
    \centering
    \includegraphics[width=0.7\linewidth]{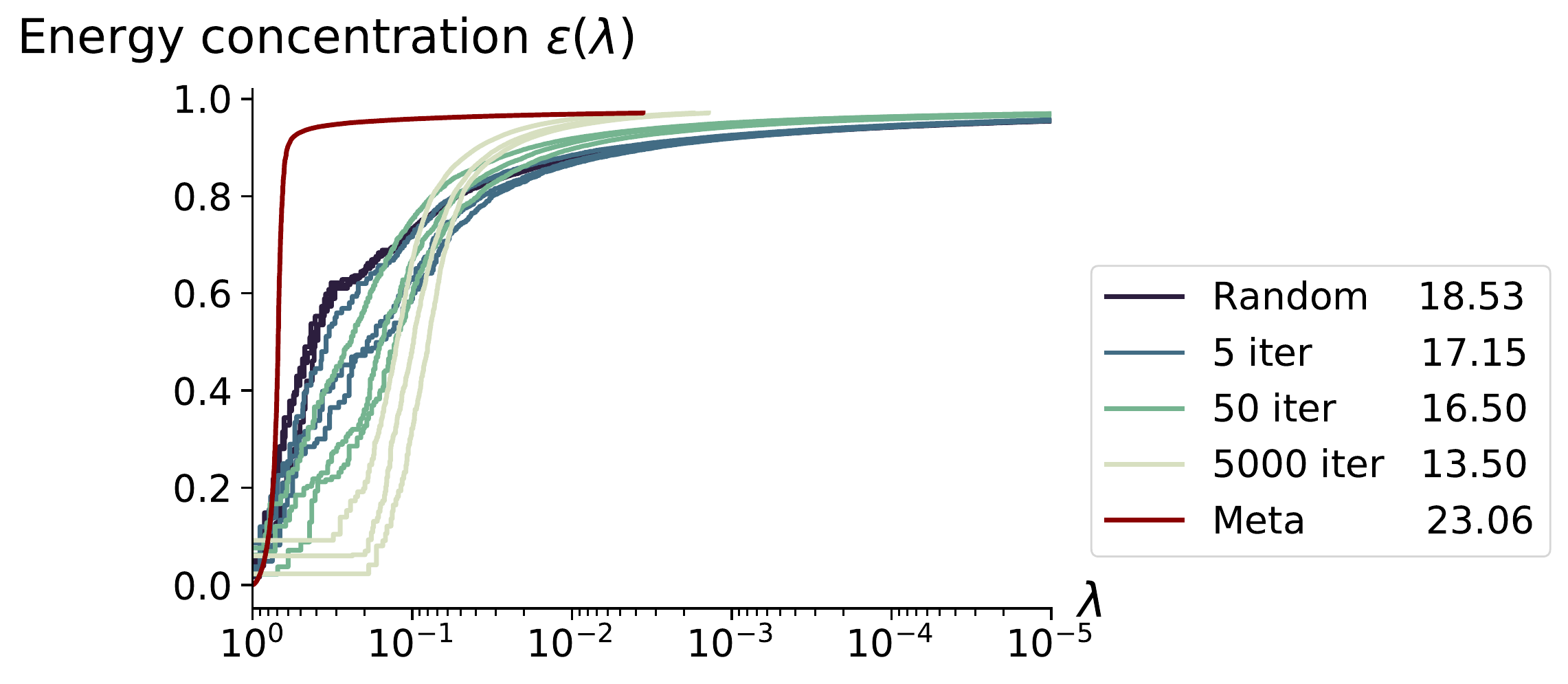}
    \caption{Average energy concentration of $100$ validation images from CelebA on subspaces spanned by the eigenfunctions of the empirical NTK associated to eigenvalues greater than a given threshold. Legend shows the average test PSNR after training to reconstruct those images from 50\% randomly selected pixels. The number of iterations specify those of pretraining on the single task.}
    \label{fig:ntk_single_task}
\end{figure}
As we can see in \cref{fig:ntk_single_task} and \cref{fig:eigvec_singletask} has a signficant impact on the NTK. As described before in \cite{kopitkovNeuralSpectrum2020}, training a neural network to reconstruct a signal transforms the eigenfunctions of the final NTK such that they look like the target signals. Surprisingly, though, we are seeing that for this particular set of tasks, fine-tuning from those initializations results in worse performance than starting from scratch, i.e., there is a negative transfer between different tasks.

\begin{figure}[ht!]
    \centering
    \begin{subfigure}{\subfigsizeperf}
        \centering
        \includegraphics[width=\subfigsizeperf]{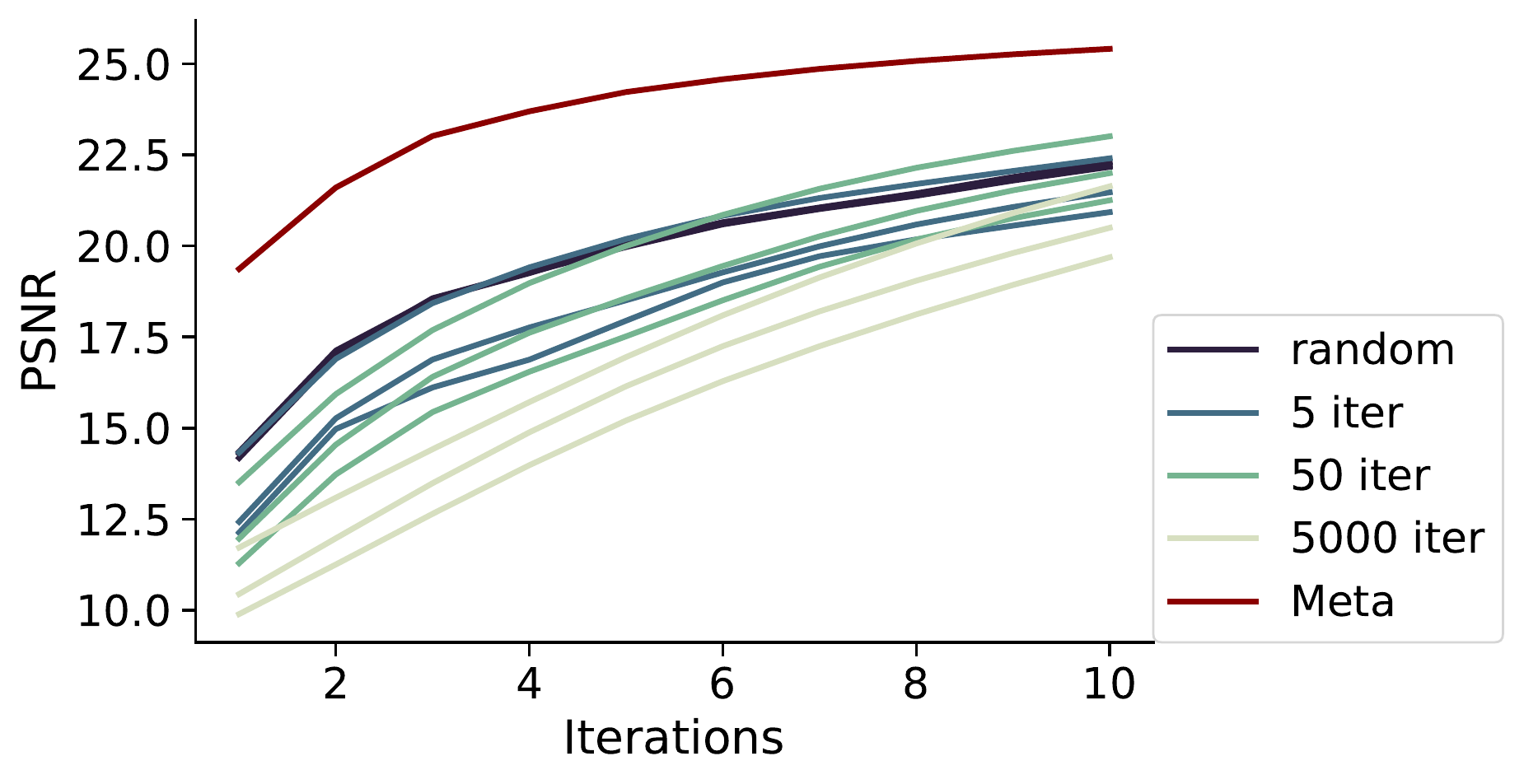}
        \caption{Test}
    \end{subfigure}
    \begin{subfigure}{\subfigsizeperf}
        \centering
        \includegraphics[width=\subfigsizeperf]{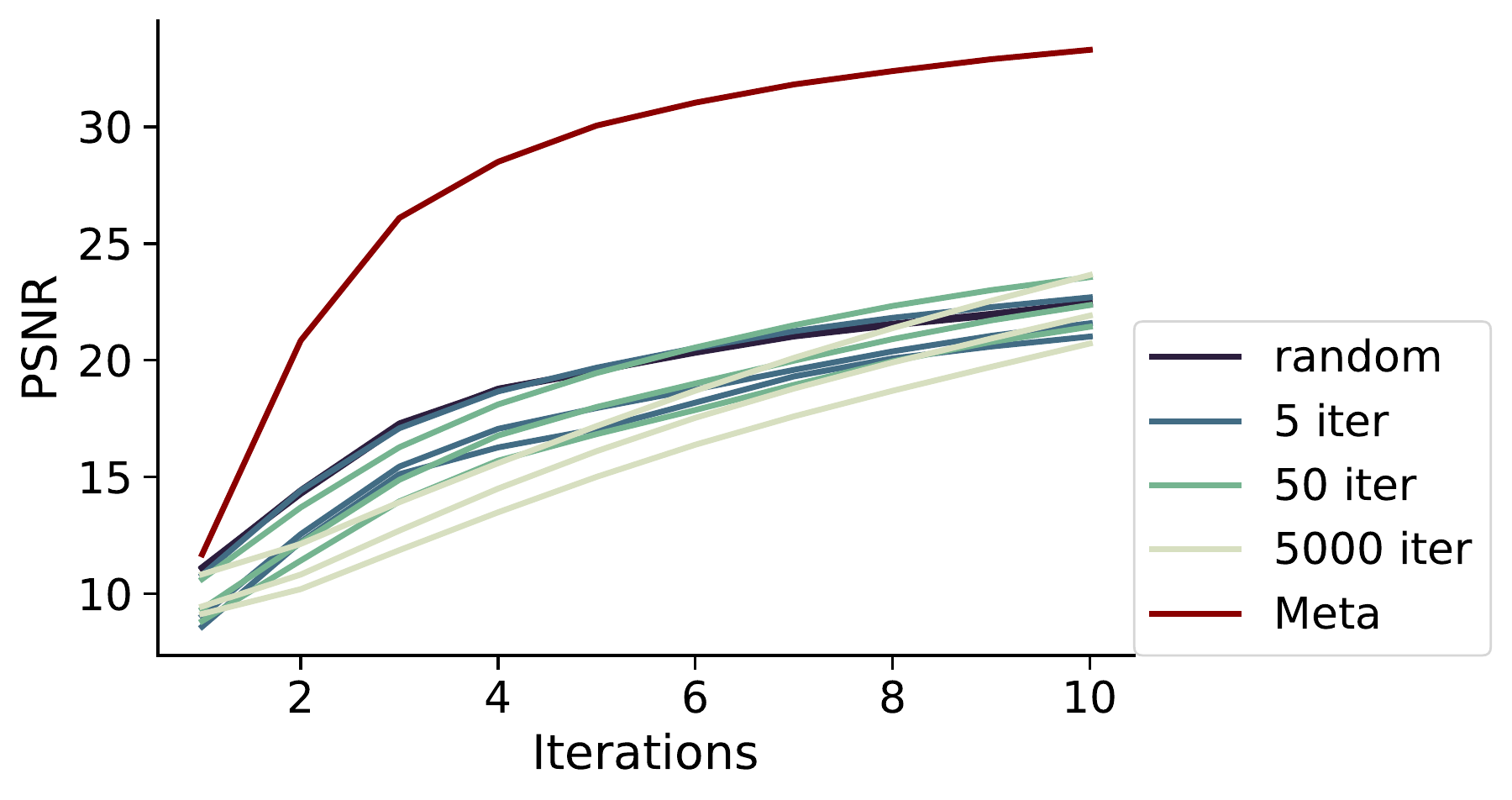}
        \caption{Training}
    \end{subfigure}
    \caption{The evolution of the reconstruction performance (PSNR) of a SIREN, where the parameters of the network are (i) initialized randomly, (ii) pretrained on a single image for different number of training iterations, and (iii) meta-learned. Note that three different realizations of the networks with pretrained weights on a single image are provided for each different number of training iterations.}
    \label{fig:train_single_task}
\end{figure}

We can understand this phenomenon using the signal dictionary analogy, which is summarized by \cref{fig:ntk_single_task}. As we can see, the more we pretrain the networks to reconstruct a specific training image, the more the energy profile of the validation set shifts to the smallest eigenvalues of the pretrained kernel, i.e, the learned dictionary has a worse compression performance than the randomly initialized NTK. As a result, as illustrated in \cref{fig:train_single_task}, we see that the more pretrained networks take longer to converge when fine-tuned, and reach worse generalization performances.

Inspecting the eigenfunctions of these different networks, and interpreting them as dictionary atoms, gives us one last piece of intuition to understand the dynamics of meta-learning. Indeed, as we can see in \cref{fig:eigvec_singletask} the eigenfunctions of the standardly trained networks do really look like the target signals, having very specific crisp edges and textures, which might not directly appear on other images. That is, the dictionaries of these networks are overfitted to their specific pretraining task. On the other hand, meta-learning can leverage much more data to construct a dictionary whose atoms can be easily composed to create face images. We believe that understanding the dynamics of this process will be a very fruitful avenue for future research.

\section{Performance on NTK eigenfunctions}
In this section we empirically demonstrate that it is easier for a network to learn the NTK eigenfunctions corresponding to larger eigenvalues. \Cref{fig:eigenfunctions} shows the PSNR reached after training the network for 2000 iterations with the target set to the NTK eigenfunction at initialization with the corresponding index. Even though the meta-learned weights are used in this experiment, this phenomenon applies to other networks and weights as well and is previously discussed in \cite{ortiz2021can}.

\begin{figure}[ht!]
    \centering\includegraphics[width=0.6\linewidth]{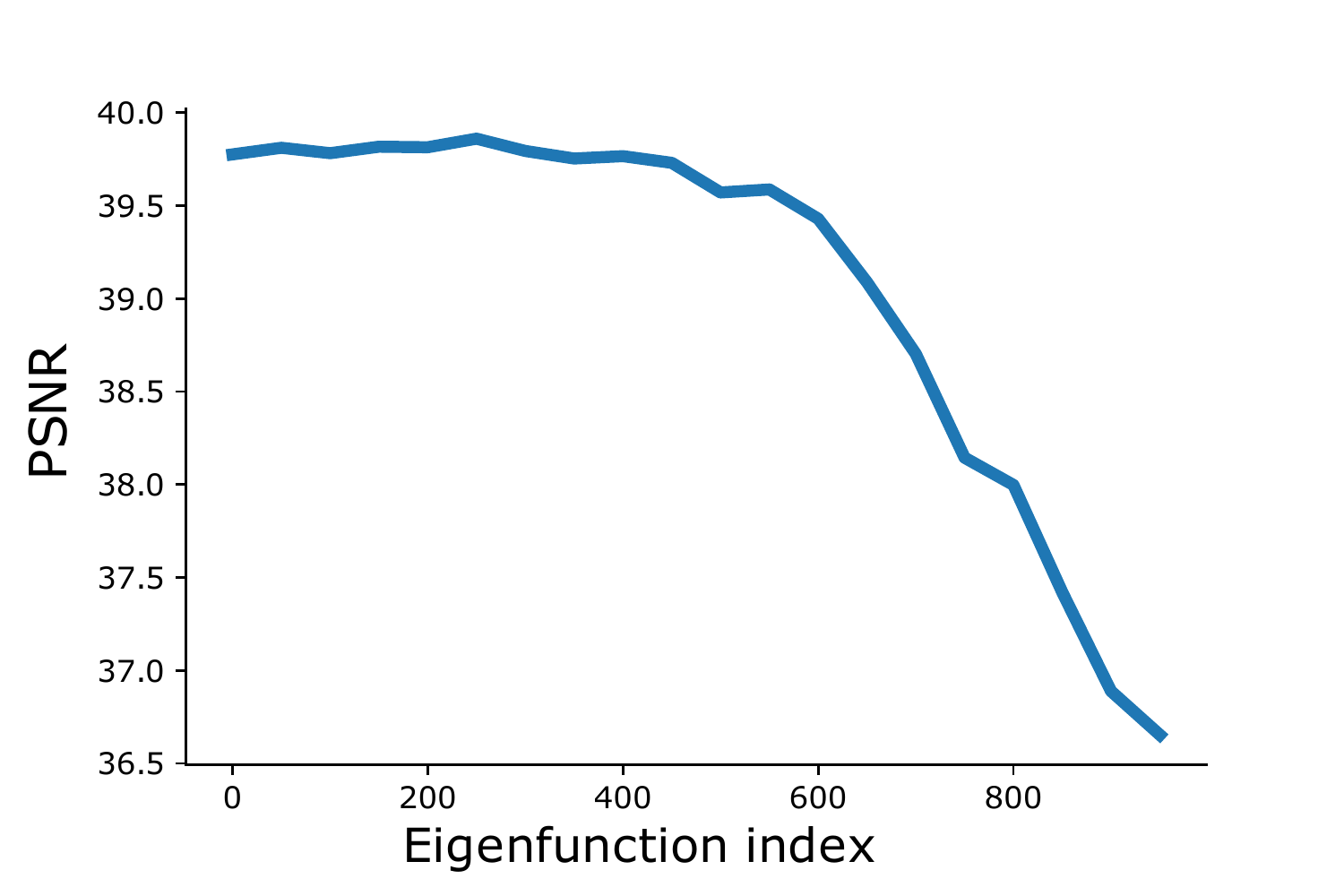}
    \caption{PSNR performance of a SIREN trained on increasing eigenfunctions of its NTK at initialization.}
    \label{fig:eigenfunctions}
\end{figure}

\def \subfigsizeone{1.4cm}
\def \textboxsizeone{1.9cm}

\begin{figure*}[t!]
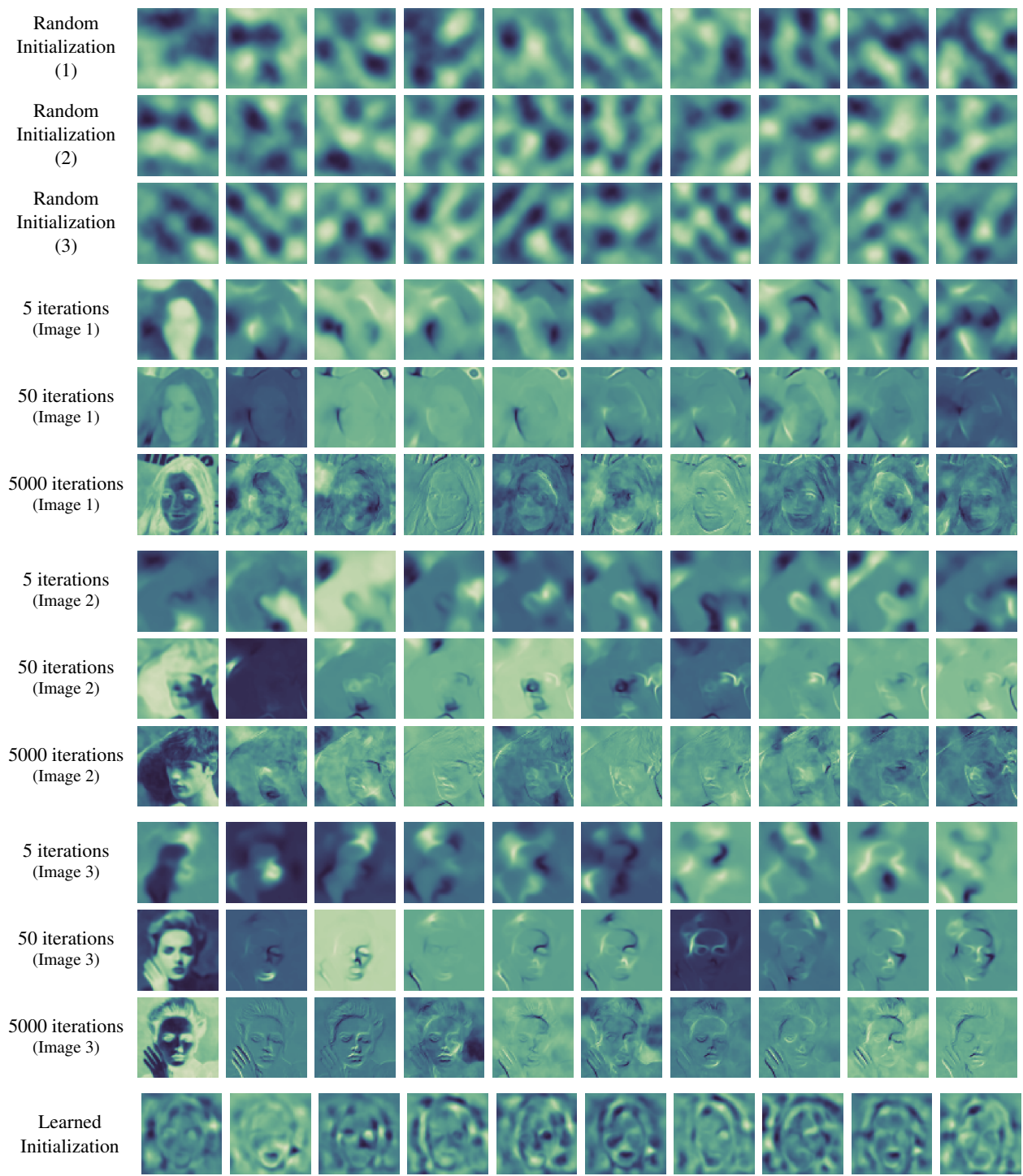

    \centering
    \parbox{\textboxsizeone}{\centering \small{Random Initialization \\(1)}}
    \foreach \i in {0,...,9} {
        \begin{subfigure}{\subfigsizeone}\includegraphics[width=\linewidth]{supp_eig/r1eigvec\i}
        \end{subfigure}
        \hspace{-6pt}
    }\\
    \parbox{\textboxsizeone}{\centering \small{Random Initialization \\(2)}}
    \foreach \i in {0,...,9} {
        \begin{subfigure}{\subfigsizeone}\includegraphics[width=\linewidth]{supp_eig/r2eigvec\i}
        \end{subfigure}
        \hspace{-6pt}
    }\\    
    \parbox{\textboxsizeone}{\centering \small{Random Initialization \\(3)}}
    \foreach \i in {0,...,9} {
        \begin{subfigure}{\subfigsizeone}\includegraphics[width=\linewidth]{supp_eig/r3eigvec\i}
        \end{subfigure}
        \hspace{-6pt}
    }\\    
    \vspace{4pt}
    \parbox{\textboxsizeone}{\centering \small{5 iterations} \\ \footnotesize{(Image 1)}}
    \foreach \i in {0,...,9} {
        \begin{subfigure}{\subfigsizeone}\includegraphics[width=\linewidth]{supp_eig/im1_it5eigvec\i}
        \end{subfigure}
        \hspace{-6pt}
    }\\
    \parbox{\textboxsizeone}{\centering \small{50 iterations} \\ \footnotesize{(Image 1)}}
    \foreach \i in {0,...,9} {
        \begin{subfigure}{\subfigsizeone}\includegraphics[width=\linewidth]{supp_eig/im1_it50eigvec\i}
        \end{subfigure}
        \hspace{-6pt}
    }\\    
    \parbox{\textboxsizeone}{\centering \small{5000 iterations} \\ \footnotesize{(Image 1)}}
    \foreach \i in {0,...,9} {
        \begin{subfigure}{\subfigsizeone}\includegraphics[width=\linewidth]{supp_eig/im1_it5000eigvec\i}
        \end{subfigure}
        \hspace{-6pt}
    }\\    
    \vspace{4pt}
    \parbox{\textboxsizeone}{\centering \small{5 iterations} \\ \footnotesize{(Image 2)}}
    \foreach \i in {0,...,9} {
        \begin{subfigure}{\subfigsizeone}\includegraphics[width=\linewidth]{supp_eig/im2_it5eigvec\i}
        \end{subfigure}
        \hspace{-6pt}
    }\\
    \parbox{\textboxsizeone}{\centering \small{50 iterations} \\ \footnotesize{(Image 2)}}
    \foreach \i in {0,...,9} {
        \begin{subfigure}{\subfigsizeone}\includegraphics[width=\linewidth]{supp_eig/im2_it50eigvec\i}
        \end{subfigure}
        \hspace{-6pt}
    }\\    
    \parbox{\textboxsizeone}{\centering \small{5000 iterations} \\ \footnotesize{(Image 2)}}
    \foreach \i in {0,...,9} {
        \begin{subfigure}{\subfigsizeone}\includegraphics[width=\linewidth]{supp_eig/im2_it5000eigvec\i}
        \end{subfigure}
        \hspace{-6pt}
    }\\ 
    \vspace{4pt}
    \parbox{\textboxsizeone}{\centering \small{5 iterations} \\ \footnotesize{(Image 3)}}
    \foreach \i in {0,...,9} {
        \begin{subfigure}{\subfigsizeone}\includegraphics[width=\linewidth]{supp_eig/im3_it5eigvec\i}
        \end{subfigure}
        \hspace{-6pt}
    }\\
    \parbox{\textboxsizeone}{\centering \small{50 iterations} \\ \footnotesize{(Image 3)}}
    \foreach \i in {0,...,9} {
        \begin{subfigure}{\subfigsizeone}\includegraphics[width=\linewidth]{supp_eig/im3_it50eigvec\i}
        \end{subfigure}
        \hspace{-6pt}
    }\\    
    \parbox{\textboxsizeone}{\centering \small{5000 iterations} \\ \footnotesize{(Image 3)}}
    \foreach \i in {0,...,9} {
        \begin{subfigure}{\subfigsizeone}\includegraphics[width=\linewidth]{supp_eig/im3_it5000eigvec\i}
        \end{subfigure}
        \hspace{-6pt}
    }\\  
    \vspace{4pt}
    \parbox{\textboxsizeone}{\centering \small{Learned Initialization}}
    \foreach \i in {0,...,9} {
        \begin{subfigure}{\subfigsizeone}\includegraphics[width=\linewidth]{figures/meta_64_30_eigvec\i}
        \end{subfigure}
        \hspace{-6pt}
    }
\caption{First ten eigenfunctions corresponding to the largest eigenvalues of the empirical NTK of SIREN~\cite{sitzmann2019siren} at initialization for different cases. The first three rows represent different realizations of random initialization. Rows 4-5-6 illustrate the eigenfunctions of a SIREN after meta-learning on a randomly chosen (single) training image from the CelebA dataset~\cite{liu2015faceattributes} with corresponding number of meta-learning updates on learnable parameters \cite{tancik2020meta}. Rows 7-8-9 and 10-11-12 are the outcomes of the same experiment for different choice of the training image used for meta-learning. The bottom row depicts the eigenfunctions of the learned initialization when we use $5,000$ training images from the CelebA dataset for meta-learning.}
\label{fig:eigvec_singletask}
\end{figure*}

\end{document}